\newcolumntype{Y}{>{\centering\arraybackslash}X}
\algrenewcommand\algorithmicrequire{\textbf{Input:}}
\algrenewcommand\algorithmicensure{\textbf{Output:}}
\newcommand{\ourname}[0]{\texorpdfstring{\textsc{αNAS}}}
\newcommand{\lo}[0]{\texttt{o}}
\newcommand{\lx}[0]{\texttt{x}}
\newcommand{\lm}[0]{\texttt{m}}
\newcommand{\la}[0]{\texttt{a}}
\DeclareMathOperator*{\argmin}{\arg\min}
\begin{document}

\title{Neural Architecture Search using Property Guided Synthesis}

\author{Charles Jin}
\authornote{Work performed while at Google Research.}
\affiliation{
  \department{CSAIL}
  \institution{MIT}
  \city{Cambridge}
  \state{MA}
  \country{US}
}
\email{ccj@csail.mit.edu}

\author{Phitchaya Mangpo Phothilimthana}
\affiliation{
  \department{Google Research}
  \institution{Google}
  \city{Mountain View}
  \state{CA}
  \country{US}
}
\email{mangpo@google.com}

\author{Sudip Roy}
\authornotemark[1]
\affiliation{
  \institution{Cohere}
  \city{Palo Alto}
  \state{CA}
  \country{US}
}

\begin{abstract}
Neural architecture search (NAS) has become an increasingly important tool within the deep learning community in recent years, yielding many practical advancements in the design of deep neural network architectures. However, most existing approaches operate within highly structured design spaces, and hence (1) explore only a small fraction of the full search space of neural architectures while also (2) requiring significant manual effort from domain experts. In this work, we develop techniques that enable efficient NAS in a significantly larger design space. In particular, we propose to perform NAS in an abstract search space of program properties. 
Our key insights are as follows: (1) an abstract search space can be significantly smaller than the original search space, and (2) architectures with similar program properties should also have similar performance; thus, we can search more efficiently in the abstract search space.
To enable this approach, we also introduce a novel efficient synthesis procedure, which performs the role of concretizing a set of promising program properties into a satisfying neural architecture. We implement our approach, \ourname{}, within an evolutionary framework, where the mutations are guided by the program properties. Starting with a ResNet-34 model, \ourname{} produces a model with slightly improved accuracy on CIFAR-10 but 96\% fewer parameters. 
On ImageNet, \ourname{} is able to improve over Vision Transformer (30\% fewer FLOPS and parameters), ResNet-50 (23\% fewer FLOPS, 14\% fewer parameters), and EfficientNet (7\% fewer FLOPS and parameters) without any degradation in accuracy.
\end{abstract}

\begin{CCSXML}
<ccs2012>
   <concept>
       <concept_id>10011007.10011074.10011784</concept_id>
       <concept_desc>Software and its engineering~Search-based software engineering</concept_desc>
       <concept_significance>500</concept_significance>
       </concept>
   <concept>
       <concept_id>10010147.10010257.10010293.10010294</concept_id>
       <concept_desc>Computing methodologies~Neural networks</concept_desc>
       <concept_significance>300</concept_significance>
       </concept>
 </ccs2012>
\end{CCSXML}

\ccsdesc[500]{Software and its engineering~Search-based software engineering}
\ccsdesc[300]{Computing methodologies~Neural networks}


\keywords{Neural Architecture Search, Program Synthesis, Abstract Interpretation}

\maketitle

\section{Introduction}
\label{sec:intro}

Neural architecture search (NAS) is a class of techniques that aims to automate the process of designing neural network architectures \cite{zoph2017neural}. Specifically, rather than relying on fully hand-designed neural architectures, which is time-consuming and subject to human biases and failures, NAS proposes to automatically search for neural architectures within a search space. These approaches have yielded significant improvements in both raw accuracy and navigating the trade-off with other metrics of practical interest such as parameter count and latency.
 
Despite many recent successes, however, the general NAS problem presents many key technical challenges that have yet to be fully addressed. First, the search space of deep neural networks is prohibitively large. Even if the high-level structure of a network is fixed, one still needs to select between operation variants (e.g., various types of convolutions) and parameter settings (e.g., filter sizes, layer widths, layer depths). Second, good architectures are extremely sparse, i.e., a randomly constructed network is unlikely to perform well. Finally, evaluating candidate architectures is also computationally expensive, as evaluation generally involves training the proposed network to convergence on a target dataset from scratch.

To contend with these challenges, most existing approaches are specialized for highly structured search spaces. For instance, a common strategy is to first manually specify a fixed architecture, then use NAS to tune selected parameters of the operators (e.g., depths and filter sizes) \cite{efficientnet,efficientnet-v2,nasnet,liu2018hierarchical,progressive-nas,he2018amc,li2020gan}. It has been observed that such restricted search spaces also tend to bypass the second challenge, i.e., even randomly selected networks perform quite well \cite{li2020random,yu2019evaluating,bender2020can}. 

However, imposing such structure comes at a direct cost of the expressiveness of the search space, and generalizing prior approaches to unstructured search space is an open problem. Indeed, while existing approaches have delivered impressive results when used to tune existing architectures, to date, major architectural changes (AlexNet \cite{alexnet}, ResNet \cite{he2015deep}, Transformers \cite{transformer}, etc.) have still been driven by the insight of human researchers.

More recently, several new approaches to NAS have been proposed that avoid the need to define a highly structured search space by iteratively mutating a randomly selected subgraph inside a neural network architecture, using an evolutionary search mechanism \cite{real2019regularized,automl-zero,so2021primer}. 
Given the number of possible operations, each with different functional and performance characteristics,
directly generating a random subgraph is unlikely to perform well in place of the original subgraph. To counteract this, prior approaches limit the scope of their mutations to only small changes.
For instance, Primer \cite{so2021primer} uses mutations such as changing a single parameter in a random operation, or swapping two nearby operations. However, restricting the mutations to small changes requires many iterations to discover high-quality, novel architectures, and in many cases, the search may struggle to escape local minima.

\subsection{Our Approach}
\label{sec:overview}

In this work, we present a new evolutionary approach to NAS. Prior methods evolve new architectures by applying small stochastic mutations directly on the computation graphs of deep neural networks. Each such mutation produces a new architecture, which is evaluated for accuracy, then inserted into the population as a potential starting point for future mutations.

In contrast, our approach is able to progress through the search space significantly faster by making high-quality, substantial changes to the architecture in each mutation step. To achieve this, we propose to perform the mutations in an \emph{abstract space of program properties}. We identify the following objectives that guide the design of our program properties:
\begin{enumerate}
    \item By mapping multiple concrete architectures to the same set of program properties, the abstract search space can be significantly smaller in size than the original search space.
    \item Architectures that have similar program properties should also have similar performance. This enables search algorithms which rely on an assumption of locality in the search space (e.g, hill-climbing or evolutionary algorithms) to be applied to the unstructured search space.
\end{enumerate}

Our approach, named \ourname{}, begins by selecting a random subgraph of the computation graph for mutation. Then, we infer a set of program properties for the selected subgraph, and apply stochastic mutations on the properties of the subgraph. Finally, we synthesize a new subgraph that satisfies the mutated properties, and insert the new subgraph in place of the old subgraph to produce a new architecture. 

\begin{figure}[t]
    \centering
    \includegraphics[width=.8\linewidth,trim={2.5cm 3cm 2cm 0cm}]{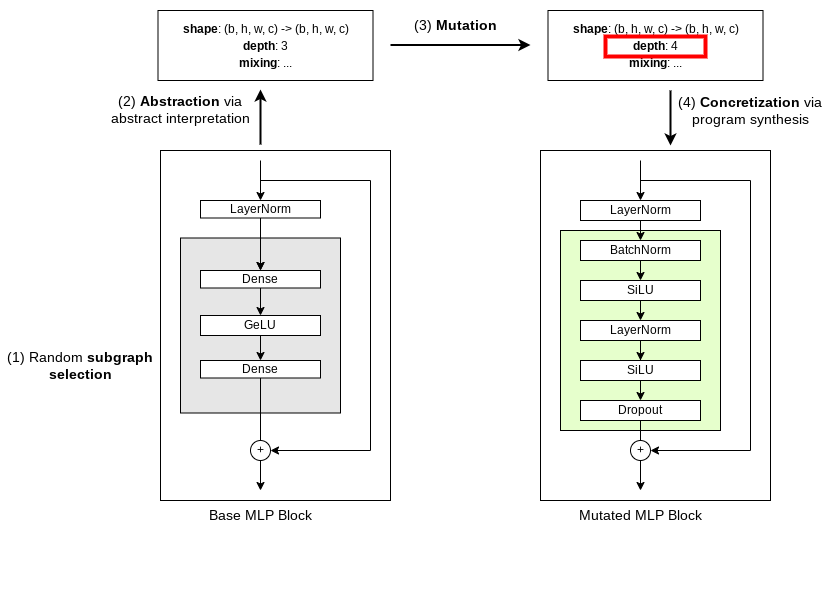}
    \caption{Our approach proposes a new neural network architecture by (1) randomly selecting a subgraph of an existing architecture, (2) inferring program properties of the subgraph, (3) mutating the program properties, and (4) synthesizing a new subgraph that satisfies the mutated program properties. We apply these four steps iteratively. This particular mutation is found during our ImageNet experiment when starting from a Vision Transformer architecture. A descendant of this model (consisting of 4 total mutations) ultimately decreases FLOPS and parameter count by 28\% and 30\% respectively, while slightly increasing accuracy.
    }
    \label{fig:overview}
\end{figure}

\cref{fig:overview} illustrates an example of a mutation synthesized by \ourname{} during the evolution of the Vision Transformer architecture \cite{dosovitskiy2021image} for image classification. To perform this mutation, we first select a random subgraph from the computation graph of the \emph{parent} architecture (step 1, \cref{fig:overview}; the selected subgraph is enclosed in a gray box). Next, we abstract the subgraph into a set of \emph{program properties} (step 2, \cref{fig:overview}). Since deep neural network graphs can be computationally expensive to execute, one crucial requirement is that our properties can be inferred \emph{statically}, that is, without executing the computations on real data. Of course, it is also important that the abstract properties capture semantically relevant information about the underlying computation. For instance, our most basic property is the \emph{depth} of the subgraph in terms of the number of alternating linear and nonlinear layers; here, the selected subgraph has a depth of 3.

We then apply stochastic mutations at the level of abstract program properties to obtain new program properties (step 3, \cref{fig:overview}; the mutated depth property is marked by a red box). The mutated properties are concretized back into a subgraph (step 4, \cref{fig:overview}). Finally, the new subgraph is inserted in place of the original subgraph (marked by the green box), yielding a new, mutated architecture (called a \emph{child}), which is ready to be evaluated and added to the population for the next round of evolution.
Note that our search space contains over 1402 possible operations for each node. To satisfy the new depth property of 4, the synthesized subgraph must contain at least 4 operations, yielding \emph{over 3.8 billion legal subgraphs}. In order to perform the concretization efficiently, in \cref{sec:synthesis} we propose a novel program synthesis algorithm. Using our techniques, synthesizing the subgraph in \cref{fig:overview} (with 5 nodes) took only 88 seconds---a fraction of the time needed to evaluate the resulting architecture (which involves training the model on a dataset). 

The mutation displayed in \cref{fig:overview} is ultimately a part of the 278th individual (evolved using 4 total mutations) that improves the baseline architecture by decreasing FLOPS by 28\% and parameter count by 30\%, while slightly increasing accuracy. Note that this single mutation is equivalent to 8 steps in the concrete space using previous techniques like Primer.
Hence, the primary benefit of our approach is that a single mutation in the abstract space can result in a new architecture that would otherwise require many individual concrete mutations to achieve. Intuitively, by requiring the synthesized subgraph to satisfy similar properties to the original subgraph, our search is biased toward high quality replacements. 
By enabling larger mutations, our method is able to explore new architectures far more quickly, and requires substantially fewer iterations of evolutionary search compare to prior approaches; in our experiments, we only require evolving 800 individuals to produce a population of novel, high quality architectures for image classification on the ImageNet dataset, whereas Primer evolves nearly 25,000 individuals on the task of language modeling.

\subsection{Our Contributions}

Our first contribution is a set of program properties defined over arbitrary neural architectures, which constitute the abstract search space in which the search is performed. In particular, we introduce the \emph{shape}, \emph{depth}, and \emph{mixing} properties for neural networks in \cref{sec:properties}. We also describe efficient algorithms for inferring these program properties; crucially, we infer these properties statically, which enables the inference to occur exclusively on a CPU host, decoupling the costly evaluation process from the rest of the search procedure.

Our second contribution is an efficient synthesis procedure, which accepts a set of program properties and returns a satisfying neural architecture. 
In general, inverting an abstraction function is nontrivial; in the worst case, the only technique which is guaranteed to succeed is a brute-force search that enumerates all possible architectures and returns the first satisfying architecture. However, by carefully crafting our synthesis procedure to leverage a notion of distance to the program properties, our procedure is able to synthesize a satisfying architecture exponentially faster than the na\"ive enumerative strategy.

To empirically validate our approach, we implement our techniques within a basic evolutionary search and evaluate performance on image classification tasks.\footnote{Code is available at \url{https://github.com/google-research/google-research/tree/master/abstract_nas}.} Starting with a ResNet-34 model on the CIFAR-10 dataset, \ourname{} produces a model with slightly improved accuracy but 96\% fewer parameters. 
On the ImageNet dataset, \ourname{} is able to improve over Vision Transformer (30\% fewer FLOPS and parameters), ResNet-50 (23\% fewer FLOPS, 14\% fewer parameters), and EfficientNet (7\% fewer FLOPS and parameters) without any degradation in accuracy. When compared against the current state-of-the-art search mechanisms from Primer \cite{so2021primer} and AutoML-Zero \cite{automl-zero}, \ourname{} discovers significantly more Pareto optimal models.

\section{Problem Formulation}

This section begins with a brief introduction of deep neural network architectures, then describes how we formulate the problem of searching for new neural architectures as a program synthesis task by treating deep neural networks as programs. The remaining subsections formalize the setting for our synthesis algorithm.

\subsection{Deep Neural Network Architectures}

\begin{figure}[t]
    \centering
    \includegraphics[width=.8\linewidth]{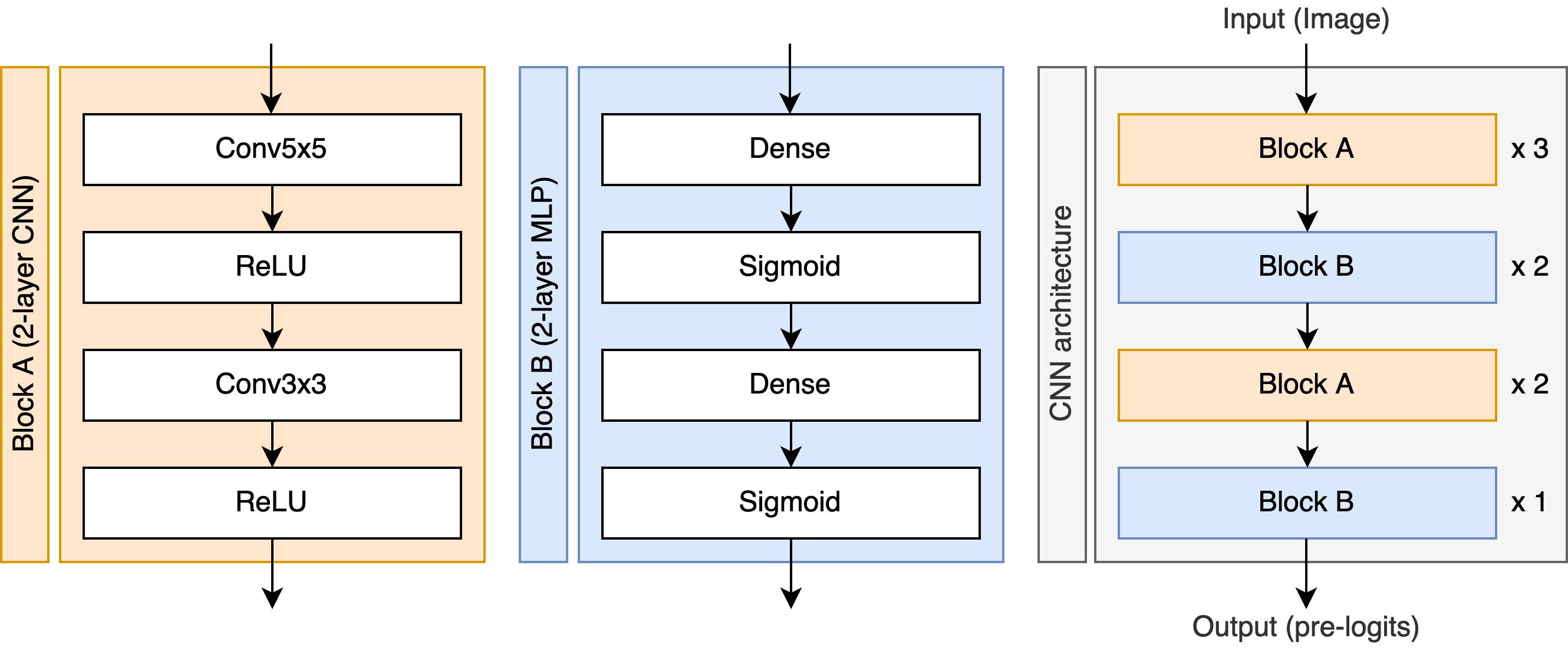}
    \caption{A toy example of a convolutional neural network architecture for image classification. The final architecture (right) is composed of two different types of blocks (left, middle), with repetitions annotated on the far right. Not shown is the classification head, which projects the ``Output (pre-logits)'' through a fully-connected layer to the number of classes followed by a softmax layer, yielding the final class probabilities.}
    \label{fig:cnn_arch}
\end{figure}

Deep neural networks (DNNs) are a family of models composed of layers of artificial neurons. Here we describe the most basic architecture, a simple feedforward multilayer perceptron (MLP). The input to the first layer is often some sensory data, such as an image; for subsequent layers, the inputs are the outputs of the neurons in the previous layer. Each neuron first computes a weighted combination of its inputs, where the weights are learnable parameters. The result is then passed through an element-wise nonlinearity, called an activation function; this becomes the output of the neuron, to be consumed by the next layer. The output of the final layer of neurons is the output of the neural network. Such an architecture is ``deep'' when it contains many layers alternating linear functions with nonlinear activations. Each layer is differentiable, so the entire network is trainable using gradient descent via the back-propagation algorithm. Depending on the problem domain, some common variations include selecting different nonlinearities, constraining the inputs of a neuron to a subset of the previous layer (e.g., convolutional neural networks, which use convolutions), or supporting additional types of operations (e.g.,  normalization layers).

\paragraph{Representing DNNs}
Since each neuron in a layer has the same functional form, we can concisely represent each layer of an MLP as a sequence of operations (e.g., a dense linear layer followed by a sigmoid activation); the complete architecture is represented by the full sequence of operations over every layer. Additionally, modern deep neural network architectures are often composed of repeated blocks, where each block consists of a fixed sequence of operations. These blocks are then stacked in sequence to produce the final architecture. \cref{fig:cnn_arch} displays a toy example of a convolutional neural network architecture, composed of several repeated blocks. More generally, we can represent more complex (i.e., nonfeedforward) architectures as a computation graph, where nodes are operations and the directed edges represent dataflow relationships (cf. \cref{fig:vit_mutation}).

\paragraph{From Neural Architecture Search to Program Synthesis}
NAS is the problem of automatically searching for novel neural architectures, given some performance objectives. For instance, we may want to search for CNN architectures that maximize accuracy on the CIFAR-10 dataset. By representing neural architectures as computation graphs, this paper casts the problem of neural architecture search as a program synthesis task. More specifically, we treat computation graphs as programs, and perform the search by synthesizing new programs. The remainder of this section presents a formal setting for synthesis based on \emph{program properties}. In \cref{sec:properties}, we propose program properties that can be leveraged to guide the synthesis procedure toward architectures with better performance, thereby enabling us to apply program synthesis techniques to NAS.

\subsection{Program Properties} Program properties form the basis of our synthesis problem. We denote the arbitrary, fixed space of programs as $\mathcal{P}$.

\begin{definition}
\label{def:property}
A \textbf{program property} $\Pi = (\mathcal{V}, \le, \alpha)$ over the space of programs $\mathcal{P}$ consists of a set $\mathcal{V}$ of \emph{program property values}, with a partial order relation $\le$ over $\mathcal{V}$, and an \emph{abstraction function} $\alpha : \mathcal{P} \mapsto \mathcal{V}$ that maps programs to property values.
\end{definition}

\begin{definition}
A program $p \in \mathcal{P}$ \textbf{satisfies} a program property value $v \in \mathcal{V}$, denoted $p \models v$, if $\alpha(p) \ge v$.
\end{definition}

For instance, the previous section introduces a depth property for deep neural networks. This property takes values $\mathcal{V}$ in the nonnegative integers, where $\le$ is the usual relation on integers, and $\alpha$ maps a subgraph to its depth. For a feedforward neural network, the depth is simply the number of alternating linear and nonlinear operators in the network, and a subgraph satisfies the depth program property value $d$ if its depth is at least $d$.

We can immediately generalize this definition to a setting where we are given a set of program properties $\mathbb{P} = \{\Pi_i = (\mathcal{V}_i, \le_i, \alpha_i)\}_{i = 1}^N$:

\begin{definition}
A program $p \in \mathcal{P}$ \textbf{satisfies} a set of program property values $S = \{v_i \, | \, v_i \in \mathcal{V}_i\}_{i=1}^N$, denoted $p \models S$, if for every $v_i \in S$, we have that $p \models v_i$.
\end{definition}

We usually drop the subscript $i$ of the partial ordering $\le_i$ when it is clear from context.

\subsection{Program Synthesis}

Program transformations are the basic unit of our synthesis algorithm. We formalize the task of program synthesis as the problem of producing a sequence of transformations that transforms some initial program $p_0$ into a program $p^*$ satisfying the desired program properties $S$.

\begin{definition}
A \textbf{program transformation} $t \in \mathcal{T} : \mathcal{P} \mapsto \mathcal{P}$ is a map from programs to programs.
\end{definition}

\begin{definition}
The synthesis task $(p_0, v)$ is \textbf{feasible} if there exists some sequence $t_1, \ldots, t_n \in \mathcal{T}$ such that $(t_n \circ \cdots \circ t_1) (p_0) \models S$.
\end{definition}

In this work, we will always take the initial program $p_0$ to be the identity program: $\text{id}(x) = x$ for all inputs $x$. We will also assume that $\mathcal{T}$ is freely generated (via composition) by a known, finite set $E$ of \textbf{primitive transformations}, i.e., $\mathcal{T} = E^*$. For synthesizing subgraphs of deep neural networks, the primitive transformations $E$ consist of appending one of the basic operations to the partial subgraph. Given any set of program properties values $S$, our synthesizer incrementally appends operations \emph{without backtracking}, until the resulting program satisfies the given properties. The key to ensuring progress is defining an appropriate notion of \emph{distance} from programs to the desired property, such that we can always decrease the distance by selecting a transformation from a small fixed subset $T \subset \mathcal{T}$. Hence, each step of the synthesis algorithm simply enumerates over the transformations in $T$, and selects a transformation that is guaranteed to decrease the distance. This process is repeated until the subgraph achieves a distance of 0, satisfying the target property.

For instance, to satisfy the depth property in \cref{fig:overview}, we can simply alternate between appending linear and nonlinear operations. Clearly, this strategy can always increase the depth of the subgraph until the property is satisfied. Our main insight is that this seemingly simple procedure also applies to program properties with more complex semantics. As each step of synthesis progressively brings the program closer to satisfying the property, we call our technique \textbf{progressive program synthesis}. Crucially, the runtime of this algorithm is \emph{linear} in the length of the synthesized subgraph, whereas the number of possible subgraphs is \emph{exponential} in the length.

\subsection{Property Inference}

$\alpha$NAS performs property inference ($\alpha$) during the abstraction and synthesis phases (steps 2 and 4 in \cref{fig:overview}, respectively). During abstraction, we infer the properties of the selected subgraph only once. During synthesis, however, our algorithm relies on inferring the properties of each partially synthesized program after applying some transformation $t$.

To infer program properties efficiently, we extend the semantics of $\alpha$ from programs $p \in \mathcal{P}$ to transformations $t \in \mathcal{T}$. The idea is that if we already know $\alpha(p) = v$, instead of computing $\alpha(t(p))$ from scratch, we can leverage the partial result $v$ and compute the effects of $t$ on $v$ abstractly.
We formalize this intuition using the theory of \emph{abstract interpretation} for transformations. In particular, when computing the effects of a program transformation $t \in \mathcal{T}$ on a program property value $v$, we need to preserve the soundness of the $\models$ relationship in the following sense:

\begin{definition}
\label{def:abstract_interpretation}
Given a program property $\Pi = (\mathcal{V}, \le, \alpha)$, an \textbf{abstract interpretation} of a program transformation $t \in \mathcal{T}$ is a map $t^\alpha: \mathcal{V} \mapsto \mathcal{V}$ such that $t^\alpha(u) = v$ implies $\forall p \models u$, $t(p) \models v$.
\end{definition}

Henceforth, we use the superscript $\alpha$ to identify transformations $t$ over concrete subgraphs with an abstract interpretation $t^\alpha$ over program properties. \cref{fig:abstract_interpretation} illustrates the key soundness property of abstract interpretation as a commutative diagram.

\begin{figure}[t]
    \centering
    \includegraphics[width=.45\linewidth,trim={0cm 6cm 13.5cm 0cm},clip]{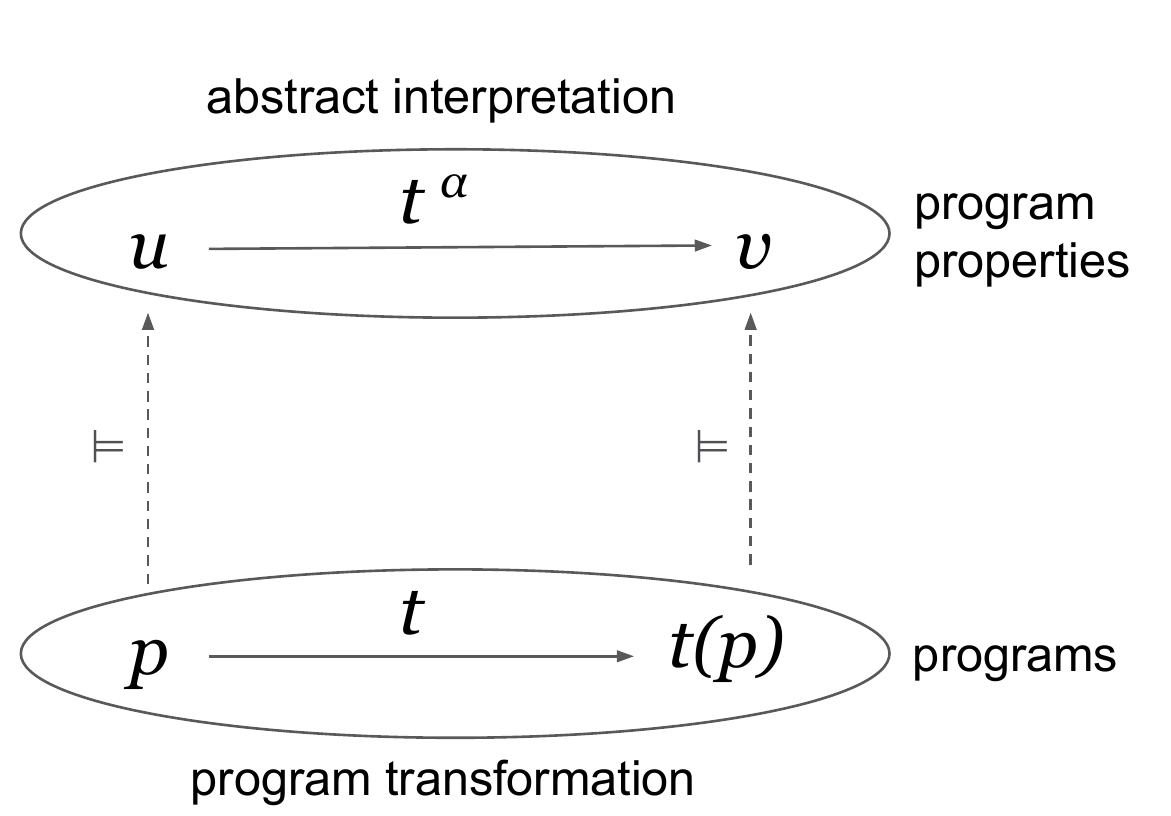}
    \caption{The key soundness property of abstract interpretation: $t^\alpha(u) = v$ implies that $\forall p \models u$, $t(p) \models v$. Superscript $\alpha$ denotes the abstract interpretation of transformation $t$ with respect to a program property $\Pi$.} 
    \label{fig:abstract_interpretation}
\end{figure}

\section{Program Properties for Neural Architecture Search}
\label{sec:properties}

In this section, we present the set of program properties used by \ourname{} to guide the neural architecture search. Before identifying the specific properties, we first elaborate on our main considerations in defining such properties for NAS. 

As outlined in \cref{sec:overview}, a key step in evolutionary NAS is how architectures are mutated over the course of the search. 
Thus, it is essential to generate mutations that, on expectation, yield a better child architectures. 
Since our approach involves inferring the properties of the original subgraph, mutating these properties, then synthesizing a new subgraph from the mutated properties, choosing the right set of properties is critical. 

Specifically, our objectives in designing the program properties are threefold:
\begin{enumerate}
    \item The abstract space is smaller because multiple concrete architectures are mapped to a single set of properties. However, the abstraction function should also be well-aligned with our search objective, i.e., architectures that are mapped to the same point in the abstract space should have similar performance characteristics.
    \item Since each point in abstract space represents multiple concrete architectures, small steps in the abstract space represent large steps in the original concrete space; but for the abstraction to be useful, we want small changes in the abstract space to preserve at least some of the performance characteristics in the concrete space---otherwise, there is no structure for the search mechanism to leverage.
    \item Since the substitution step involves both inferring program properties for a candidate subgraph, and synthesizing an alternate subgraph based on mutated properties, both of these steps must be computationally efficient.
\end{enumerate}

In particular, the first two points are desiderata that relate the program properties to the semantics of deep neural network architectures. To this end, we propose the following characterization of what constitutes a deep neural network:

\begin{definition}
\label{def:mlp}
A \textbf{deep neural network} is a function containing alternating layers of linear and nonlinear functions.
\end{definition}

The linear functions include layers such as convolutional, dense, and fully-connected layers, while the nonlinear functions consist of activation functions such as ReLU, sigmoid or tanh. This description sufficiently captures the simplest type of deep neural network: the feedforward MLP. It follows that any abstract search space for DNNs should be expressive enough to capture this key characteristic. For instance, our properties should be able rule out the following degenerate architectures, which are valid programs in our search space, but do not satisfy the basic definition above:
\begin{enumerate}
    \item a sequence of activation functions, without any linear layers in between;
    \item a sequence of linear layers, without any activations in between;
    \item an alternating sequence of pooling layers and activation functions.
\end{enumerate}

Note that the last example technically consists of alternating layers of linear and nonlinear functions, but lacks expressivity from a functional perspective, as it has no trainable parameters and therefore can be used to represent only a very restricted class of functions.
This motivates our approach, which is to capture this characterization from the perspective of mathematical functions. In the remainder of this section, we describe formally the subgraph properties we propose in this work, as well as the associated efficient inference algorithms based on abstract interpretation. For clarity of presentation, we assume that the subgraphs have only a single input and output; to generalize our definitions to subgraphs with multiple inputs or multiple outputs, we simply compute the property for every input-output pair.

\subsection{Mixing Property}

The mixing property captures key information about the expressivity of a subgraph \emph{as a linear operator}. Intuitively, if a subgraph consists purely of linear operators (and hence is linear as a whole), then each element of the output is a linear combination of elements in the input; the mixing property summarizes information about which elements of the input have nonzero coefficients in the linear combination.

\paragraph{An example.} Consider a \mbox{3~$\times$~3} convolutional layer:
\begin{align}
    O[ b, c_o, w_o, h_o] = \sum_{c_i=0}^{C_i}\sum_{k_h=-1}^{1}\sum_{k_w=-1}^{1} I[b, c_i, w_o + k_w, h_o + k_h] * K[c_o, c_i, k_w+1, k_h+1]
\end{align}
and a dense layer:
\begin{align}
    O[b, c_o, w_o, h_o] = \sum_{c_i=0}^{C_i} I[b, c_i, w_o, h_o] * W[c_o, c_i]
\end{align}
where $O$ is the output tensor, $I$ is the input tensor, $K$ is the kernel of the convolution, and $W$ is the weight matrix of the dense layer.

Intuitively, we want to capture the fact that the \mbox{3~$\times$~3} convolutional layer is strictly more expressive than a dense layer, since the dense layer is just a special case of the \mbox{3~$\times$~3} convolution where only the middle element of the convolutional filter is nonzero, i.e., any dense layer can be written as a convolutional layer (but not vice-versa). Another way to express this observation is that every output element of a convolutional layer depends on a full slice of the input channel dimension, as well as \emph{many} elements of the input spatial dimensions; however every output element of a dense layer depends on a full slice of the input channel dimension, but only a \emph{single} element of the input spatial dimension. Similarly, fully connected layers are strictly more expressive than both convolutional and dense layers.
We formalize these intuitions in the mixing property, which is composed of two subproperties.

\subsubsection{Pairing subproperty}

The pairing subproperty captures coarse-grained information about which \emph{dimensions} of the input tensor contribute to which \emph{dimensions} of the output tensor. For instance, consider a convolutional layer applied to an input with two spatial dimensions and one channel dimension (e.g., a standard 2D image with three color channels). Then the channel dimension of the output depends on the channel dimension of the input; however, the channel dimension of the output does not depend on the spatial dimensions of the input, because the convolutional kernel has a limited receptive field in the spatial dimensions.

\begin{figure}[t]
     \centering
     \begin{subfigure}[b]{0.5\textwidth}
         \centering
         \includegraphics[width=\textwidth]{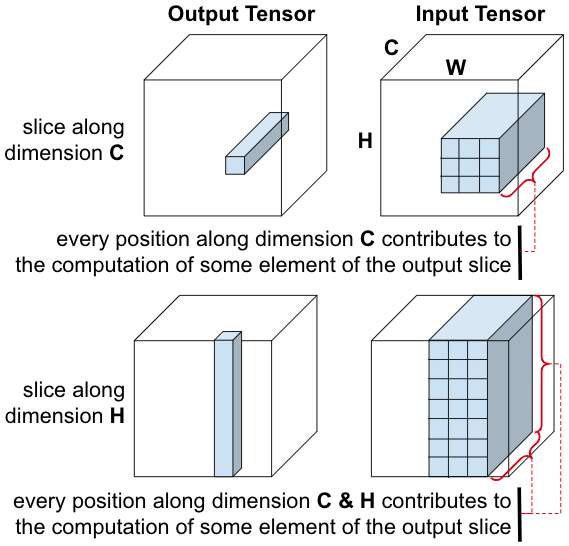}
         \caption{Pairing dimensions. Top: the input dimension C is paired with the output dimension C. Bottom: the input dimensions C and H dimensions get paired with the output dimension H.}
         \label{fig:mixing}
     \end{subfigure}
     \hfill
     \begin{subfigure}[b]{0.45\textwidth}
         \centering
         \includegraphics[width=\textwidth]{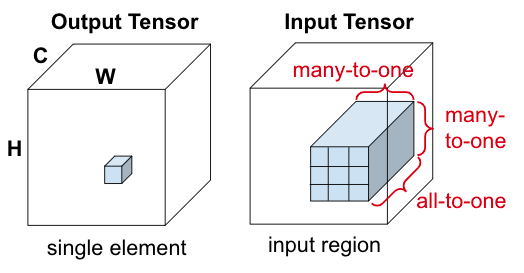}
         \caption{Locality. The input dimension C has all-to-one locality with the paired output dimensions. The input dimensions H and W have many-to-one localities with the paired output dimensions.}
         \vspace{4em}
         \label{fig:locality}
     \end{subfigure}
     \caption{Determining the mixing property of a \mbox{3~$\times$~3} convolutional layer when applied to a 2D image (H x W) with a channel (C) dimension.}
\end{figure}

More explicitly, given an input tensor and an output tensor, we say the operation pairs a dimension of the input and a dimension of the output if, given a fixed slice of the output along the output dimension, there is at least one element for each position in the input dimension which contributes to the computation of the output slice.

The formal definition is as follows. Let $f$ be a tensor-valued function which maps an input tensor $A$ of size $(a_1, \ldots, a_n)$ to an output tensor $B$ of size $(b_1, \ldots, b_m)$. In other words, for any element $B[o_1, \ldots, o_m]$, where $1 \le o_k \le b_k$ for all $1 \le k \le m$, we have a real-valued function $f_{o_1, \ldots, o_m}$ over $n$-dimensional tensor inputs $A$ such that
\begin{align}
    B[o_1, \ldots, o_m] = f_{o_1, \ldots, o_m}(A)
\end{align}

\begin{definition}
An element $A[i_1, \ldots, i_n]$ (where $1 \le i_k \le a_k$ for all $1 \le k \le n$) of the input tensor \textbf{contributes to} the computation of the element $B[o_1, \ldots, o_m]$ of the output tensor if
\begin{align}
    \dfrac{\partial f_{o_1, \ldots, o_m}}{\partial A[i_1, \ldots, i_n]}(A) \ne 0
\end{align}
i.e., the partial derivative \emph{as a function} of $A$ is not identically zero.
\end{definition}

Note that if $f$ is a linear function, then the formula for $B[o_1, \ldots, o_m]$ can be expressed as a linear combination over the elements of $A$, and the partial derivative of $f_{o_1, \ldots, o_m}$ with respect to $A[i_1, \ldots, i_n]$ is just the coefficient of $A[i_1, \ldots, i_n]$ in the linear combination.

\begin{definition}
The \textbf{preimage} of the function $f$, denoted as $f^{-1}$, is a map from sets of output elements to sets of input elements, and is defined as follows. Let $S = \{B[o_{1,j}, \ldots, o_{m,j}]\}_{j \in J}$ be a set of output elements, indexed by the set $J$. If $S$ consists of a single element $B[o_1, \ldots, o_m]$, then $f^{-1}(S)$ is the set of all input elements that contribute to the computation of $B[o_1, \ldots, o_m]$. Otherwise,
\begin{align}
    f^{-1}(S) = \bigcup_{j \in J} f^{-1}(\{B[o_{1,j}, \ldots, o_{m,j}]\})
\end{align}
\end{definition}

\begin{definition}
Given a tensor $B$ of shape $(b_1, \ldots, b_m)$, a \textbf{slice} along the $k^{th}$ dimension, specified by $m-1$ indices $\{o_1, \ldots, o_{k-1}, o_{k+1}, \ldots, o_m\}$, is the set of elements
\begin{align}
    B[o_1, \ldots, o_{k-1}, \bullet, o_{k+1}, \ldots, o_m] = \bigcup_{o=1}^{b_k} B[o_1, \ldots, o_{k-1}, o, o_{k+1}, \ldots, o_m]
\end{align}
\end{definition}

\begin{definition}
The function $f$ \textbf{pairs} the $l^{th}$ dimension of the input tensor $A$ with the $k^{th}$ dimension of the output tensor $B$ if there exists a slice $S_B$ of the output tensor along the $k^{th}$ dimension such that the preimage $f^{-1}$ of $S_B$ contains one element in every position along the $l^{th}$ dimension of the input, i.e., for every $1 \le i \le a_l$, there exists $i_1, \ldots, i_{l-1}, i_{l+1}, \ldots, i_n$ such that $A[i_1, \ldots, i_{l-1}, i, i_{l+1}, \ldots, i_n] \in f^{-1}(S_B)$.
\end{definition}

\cref{fig:mixing} illustrates how the pairing subproperty of a convolutional layer is determined for a standard 2D image with an additional channel dimension. We can collect the pairing property over all pairs of input and output dimensions into a matrix in the manner of \cref{table:pairing_dim}, which summarizes the complete pairing subproperty of a convolutional layer for a batch of 2D images.

\begin{figure}[t]
     \centering
\begin{subfigure}[b]{0.48\textwidth}
         \centering
    \footnotesize
    \begin{tabularx}{.6\columnwidth}{c *{6}{Y}}
        \toprule
         & & \multicolumn{4}{c}{In}\\
        \cmidrule{3-6} \
        & & B & H & W  & C \\
        \midrule
        \multirow{4}{*}{Out} 
        & B & 1 & 0 & 0 & 1 \\
        & H & 0 & 1 & 0 & 1 \\
        & W & 0 & 0 & 1 & 1 \\
        & C & 0 & 0 & 0 & 1 \\
        \bottomrule
    \end{tabularx}
    \caption{Pairing. ``1'' indicates a pairing and ``0'' indicates no pairing. For instance, the output batch dimension is paired with the input channel dimension.}
    \label{table:pairing_dim}
\end{subfigure}
\hfill
\begin{subfigure}[b]{0.48\textwidth}
    \centering
    \footnotesize
    \begin{tabularx}{.6\columnwidth}{c *{6}{Y}}
        \toprule
         & & \multicolumn{4}{c}{In}\\
        \cmidrule{3-6} \
        & & B & H & W  & C \\
        \midrule
        \multirow{4}{*}{Out} 
        & B & \lo & \lx & \lx & \la \\
        & H & \lx & \lm & \lx & \la \\
        & W & \lx & \lx & \lm & \la \\
        & C & \lx & \lx & \lx & \la \\
        \bottomrule
    \end{tabularx}
    \caption{Locality + pairing. The entries are all-to-one (\la{}), many-to-one (\lm{}), and one-to-one (\lo{}) locality, or no pairing (\lx{}). For instance, the output H dimension depends on the input H dimension with \lm{} locality.}
    \label{table:pairing_locality}
\end{subfigure}
\caption{Matrices representing the mixing property of a \mbox{3~$\times$~3} convolutional layer with a batch dimension (B).}
\end{figure}

\subsubsection{Locality subproperty}

Consider again a dense layer versus a convolutional layer. The convolutional layer has the same pairing subproperty as the dense layer. The locality subproperty captures a more fine-grained notion of the contributing elements which distinguishes the two operation types. In particular, the pairing subproperty considers the preimage of \emph{slices} to capture the relationships \emph{between} input and output dimensions, whereas the locality subproperty uses the preimage of a single \emph{element} of the output to determine relationships \emph{within} input dimensions.

We distinguish between all-to-one, many-to-one, and one-to-one localities. For instance, a dense layer has a one-to-one locality within the spatial dimensions and an all-to-one locality within the channel dimensions, while a convolution has a many-to-one locality within the spatial dimensions and an all-to-one locality within the channel dimensions. We have the following formal definition:

\begin{definition}
The \textbf{locality} of a function $f$ with respect to the $k^{th}$ dimension of the input is:
\begin{enumerate}
    \item \textbf{all-to-one}, if there exists an element indexed by $\{o_1, \ldots, o_m\}$ of the output tensor $B$ whose preimage $S_A = f^{-1}(\{B[o_1, \ldots, o_m]\})$ contains one element in \textbf{every} position along the $k^{th}$ dimension of the input tensor, i.e., for every $1 \le i \le a_k$, there exists indices $i_1, \ldots, i_{k-1}, i_{k+1}, \ldots, i_n$ such that $A[i_1, \ldots, i_{k-1}, i, i_{k+1}, \ldots, i_n] \in S_A$;
    \item \textbf{many-to-one}, if $f$ is not all-to-one and there exists an output element whose preimage contains elements in \emph{more than one} positions along the $k^{th}$ dimension of the input tensor;
    \item \textbf{one-to-one}, if every element of the output tensor has a preimage that contains elements in \emph{exactly one} position along the $k^{th}$ dimension of the input tensor.
\end{enumerate}
\end{definition}

Clearly, any (non-constant) input dimension is exactly one of all-to-one (\la{}), many-to-one (\lm{}), or one-to-one (\lo{}). \cref{fig:locality} illustrates how the locality subproperty of a convolutional layer is determined. For a given operation, the locality subproperty is computed for each dimension of the input.

\subsubsection{Mixing property}

Similar to the pairing subproperty, we represent the full mixing property as a matrix. However, we replace the paired entries (``1'') with the locality (\lo{}, \lm{}, \la{}) of the corresponding input dimension; entries which are not paired (``0'') are replaced by the additional symbol \lx{}. \cref{table:pairing_locality} displays the full mixing property of a convolutional layer (with an additional batch dimension); a dense layer would have the same matrix, except the many-to-one entries in the spatial dimensions would be replaced by one-to-one entries. Notice that this representation does indeed capture both the pairing and locality subproperties; we chose this representation to simplify the abstract interpretation of the mixing property given by \cref{lemma:linear_abstract_semantics}.

We define the following partial order $\le$ on the mixing property: $P \le Q$ if and only if (1) $P$ and $Q$ have the same number of input and output dimensions and (2) $P \le Q$ element-wise on all input-output pairs, where \lx{} < \lo{} < \lm{} < \la{}. The partial order succinctly captures the desired relationship between dense, convolutional, and fully connected layers, where $<$ corresponds to ``less expressive''.

Note that the locality subproperty cannot differentiate between an identity operation and a transpose operation, and more generally, any operation that simply reshapes its input; for that we must use the pairing subproperty. Conversely, the pairing property does not differentiate between a dense layer and a convolutional layer, while the locality subproperty does. Hence, the two subproperties are complementary, and their combination in the mixing property gives a more complete view of the subgraph as a linear operator.

\paragraph{Concrete Inference}

We first describe how to infer the mixing property of an operation using concrete values. The key is that the ``contributes to'' relationship between input and output elements is defined using gradients; since our domain comes with built-in auto-differentiation, we can leverage the existing functionality to infer the mixing property. More explicitly, for the pairing subproperty, we compute the gradient of a slice of the output tensor with respect to the input tensor by summing along the slice of the output tensor. Any entries in the input tensor with nonzero gradients are said to contribute to the slice. Evaluating the locality subproperty is similar, except that we compute the gradient of the input with respect to a single element of the output.

We additionally exploit the regularity of the individual operations used to construct deep neural network graphs to simplify the inference procedure. Namely, the ``contributes to'' relationship for the primitive operations is equivalent for all slices and elements away from the boundaries; hence, we can efficiently compute the pairing and locality subproperties (or more precisely, a fairly tight lower bound for the subproperties) by evaluating them at only the \emph{center} slice and element, rather than taking the maximum over \emph{all} slices and elements, respectively.

\paragraph{Abstract Interpretation}
\label{para:mixing_abstract_interp}

We next describe an abstract interpretation of the mixing property. Let $\alpha_M$ denote the property inference function which maps concrete subgraphs to their mixing properties. Consider two programs $p$ and $q$. We will treat $q$ as the program transformation (appending $q$ to $p$) for which we want to derive an abstract interpretation, i.e., we would like to infer a lower bound for $\alpha_M(q \circ p)$ given $\alpha_M(p) = u$ without direct access to $p$.

Recall that a mixing property is represented by a matrix with elements \lx{}, \lo{}, \lm{}, \la{}, (cf. \cref{table:pairing_locality}). The abstract interpretation $q^\alpha_M$ can be defined as a matrix multiplication on $u$ and $\alpha_M(q)$.

\begin{lemma}
\label{lemma:linear_abstract_semantics}

$q^\alpha_M(u) := \alpha_M(q) \ \times \  u$ is an abstract interpretation of $q$, where $\times$ is a matrix multiplication with $+$ and $*$ defined over the elements $\{\lx{}, \lo{}, \lm{}, \la{}\}$ as: $y + z = \max(y, z)$;
and $y * z$ is given by the look-up table in \cref{fig:locality_multiply}.

\end{lemma}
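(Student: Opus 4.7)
The plan is to verify Definition \ref{def:abstract_interpretation} by establishing two facts and then combining them. First, a \emph{concrete composition inequality}: for any two composable subgraphs $p, q$,
\begin{align*}
\alpha_M(q \circ p) \;\ge\; \alpha_M(q) \,\times\, \alpha_M(p).
\end{align*}
Second, \emph{monotonicity of $\times$} in its right argument with respect to the partial order on mixing matrices. Given these two facts, the lemma follows in one line: if $p \models u$ then $\alpha_M(p) \ge u$, so
\begin{align*}
\alpha_M(q \circ p) \;\ge\; \alpha_M(q) \,\times\, \alpha_M(p) \;\ge\; \alpha_M(q) \,\times\, u \;=\; q_M^\alpha(u),
\end{align*}
which is exactly $q(p) \models q_M^\alpha(u)$, as required.

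Monotonicity is the easier of the two. Since $+$ is $\max$ on the totally ordered chain $\lx < \lo < \lm < \la$, it is trivially monotone; monotonicity of $*$ in its right argument is a direct inspection of the table in \cref{fig:locality_multiply}. Together these imply monotonicity of the matrix product in its right argument, since every entry of the product is a max-sum of products of entries.

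The bulk of the work is the composition inequality. Fix an $(k, l)$ entry, with $k$ indexing the output tensor $B$ of $q \circ p$ and $l$ indexing the input tensor $A$; let $C$ denote the intermediate tensor, and let $j$ range over its dimensions. The matrix product expands as
\begin{align*}
(\alpha_M(q) \times \alpha_M(p))_{k,l} \;=\; \max_j\; (\alpha_M(q))_{k,j} \,*\, (\alpha_M(p))_{j,l}.
\end{align*}
Since the right-hand side is a maximum, it suffices to show that for each fixed $j$ with $(\alpha_M(q))_{k,j} = y$ and $(\alpha_M(p))_{j,l} = z$, the true composed entry $(\alpha_M(q \circ p))_{k,l}$ is at least $y * z$. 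I would prove this by unfolding the symbols into their preimage definitions: \lx{} means ``no pairing''; \lo{}, \lm{}, \la{} mean the preimage of a slice (respectively, a single output element) spans one, strictly more than one, or all positions along the relevant input dimension. Using $(q \circ p)^{-1} = p^{-1} \circ q^{-1}$ on a slice of $B$ witnessing the pairing of dimension $k$ with dimension $j$ (through $q$), and then again on the resulting slice of $C$ witnessing the pairing of $j$ with $l$ (through $p$), one constructs a preimage in $A$ whose locality along dimension $l$ realizes at least $y * z$.

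The main obstacle is the case analysis for the sixteen entries of the $*$ lookup table, several of which are subtle. The cases involving \lx{} are immediate: if either leg fails to pair then no preimage chain can reach the relevant input dimension, so the composition also fails to pair. The cases involving \la{} are nearly as direct: composing through an all-to-one leg covers every position along the input dimension in the composed preimage. The delicate cases involve \lo{} and \lm{}. For example, $\lm * \lo = \lm$ requires showing that a many-position preimage on one leg is not collapsed to a single position by a one-to-one leg on the other side; this must be argued by chasing the specific positions through the preimage, taking care which leg contributes the non-singleton. The case $\lo * \lo = \lo$ requires using both pairing witnesses simultaneously to pin down exactly one contributing position. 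Once each entry of the table is justified, the composition inequality holds entry-wise, and the lemma closes by the one-line combination above.
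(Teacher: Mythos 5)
Your proposal follows essentially the same route as the paper's proof: both reduce the claim to an entry-by-entry justification of the $*$ lookup table via the preimage definitions of pairing and locality, and both lift to higher-dimensional tensors by observing that taking the maximum over intermediate dimensions is exactly the matrix product with $+=\max$. Your explicit separation into a composition inequality $\alpha_M(q\circ p)\ge\alpha_M(q)\times\alpha_M(p)$ plus monotonicity of $\times$ in its right argument is a slightly cleaner bookkeeping of the step from $u=\alpha_M(p)$ to general $p\models u$, which the paper leaves implicit, but the mathematical content is the same.
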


We defer the proof to the \cref{appendix:proofs:properties}. Finally, to actually infer the mixing property of a subgraph, we first infer the properties of each concrete operation within the subgraph, and then compose them using the abstract interpretation. To reduce overhead, we also cache the properties for each unique operation by hashing its specification (e.g., operation type and parameter values).

\begin{figure}[t]
\begin{minipage}[b]{0.4\linewidth}
\centering
\footnotesize
        \begin{tabularx}{.5\columnwidth}{c *{6}{Y}}
        \toprule
         & & \multicolumn{4}{c}{z}\\
        \cmidrule{3-6} \
        & & \lx{} & \lo{} & \lm{} & \la{} \\
        \midrule
        \multirow{4}{*}{y} 
        & \lx{} & \lx{} & \lx{} & \lx{} & \lx{} \\
        & \lo{} & \lx{} & \lo{} & \lm{} & \la{} \\
        & \lm{} & \lx{} & \lm{} & \lm{} & \la{} \\
        & \la{} & \lx{} & \la{} & \la{} & \la{} \\
        \bottomrule
    \end{tabularx}
\caption{Definition of $y * z$ over the elements $\{\lx{},\lo{},\lm{},\la{}\}$, representing the locality property, used in \cref{lemma:linear_abstract_semantics}}
\label{fig:locality_multiply}
\vspace{1.5em}
\end{minipage}
\hfill
\begin{minipage}[b]{0.55\linewidth}
\centering
\includegraphics[width=.9\textwidth]{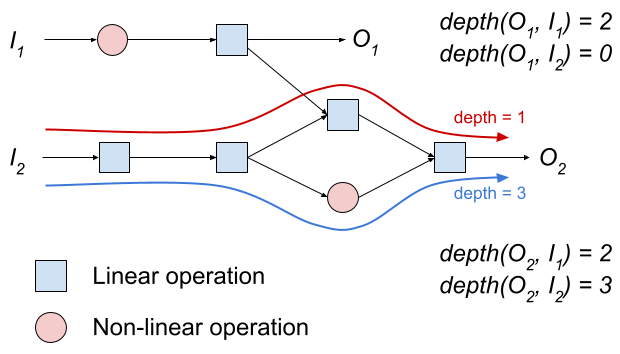}
\caption{Depth property of a subgraph with inputs $I_1, I_2$ and outputs $O_1, O_2$. The depth to $O_1$ from $I_2$ is defined to be 0 since there is no path from $I_2$ to $O_1$. Since there are two paths from $I_2$ to $O_2$, depth($O_2$, $I_2$) is the max of the two paths.} 
\label{fig:depth}
\end{minipage}
\end{figure}

\subsection{Depth Property}

To complement the mixing property, which captures the expressivity of a subgraph as a linear operator, the depth property captures the properties of the subgraph as a \emph{nonlinear} function.
More specifically, given an input and an output of a subgraph, the depth property specifies the maximum depth over all paths from the input to the output, where the depth of a path is defined as the number of alternating linear and nonlinear operations along that path. \cref{fig:depth} gives an example of computing the depth property for a subgraph.

The depth property takes property values in the nonnegative integers. The partial order $\le$ is just the usual ordering on integers. In other words, given a depth property value $d$, any subgraph whose depth property value matches or exceeds $d$ is said to satisfy the depth property.

\paragraph{Concrete Inference}

An operation $f$ is linear if $af(x) + bf(y) = f(ax + by)$
for all scalars $a, b$ and inputs $x, y$. Hence, one way to infer whether a single operation is linear is to simply take $x, y$ to be the inputs to the operation $f$ in the original computation graph as produced by two random batches of inputs to the full graph, and check whether the above equation holds for scalar $a, b$. However, as specifying whether an operation is linear or nonlinear is relatively simple and does not impose undue burden upon the programmer, we chose to manually specify, for each operation, whether it is linear to avoid additional inference overheads at synthesis time.

\paragraph{Abstract Interpretation}

Our programs are represented as a directed acyclic computation graph, flattened via topological sort. Hence, given a mapping of primitive operations to their linear property, we simply scan all the nodes in topological order to compute the depth property for every pair of inputs and outputs in a single pass.

\subsection{Shape Property}

The shape property specifies the shape of every output of the subgraph, given a shape for every input of the subgraph. More explicitly, for any program $p$, the abstract interpretation $p_S^\alpha$ maps an input shape $u = (i_1, \ldots, i_n)$ to an output shape $v = (o_1, \ldots, o_m)$, where given any tensor $I$ with shape $u$, the output tensor $O = p(u)$ has shape $v$. We use JAX’s built-in shape inference to easily infer the shape property without executing the graph on any realized input tensors. 

\subsection{Example}

\cref{fig:example:inference} illustrates how we infer program properties step-by-step, using the randomly selected subgraph from the running example in \cref{fig:overview}. The program properties are initialized according to the identity program (i.e., the mixing property is an identity matrix, the depth property is zero, and the shape property preserves the input shape). We then interpret each operation on its input properties abstractly, as described in this section, to produce its output properties. The properties of the entire subgraph are the final outputs from the abstract interpretation.

\begin{figure}[t]
    \centering
    \includegraphics[width=\linewidth]{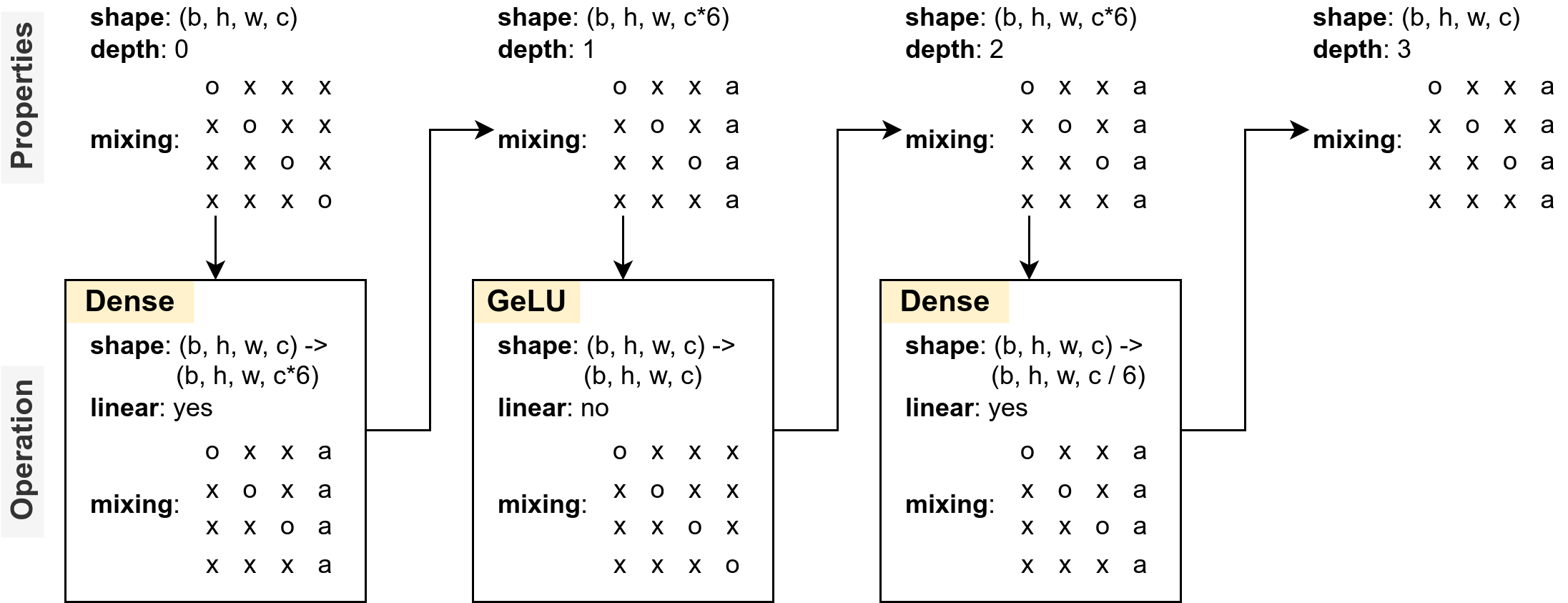}
    \caption{The abstract interpretation to infer program properties of the randomly selected subgraph in \cref{fig:overview}}
    \label{fig:example:inference}
\end{figure}

\subsection{Discussion}

We conclude by discussing how our program properties relate to the motivation provided at the beginning of this section. Our properties are designed to characterize neural architectures (and mutations thereof) as mathematical functions, the intuition being that deep neural networks that share similarities as mathematical functions are likely to have similar performances as well. In terms of the two semantic desiderata: each point in the abstract space contains points that are similarly expressive as mathematical functions, and hence more likely to have similar performance (in terms of FLOPS, parameter count, and accuracy); similarly, nearby points in the abstract space yield similar subgraphs from a functional perspective.

We can further illustrate this point by considering the examples of degenerate architectures from the beginning of this section. The depth property guides the search away from the first two degenerate architectures: (1) a sequence of activation functions, and (2) a sequence of linear functions, while the mixing property rules out the last degenerate architecture: (3) a sequence of pooling and activation functions. The shape property ensures that the synthesized subgraph results in a legal substitution, so that the resulting graph still represents a well-defined, executable computation. In particular, if the output shape is too small, the resulting computation graph may not be well-defined due to a subsequent operation producing a dimension of size zero; if the output shape is too large, the remainder of the computation graph may not be executable due to out of memory errors. To summarize, any architecture which satisfies the same mixing, depth, and shape properties as a deep neural network should at least also qualify as a valid deep neural network in the sense of \cref{def:mlp}, allowing the evolutionary search to prune large portions of the concrete search space consisting of legal programs that represent degenerate architectures.

\section{The Synthesis Algorithm}
\label{sec:synthesis}

This section presents our synthesis algorithm formally and shows how we leverage a notion of distance between programs and properties to achieve progress, soundness, and completeness. To generate a program of length $n$ given a search space $T$ of transformations, the complexity of our algorithm scales \emph{linearly} with $n$ and $|T|$. In contrast, the na\"ive enumerative synthesis algorithm that tries all sequences of transformations until finding a satisfying sequence has complexity $O(|T|^n)$.

\subsection{Distance Functions and Covering Sets}
\label{sec:distance_covering}

\begin{definition}
\label{def:distance}
Given a program property $\Pi = (\mathcal{V}, \le, \alpha)$, a \textbf{distance function} is a function $d : \mathcal{P} \times \mathcal{V} \mapsto \mathbb{R}^+ \cup \{\infty\}$ such that:
\begin{enumerate}
    \item $d(p, v) \ge 0$ for all $p \in \mathcal{P}, v \in \mathcal{V}$. 
    \item $d(p, v) = 0 \iff p \models v$.
    \item $d(p, v) = \infty$ $\iff$ there does not exist any finite sequence of transformations $t = t_1 \circ t_2 \circ \cdots \circ t_n$ such that $t(p) \models v$.
\end{enumerate}
\end{definition}

For the distance function to be useful for synthesis, it must be accompanied by a set of transformations which can be used to minimize it:

\begin{definition}
\label{def:covering}
Given a space of programs $\mathcal{P}$, a program property $\Pi = (\mathcal{V}, \le, \alpha)$, and a distance function $d$, a set of transformations $T \subseteq \mathcal{T}$ is a \textbf{covering} if for all $p \in \mathcal{P}$ and $v \in \mathcal{V}$ with $d(p, v) > 0$, either (1) there exists $t \in T$ such that $d(t(p), v) < d(p, v)$ or (2) $d(p, v) = \infty$.
\end{definition}

For all feasible synthesis tasks, the covering $T$ is guaranteed to be able to make progress as measured by the distance function $d$.
However, \emph{efficient} synthesis requires a stronger condition:

\begin{definition}
\label{def:uniform-covering}
A covering $T$ is a \textbf{uniform covering} if there exists a constant $\epsilon > 0$ such that for all programs $p \in \mathcal{P}$ and properties $v \in \mathcal{V}$ with $d(p, v) > 0$, there exists $t \in T$ such that 
\begin{align}
    d(t(p), v) + \epsilon \le d(p, v)
\end{align}
\end{definition}

In other words, the amount of progress can be bounded \emph{uniformly} from below (i.e., the same bound applies to all programs and properties). A uniform lower bound guarantees that the distance will be zero after a finite number of steps, rather than approach zero in the limit. Additionally, note that a necessary condition for the viability of a distance is that the full space of transformations $\mathcal{T}$ should be a covering. However, as our algorithm will scale linearly with $|T|$, we would also like $T$ to be as small as possible, whereas $|\mathcal{T}|$ is generally infinite (as in our case). 

\subsection{Progressive Synthesis}
\label{sec:prog_syn}

In this section, we will assume black-box access to a distance function $d$ for a program property $\Pi = (\mathcal{V}, \le, \alpha)$, as well as a uniform covering $T \subseteq \mathcal{T}$. Given an initial program $p_0$ and a program property value $v$, our synthesis procedure iteratively produces programs $p_1, p_2, \ldots$ until it finds a program $p^* \models v$, where we use $d$ to help in selecting a transformation at each step.

\cref{alg:synthesis} describes the \textbf{greedy progressive synthesis} algorithm that synthesizes a program $p^*$ satisfying any (feasible) program property $v$. Each iteration $i$ starts with a partial program $p_{i-1}$, where $p_0$ is initialized to the input program. Given the distance function $d$, for each transformation $t$ in the uniform covering $T$, the algorithm computes the distance from the transformed program $t(p_{i-1})$ to the desired property $v$, and selects the transformation $t_i$ which achieves the minimum distance. This process is repeated until the distance is 0, at which point the algorithm terminates and returns the sequence of transformations.

\begin{algorithm}[t]
\small
\caption{Greedy progressive synthesis}
\label{alg:synthesis}
\begin{algorithmic}[1]
\Require initial program $p$, program property $v$, distance function $d$, uniform covering $T$
\Ensure $\not\models$ if $(p, v)$ is infeasible; otherwise, a sequence of transformations $\tau$ such that $\tau(p) \models v$
  \If{$d(p,v) = \infty$}
    \State \Return $\not\models$
  \EndIf
  \State $\tau \gets \{ \}$, $p_0 \gets p$
  \For{$i$ = 1, 2, \ldots}
    \If{$d(p_{i-1},v) = 0$}
        \State \Return $\tau$
    \EndIf
    \State $t_i \gets \argmin_{t \in T} d(t(p_{i-1}), v)$
    \State $p_i \gets t_i(p_{i-1})$
    \State $\tau \gets (t_1, \ldots, t_i)$
  \EndFor
\end{algorithmic}
\end{algorithm}

The following theorem characterizes the correctness and efficiency of progressive synthesis. 

\begin{theorem}
\label{thm:prog_syn}
Given a distance function $d$ and a uniform covering $T$, the progressive synthesis algorithm satisfies the following three properties:
\begin{enumerate}
    \item Soundness: if the algorithm returns $p^*$, then $\alpha(p^*) \models v$.
    \item Completeness: if the algorithm returns $\not\models$, then the synthesis task is infeasible.
    \item Progress: the algorithm terminates after a finite amount of time, which is linear in the length of the final sequence of transformations and linear in $|T|$.
\end{enumerate}
\end{theorem}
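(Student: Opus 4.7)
The plan is to verify each of the three properties separately, each of which should follow fairly directly from the definitions of distance function (\cref{def:distance}) and uniform covering (\cref{def:uniform-covering}).

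For \emph{soundness}, observe that the algorithm only returns $\tau = (t_1, \ldots, t_i)$ at line 7, which is reached precisely when $d(p_{i-1}, v) = 0$. By construction $p_{i-1} = \tau(p)$, and by property (2) of \cref{def:distance} we have $d(p_{i-1}, v) = 0 \iff p_{i-1} \models v$. For \emph{completeness}, the algorithm returns $\not\models$ only when $d(p, v) = \infty$ on the initial input, and property (3) of \cref{def:distance} states this is equivalent to infeasibility.

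The main work lies in establishing \emph{progress}. First I would argue that whenever $d(p, v) > 0$ and $d(p, v) < \infty$, the uniform covering property forces $d(p, v) \ge \epsilon$: this is because \cref{def:uniform-covering} guarantees some $t \in T$ with $d(t(p), v) + \epsilon \le d(p, v)$, and since distances are nonnegative we get $d(p, v) \ge \epsilon$. Next, note that the algorithm never encounters $d(p_i, v) = \infty$ past the initial check: if $d(p_0, v) < \infty$ and there exists $t$ with $d(t(p_0), v) < d(p_0, v)$, then in particular $d(t(p_0), v) < \infty$, and the greedy choice at line 9 yields $d(p_1, v) \le d(t(p_0), v) < \infty$; induction extends this to all subsequent $p_i$. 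Combining these, at every iteration with $d(p_{i-1}, v) > 0$ the argmin at line 9 selects some $t_i$ with $d(p_i, v) \le d(p_{i-1}, v) - \epsilon$. A straightforward induction shows $d(p_k, v) \le d(p_0, v) - k\epsilon$ as long as the algorithm has not terminated, and since $d$ is nonnegative we conclude $k \le d(p_0, v)/\epsilon$, so the algorithm terminates after at most $\lceil d(p_0, v)/\epsilon \rceil$ iterations. Since each iteration performs $|T|$ evaluations of $d$ (and an application of the corresponding transformation), the total running time is $O(|T| \cdot n)$ where $n$ is the length of the returned sequence.

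The main obstacle I anticipate is the subtle gap between ``distance decreases by $\epsilon$ at every step'' and ``distance actually reaches $0$ in finitely many steps'' — without the uniform lower bound, the distance could in principle shrink toward $0$ asymptotically (as would happen with a merely non-uniform covering), so it is important to emphasize that the uniformity constant $\epsilon$ forces termination in a bounded number of steps rather than in the limit. A secondary subtlety worth flagging is the implicit assumption that $\argmin$ is well-defined: since $T$ is finite (uniform coverings are required to be finite in the preceding discussion where the authors note $|T|$ should be small) the minimum is always attained, so line 9 is well-posed.
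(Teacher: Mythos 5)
Your proof is correct and follows essentially the same route as the paper's: soundness and completeness fall out of conditions (2) and (3) of the distance-function definition, and progress follows from the uniform $\epsilon$ decrease guaranteed by the covering, giving at most $d(p_0,v)/\epsilon$ iterations at cost $|T|$ each. The extra details you supply (finiteness of $d(p_i,v)$ being preserved along the run, and the well-posedness of the $\argmin$ over the finite set $T$) are points the paper's terse proof leaves implicit, and your emphasis on uniformity being what rules out asymptotic-but-nonterminating decrease matches the paper's own motivation for \cref{def:uniform-covering}.
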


We defer the proof to the \cref{appendix:proofs:prog_syn}.

\subsubsection{Stochastic Progressive Synthesis}
\label{sec:stoch_prog_syn}

Selecting a random feasible operation (with probability $p > 0$) and an operation that makes progress (with probability $1-p$) increases the expected length of the synthesized subgraph, but ensures that any satisfying subgraph has a positive probability of being synthesized. We use this variant rather than a pure greedy approach in our implementation to encourage greater diversity.

\subsubsection{Progressive Synthesis as a Universal Synthesis System}
\label{sec:prog_syn_universal}

A natural question is the extent to which it is necessary to approach the task of program synthesis by defining a distance function with a covering. Alternatively, we would like to know whether there are synthesis tasks which have efficient solutions, but cannot be solved efficiently via progressive synthesis.

The main result of this section identifies a natural class of synthesis algorithms that are exactly as powerful as progressive synthesis. In particular, we define the class of \emph{recursively consistent} synthesis algorithms as the set of algorithms $A$ that can be efficiently ``restarted'' from a partial solution, e.g., if $A(p_0, v) = (t_1, t_2, \ldots, t_n)$, then $A(t_1(p_0), v) = (t_2, \ldots, t_n)$.

The following (informal) theorem states that any task that can be efficiently solved by a recursively consistent algorithm can also be solved efficiently by progressive synthesis, and vice-versa.

\begin{theorem}[Informal Version.]
\label{thm:synthesis_complexity_informal}
Given a synthesis task specified by a space of programs $\mathcal{P}$, a program property $\Pi = (\mathcal{V}, \le, \alpha)$, and a set of primitive transformations $E$, the following are equivalent:
\begin{enumerate}
    \item There exists a sound, complete, and recursively consistent synthesis algorithm $A$ that runs in time polynomial in the length of the synthesized transformations $|A(p, v)|$, for all $p \in \mathcal{P}$, $v \in \mathcal{V}$.
    \item $E$ is a uniform covering with respect to an efficiently computable distance $d$.
\end{enumerate}
\end{theorem}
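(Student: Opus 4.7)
The plan is to prove the two directions of the equivalence separately. The easier direction is $(2) \Rightarrow (1)$: given that $E$ is a uniform covering with respect to an efficiently computable distance $d$, I would instantiate the greedy progressive synthesis algorithm of \cref{alg:synthesis} with $T = E$. By \cref{thm:prog_syn}, this algorithm is sound, complete, and runs in time linear in both the length of the synthesized sequence and $|E|$, which together with $d$ being efficiently computable yields polynomial runtime overall. To verify recursive consistency, observe that the greedy choice at each step depends only on the current partial program, $d$, and $T$---none of which change if we restart from $t_1(p_0)$ in place of $p_0$. With any fixed deterministic tie-breaking rule on $E$, the subsequent iterations replay the tail of the original execution, yielding $A(t_1(p_0), v) = (t_2, \ldots, t_n)$ as required.

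For the harder direction $(1) \Rightarrow (2)$, the idea is to construct $d$ directly from $A$. Assume without loss of generality that $A(p, v)$ returns the empty sequence whenever $p \models v$ (any algorithm can be augmented with a constant-time check). Define $d(p, v) = |A(p, v)|$ when the task is feasible, and $d(p, v) = \infty$ otherwise. The three conditions of \cref{def:distance} follow from soundness, completeness, and this normalization; efficiency of $d$ reduces to running $A$, which by hypothesis is polynomial. To see that $E$ itself is a uniform covering with $\epsilon = 1$, fix $p, v$ with $0 < d(p, v) = n < \infty$ and write $A(p, v) = (t_1, \ldots, t_n)$. Since $\mathcal{T} = E^*$, the first transformation $t_1$ lies in $E$; by recursive consistency, $A(t_1(p), v) = (t_2, \ldots, t_n)$, so $d(t_1(p), v) \le n - 1 = d(p, v) - 1$ (with equality if $t_1(p) \not\models v$, and the weaker inequality $d(t_1(p), v) = 0$ otherwise).

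The main obstacle I anticipate is formalizing ``recursively consistent'' in a way that makes both directions go through cleanly. The informal restart property is exactly what the $(1) \Rightarrow (2)$ construction needs, but to make the converse precise we must commit to a deterministic tie-breaking rule (since the $\argmin$ in \cref{alg:synthesis} is generally nonunique) and confirm that restarts under that rule produce identical tails. Edge cases around empty outputs and infeasible inputs---such as ensuring $A$ returns a distinguished failure symbol rather than diverging---are straightforward bookkeeping but must be made explicit in the formal statement. Modulo these notational concerns, the argument for each direction amounts to a one-step induction on the length of the synthesized sequence, ``peeling off'' one primitive transformation at a time using recursive consistency.
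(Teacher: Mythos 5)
Your core construction is the same as the paper's: define $d(p,v)=|A(p,v)|$ (or $\infty$ on infeasible instances), use recursive consistency to peel off one step and get a uniform covering with $\epsilon=1$, and get the converse by observing that progressive synthesis is itself recursively consistent. For the informal statement this is the right skeleton. Two points deserve flagging.

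First, your step ``since $\mathcal{T}=E^*$, the first transformation $t_1$ lies in $E$'' is a non sequitur: the entries of the output sequence are elements of $\mathcal{T}=E^*$, i.e.\ arbitrary finite compositions of primitives, not single primitives. What you actually need is that the output can be \emph{re-parsed} so that a bounded-length prefix can be peeled off --- the paper formalizes this with representational equivalence, primitive representatives, and ``recursive consistency at depth $D$,'' and then takes the covering to be all strings over $E$ of length at most $D$ (the informal theorem being the case $D=1$). You correctly identify formalizing recursive consistency as the main obstacle, but the fix is this re-parsing, not the freeness of $\mathcal{T}$.

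Second, and more substantively: in the $(1)\Rightarrow(2)$ direction your constructed $d$ is computable only by running $A(p,v)$, whose runtime is polynomial in $|A(p,v)|$ --- i.e.\ polynomial in the \emph{value} $d(p,v)$ itself, with no uniform bound across candidates. If you then close the loop and feed this $d$ back into the greedy algorithm of \cref{alg:synthesis} (as your $(2)\Rightarrow(1)$ direction does), each iteration must evaluate $d(t(p),v)$ for every $t\in E$, and its cost is bounded below by the \emph{slowest} such evaluation; a single bad candidate $t$ with huge (or infinite) $d(t(p),v)$ destroys the runtime guarantee. This is precisely why the paper's formal version (\cref{thm:parallel_synthesis_complexity}) replaces the greedy $\argmin$ with a \textsc{ParallelSearch} that returns the \emph{fastest} distance-decreasing candidate, and measures efficiency against a complexity function $K$ rather than output length (output-polynomial runtime being gameable by padding with identity transformations). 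Your proof of the informal statement survives because ``efficiently computable'' is left undefined there, but the equivalence you want --- where statement (2) genuinely certifies that progressive synthesis runs in polynomial time --- needs the parallel-search mechanism, not the greedy one.
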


The second statement guarantees that progressive synthesis is sound, complete, and runs in polynomial time. In particular, our analysis takes into account the complexity of the distance function $d$ (cf. \cref{thm:prog_syn}, which treats $d$ as an oracle). We state and prove the rigorous version of this result in \cref{appendix:proofs:prog_syn}.

\subsection{Multiple Properties from Monotonic Transformations}
\label{sec:multiple_properties}

This section discusses how to adapt our progressive synthesis algorithm when there are multiple properties of interest. For instance, assume that we have two program properties $\Pi_1$ and $\Pi_2$. Each property $\Pi_i$ has a distance function $d_i$ with a uniform covering $T_i$. The objective is to synthesize a sequence of transformations satisfying a set of program property values $S = \{v_1 \in \mathcal{V}_1, v_2 \in \mathcal{V}_2\}$.

The main problem is that applying a transformation $t \in T_1$ that decreases the distance $d_1$ to the first property $v_1$ may \textit{increase} the distance $d_2$ to the second property $v_2$. In other words, we cannot guarantee there exists a transformation $t \in T_1 \cup T_2$ that simultaneously improves both $d_1$ and $d_2$. In the worst case, progress is impossible: we may trade-off between $d_1$ and $d_2$ indefinitely.
To resolve this issue, we introduce a notion of monotonic transformations:

\begin{definition}
A transformation $t \in \mathcal{T}$ is \textbf{monotonic} with respect to a property $\Pi$ and distance $d$ if for all programs $p \in \mathcal{P}$ and properties $v \in \mathcal{V}$, $d(t(p), v) \le d(p, v)$.
\end{definition}

A monotonic transformation, by definition, cannot cause the property $\Pi$ to regress. We will also refer to a set $T \subseteq \mathcal{T}$ as monotonic if all $t \in T$ are monotonic. For instance, the set of all transformations $\mathcal{T}$ is monotonic with respect to the depth property.

The following theorem gives a sufficient condition for progressive synthesis to succeed over multiple properties. We defer the proof to \cref{appendix:proofs:multiple_properties}.

\begin{theorem}
\label{thm:prog_syn_multi}
Let $\mathbb{P} = \{\Pi_i = (\mathcal{V}_i, \le_i, \alpha_i)\}_{i = 1}^N$ be a set of program properties, each with a distance function $d_i$ and covering $T_i$, respectively, and let $S = \{v_i \in \mathcal{V}_i\}_{i = 1}^N$ be a set of program property values. If for all $i$, $\cup_{j \ne i} T_j$ is monotonic with respect to $d_i$, then $d(p, S) = \sum_i d_i(p, v_i)$ is a distance function for $\mathbb{P}$ with covering $T = \cup_{i=1}^N T_i$. Furthermore if each covering $T_i$ is uniform, then so is $T$.
\end{theorem}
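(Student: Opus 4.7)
My plan is to check directly the three defining conditions of a distance function for $d(p, S) := \sum_i d_i(p, v_i)$, then verify that $T = \cup_i T_i$ satisfies the covering (and uniform covering) conditions of \cref{def:covering,def:uniform-covering}. The monotonicity hypothesis that $\cup_{j \ne i} T_j$ is monotonic with respect to $d_i$ will be the workhorse: rewritten pairwise, it says that every $T_k$ is monotonic with respect to every $d_i$ with $i \ne k$, so a transformation chosen to decrease one property never hurts another.

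Two of the three distance-function conditions are bookkeeping. Non-negativity follows summand-wise. For the zero condition, a sum of nonnegative values vanishes iff every summand does, and this chains with $p \models S \iff \forall i,\ p \models v_i \iff \forall i,\ d_i(p, v_i) = 0$. The covering claim is where the hypothesis does the real work. Given $p$ with $d(p, S) > 0$, some $d_i(p, v_i) > 0$; applying the covering property of $T_i$ yields either some $t \in T_i$ with $d_i(t(p), v_i) < d_i(p, v_i)$, or else $d_i(p, v_i) = \infty$, which forces $d(p, S) = \infty$. In the first case, because $t \in T_i \subseteq \cup_{k \ne j} T_k$ for every $j \ne i$, the monotonicity hypothesis gives $d_j(t(p), v_j) \le d_j(p, v_j)$; summing over $j$ yields $d(t(p), S) < d(p, S)$. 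For the uniform case, the same computation shows the decrease is at least $\epsilon_i$, so setting $\epsilon := \min_k \epsilon_k > 0$ certifies $T$ as uniform.

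The main obstacle is the remaining distance-function condition: $d(p, S) = \infty$ iff no finite sequence of $\mathcal{T}$-transformations drives $p$ to satisfaction of $S$. The forward direction is immediate, since the unreachability of any $v_i$ precludes reachability of $S$. The reverse direction is more delicate: a priori, each $v_i$ could be individually reachable while the conjunction $S$ is not. My approach is to rule this out constructively using the covering-plus-monotonicity argument already assembled. Assuming every $d_i(p, v_i) < \infty$, run the greedy schedule that at each step picks some $j$ with $d_j > 0$ and applies $t \in T_j$ decreasing $d_j$; by monotonicity no other $d_i$ increases, so the combined distance $d$ strictly decreases while every $d_i$ remains finite. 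This yields a finite $\mathcal{T}$-sequence reaching $p \models S$ provided the per-step decrease is bounded below, which in the uniform case is automatic from $\epsilon$, and in the general case follows from the concrete properties of \cref{sec:properties} taking values in well-founded (essentially integer-valued) ordered sets. I anticipate this termination-under-monotonicity argument to be the one requiring the most careful setup; everything else is a short direct computation from the definitions.
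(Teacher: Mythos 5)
Your proposal is correct and follows essentially the same route as the paper: the core step --- pick an $i$ with $d_i(p,v_i)>0$, apply the covering $T_i$, and use monotonicity of $T_i$ with respect to every other $d_j$ to conclude that the sum strictly decreases (by at least $\min_k \epsilon_k$ in the uniform case) --- is exactly the paper's argument, which it merely packages as a two-property product lemma and then extends to $N$ properties by induction. Your extra attention to the $d(p,S)=\infty$ condition is warranted, since the paper dismisses the claim that the sum is a distance function as immediate even though joint reachability of all the $v_i$ does not follow trivially from their individual reachability, and your covering-plus-monotonicity construction of a finite satisfying sequence is the right way to close that gap.
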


\subsection{Progressive Synthesis for NAS}
\label{sec:prog_syn_nas}

This section shows how to apply the progressive synthesis algorithm to our setting of synthesizing subgraphs satisfying the program properties described in \cref{sec:properties}. As a technical matter, we need to first interpret the definition of distance functions and covering sets from \cref{sec:distance_covering} in the context of an abstract interpretation $\alpha$. Briefly, given an \emph{abstract distance function} $d^\alpha(u, v)$ over properties $u, v \in \mathcal{V}$ (where $d^\alpha(u, v) = 0$ implies $u \ge v$), we say $T \subseteq \mathcal{T}$ is an \emph{abstract uniform covering} if there exists $\epsilon > 0$ such for all $u, v \in \mathcal{V}$ with $d^\alpha(u, v) > 0$, there exists $t \in T$ such that
\begin{align}
    d^\alpha(t^\alpha(u), v) + \epsilon \le d^\alpha(u, v)
\end{align}
Hence, if $\alpha(p_0) = u$ and $t_1, \ldots, t_n$ is a sequence of transformations such that the distance of the abstractly interpreted property is zero, i.e.,
\begin{align}
    d^\alpha((t_n^\alpha \circ \cdots \circ t_1^\alpha)(u), v) = 0
\end{align}
then $p = (t_n \circ \cdots \circ t_1)(p_0) \models v$. A complete treatment is provided in \cref{appendix:abstract_prog_syn}.

In order to apply the version of Theorem \ref{thm:prog_syn} using abstract interpretations to our setting, in the following sections we will prove:
\begin{enumerate}
    \item For each of the mixing, depth, and shape properties, there exists an abstract distance $d^\alpha_i$ computable in polynomial time with a uniform abstract covering of transformations $T_i \subset \mathcal{T}$.
    \item $T = \cup_i T_i$ is monotonic with respect to both the depth and mixing properties.
    \item $T \cap M_S$, where $M_S$ is the set of transformations that preserve the shape of the output tensor, is a uniform covering for both the depth and mixing properties. 
\end{enumerate}

For every property, we take $T_i = E$, the set of transformations $E$ consisting of adding a single primitive operation. \cref{appendix:primitives} describes the full set of primitive operations we support. Then $T = E$, and the following result is immediate:

\begin{theorem}
\label{thm:progressive_nas}
The progressive synthesis algorithm for \ourname{} is sound, complete, and runs in time linear in $|p^*|$ and $|E|$, where $p^*$ is the synthesized subgraph, and $E$ is the set of primitive operations.
\end{theorem}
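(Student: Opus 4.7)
The plan is to verify the three numbered obligations stated immediately before the theorem, and then invoke \cref{thm:prog_syn_multi} to combine the three properties into a single distance function with a single uniform covering, followed by (the abstract-interpretation version of) \cref{thm:prog_syn} to conclude soundness, completeness, and a runtime linear in $|p^*|$ and $|E|$.

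First, I would construct an abstract distance $d_i^\alpha$ for each of the three properties. For the \emph{depth} property, whose values live in $\mathbb{N}$, I take $d_D^\alpha(u, v) = \max(v - u, 0)$; the covering is $E$, since appending a linear op followed by a nonlinear op (or vice-versa, depending on parity of the current depth) strictly increases the depth by $1$, giving a uniform $\epsilon = 1$. For the \emph{mixing} property, whose values are matrices over $\{\lx,\lo,\lm,\la\}$ ordered element-wise, I take $d_M^\alpha(u, v)$ to be the number of input-output pairs on which $u$ fails to dominate $v$, weighted by the size of the gap in the chain $\lx < \lo < \lm < \la$; using \cref{lemma:linear_abstract_semantics}, appending a suitably chosen linear op (e.g., a pointwise op, a pooling, a dense, or a conv) can strictly promote at least one deficient entry per step, yielding a uniform $\epsilon = 1$. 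For the \emph{shape} property, whose values are tuples of positive integers per output, I take $d_S^\alpha$ to be a sum over output dimensions of the (log-)distance between the current and target sizes; reshape, pad/slice, and stride-adjustment primitives in $E$ can each monotonically reduce this distance by at least one unit. In all three cases the distance is computable in polynomial time, and in fact in time linear in the size of the current property value.

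Second, I would verify monotonicity. Because depth is defined as a \emph{maximum} over paths in the topologically-sorted graph, appending any operation can never decrease the depth between an input and a reachable output; hence every $t \in E$ is monotonic with respect to $d_D^\alpha$. Similarly, by \cref{lemma:linear_abstract_semantics} the abstract interpretation of appending $t$ is matrix multiplication in a semiring whose sum is $\max$, so entries of $\alpha_M$ can only go up, giving monotonicity with respect to $d_M^\alpha$. In particular, $\cup_{j \ne S} T_j = E$ is monotonic with respect to both $d_D^\alpha$ and $d_M^\alpha$, as required by the premise of \cref{thm:prog_syn_multi}.

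Third comes the main obstacle: the shape property is \emph{not} monotonic under arbitrary primitive appends, and simultaneously depth/mixing coverings may perturb the shape. To handle this, I would restrict the covering for shape to $T_S \subseteq E$ consisting of shape-altering primitives, and argue that the depth- and mixing-increasing transformations can be chosen from $T \cap M_S$, where $M_S$ is the subset of $E$ that leaves the output shape fixed (e.g., pointwise nonlinearities for depth, and same-padded convolutions / channel-preserving dense ops for mixing). This is the content of bullet (3) preceding the theorem: $T \cap M_S$ remains a uniform covering for both depth and mixing, so we never have to trade off against the shape property. With this refinement, the hypotheses of \cref{thm:prog_syn_multi} are met, so $d = d_D^\alpha + d_M^\alpha + d_S^\alpha$ is a distance with uniform covering $T = E$. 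Applying \cref{thm:prog_syn} then gives soundness, completeness, and termination in a number of steps equal to the initial distance $d(\mathrm{id}, S)$, i.e., $|p^*|$; since each step enumerates over $|T| = |E|$ transformations and evaluates $d$ incrementally via the cached abstract interpretations from \cref{sec:properties}, the total runtime is $O(|p^*| \cdot |E|)$, establishing the theorem. The delicate step that I expect to require the most care is verifying that $T \cap M_S$ really is a covering for the mixing property: one must exhibit, for every target mixing matrix $v$ and every reachable current matrix $u \not\ge v$, a shape-preserving primitive whose abstract interpretation (via \cref{lemma:linear_abstract_semantics}) strictly promotes at least one deficient entry of $u$ toward $v$.
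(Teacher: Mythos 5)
Your proposal is correct and follows essentially the same route as the paper: per-property abstract distances and uniform coverings combined via \cref{thm:prog_syn_multi}, with the non-monotonicity of the shape property handled by restricting the depth and mixing coverings to the shape-preserving primitives $E \cap M_S$, and the (abstract) version of \cref{thm:prog_syn} supplying soundness, completeness, and the $O(|p^*|\cdot|E|)$ runtime. The one step you flag but do not discharge---that $E \cap M_S$ is still a uniform covering for the mixing property---is exactly where the paper's proof does its only substantive work, observing that the only shape-changing simple primitives (pooling, and dense/convolution layers that alter the feature count) have mixing effects reproducible by shape-preserving variants, e.g., a shape-preserving convolution supplies the many-to-one spatial locality that pooling would.
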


\subsubsection{Sequential Subgraphs Without Reshaping}
\label{sec:nas_properties}

As a first step, we will focus on synthesizing subgraphs that are sequential (i.e., each operation consumes exactly one input and each output is consumed exactly once) and preserve the dimensions of their inputs. In other words, we restrict our attention to the set of \emph{simple primitives} $E_s \subseteq E$ that take one input and produce one output, and also do not reshape or transpose their inputs. Synthesis occurs over the corresponding space of \emph{simple transformations} $\mathcal{T}_s = E_s^*$. The set of program properties values $\mathcal{V}$ is limited to those satisfiable by a sequential subgraph with operations from $E_s$. We present the key results below; all proofs are deferred to \cref{appendix:proofs:prog_syn_nas}.

\subsubsection{Mixing Property}

We denote the mixing property as $\Pi_M = (\mathcal{V}_M, \le, \alpha_M)$. Without any reshape operations, the number of input and output dimensions are equal; hence, the space of property values contains only \emph{square} matrices with entries in \{\la{}, \lm{}, \lo{}, \lx{}\} satisfiable by a sequential subgraph (cf. \cref{fig:mixing}). 
We define the following abstract distance function for the mixing property.

\begin{definition}
Define $d^\alpha_M(U, V)$, where $U, V \in \mathcal{V}_M$, to be the number of entries $i,j$ such that $V_{ij} > U_{ij}$. Then $d^\alpha_M$ is a distance on the mixing property.
\end{definition}

The following lemma establishes that the set of primitive ops $E_s$ is monotonic with respect to the mixing property.

\begin{lemma}
\label{lemma:preserve}
$\forall t_1, t_2 \in \mathcal{T}_s$ and $U \in \mathcal{V}_M$, $(t^\alpha_2 \circ t^\alpha_1) (U) \ge \max(t^\alpha_2 (U), t^\alpha_1 (U))$.
\end{lemma}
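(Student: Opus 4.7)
The plan is to reduce the claim to a purely element-wise statement in the matrix semiring $(\{\lx{}, \lo{}, \lm{}, \la{}\}, \max, *)$ of \cref{fig:locality_multiply}. Writing $A = \alpha_M(t_2)$ and $B = \alpha_M(t_1)$, two applications of \cref{lemma:linear_abstract_semantics} combined with associativity of the semiring yield $(t^\alpha_2 \circ t^\alpha_1)(U) = A \times (B \times U) = (A \times B) \times U$, while $t^\alpha_i(U) = \alpha_M(t_i) \times U$. It therefore suffices to establish the two entrywise matrix inequalities $A B \ge A$ and $A B \ge B$ in the partial order $\lx{} < \lo{} < \lm{} < \la{}$, and then to lift them from the matrices to their action on $U$.

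The lifting step is a consequence of the monotonicity of $*$ in each argument, which is a direct check against the look-up table. Monotonicity of $*$ propagates to monotonicity of the $(\max, *)$-matrix product, so $C \ge C'$ entrywise implies $C U \ge C' U$ entrywise. Applied to $AB \ge A$ and $AB \ge B$, this yields $(AB)U \ge \max(AU, BU)$ as required.

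The key structural observation, which drives both matrix inequalities, is that for every $t \in \mathcal{T}_s$ the diagonal entries of $\alpha_M(t)$ are at least $\lo{}$. A single simple primitive in $E_s$ consumes one input and produces one output of the same rank and shape, with no reshape or transpose, so the $i$-th output dimension pairs with the $i$-th input dimension with locality at least one-to-one (in the worst case, each output element depends at least on the corresponding input element along that axis). The look-up table shows that $\lo{}$ is the multiplicative identity on $\{\lx{}, \lo{}, \lm{}, \la{}\}$, so this diagonal-$\ge\lo{}$ property is preserved under the $(\max, *)$ product, and the claim extends to arbitrary $t \in E_s^*$ by induction on length. Granting this, pick $k=j$ in $(AB)_{ij} = \max_k A_{ik} * B_{kj}$ to get $(AB)_{ij} \ge A_{ij} * B_{jj} \ge A_{ij} * \lo{} = A_{ij}$, and pick $k=i$ to get $(AB)_{ij} \ge A_{ii} * B_{ij} \ge \lo{} * B_{ij} = B_{ij}$.

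The main obstacle is justifying the diagonal claim $\alpha_M(t)_{ii} \ge \lo{}$ for simple primitives: this is the only step that appeals to operational semantics rather than semiring algebra, and it is precisely where the restriction defining $E_s$ (single input, single output, no reshape or transpose) is essential, since a reshape or projection could easily break the diagonal-identity correspondence. Once that structural fact about $E_s$ is in hand, the remainder of the proof is a mechanical unwinding of \cref{lemma:linear_abstract_semantics} against the multiplication table.
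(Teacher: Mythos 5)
Your proof is correct and rests on the same key fact as the paper's: every simple primitive's mixing matrix dominates the identity (diagonal entries at least \lo{}, i.e., $\alpha_M(e) + I = \alpha_M(e)$), pushed through the semiring product of \cref{cor:matrix_semiring}. Your packaging---establish $AB \ge A$ and $AB \ge B$ entrywise via the \lo{}-dominant diagonal and then lift by monotonicity of $*$---is a slightly more modular rendering of the paper's expansion argument, but not a genuinely different route.
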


\begin{theorem}
\label{thm:mixing_monotone}
The set of simple primitives $E_s$ is a monotone uniform abstract covering for the mixing property with respect to $d^\alpha_M$.
\end{theorem}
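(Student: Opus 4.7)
The plan is to establish two facts: every simple primitive $t \in E_s$ is monotone with respect to $d^\alpha_M$, and whenever $d^\alpha_M(U, V) > 0$ there is some $t \in E_s$ that strictly decreases the distance by at least $1$.

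For the monotonicity part, I would apply \cref{lemma:preserve} with one of the two transformations set to the identity. Since the identity's abstract interpretation leaves $\mathcal{V}_M$ unchanged, the lemma collapses to $t^\alpha(U) \ge U$ entry-wise. Because $d^\alpha_M$ counts the entries where $V$ strictly exceeds the current abstract matrix, no previously good entry can become bad when we raise $U$, so $d^\alpha_M(t^\alpha(U), V) \le d^\alpha_M(U, V)$ for every $t \in E_s$; this simultaneously gives both the monotonicity claim and the fact that distances only decrease along any synthesis trajectory.

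For the covering statement, I would first lift \cref{lemma:preserve} by a straightforward induction on sequence length to obtain: for any $t_1, \ldots, t_n \in E_s$, $(t_n^\alpha \circ \cdots \circ t_1^\alpha)(U) \ge \max_{k=1}^n t_k^\alpha(U)$. Then, assuming $d^\alpha_M(U, V) > 0$, I would exhibit a sequence $t_1, \ldots, t_n \in E_s$ with $(t_n^\alpha \circ \cdots \circ t_1^\alpha)(U) \ge V$. Picking any bad entry $(i, j)$ with $V_{ij} > U_{ij}$ and invoking the strengthened lemma, some $t_k$ must already satisfy $t_k^\alpha(U)_{ij} \ge V_{ij}$, fixing that entry. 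Monotonicity (established above) then guarantees that $t_k$ breaks no other good entries, so $d^\alpha_M(t_k^\alpha(U), V) \le d^\alpha_M(U, V) - 1$. Since $d^\alpha_M$ is integer valued, $\epsilon = 1$ serves as a uniform lower bound on the decrease.

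The main subtlety is justifying that a realizing sequence exists from an arbitrary intermediate $U$ rather than only from the identity. This reduces to showing that the abstract composition $\alpha_M(\sigma) \times U$ from \cref{lemma:linear_abstract_semantics} is monotone in its second argument: if so, and $\sigma$ realizes $V$ starting from the identity matrix, then applying $\sigma$ from any $U \ge I$ yields a result that is pointwise at least as large, hence still $\ge V$. The required monotonicity follows by direct inspection of the tables for $+$ (maximum) and $*$ (\cref{fig:locality_multiply})---each is non-decreasing in its arguments---after which the argument closes. This is the one place the proof must dip into the concrete arithmetic of the mixing algebra rather than relying purely on the abstract properties of $\models$ and $\le$.
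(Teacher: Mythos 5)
Your monotonicity argument is sound and matches the paper's machinery: the proof of \cref{lemma:preserve} rests on the fact that $\alpha_M(e) \ge I$ for every $e \in E_s$, which gives $e^\alpha(U) = \alpha_M(e) \times U \ge U$ entry-wise, and since $d^\alpha_M$ only counts entries where $V$ strictly exceeds the current matrix, raising $U$ can never create a new bad entry. The uniformity claim via $\epsilon = 1$ is also fine and is exactly the paper's observation that an integer-valued distance makes any covering uniform. (For what it is worth, the paper never writes out an explicit proof of the covering half for the mixing property; it proves only \cref{lemma:preserve} and lets the rest stand implicitly, so you are attempting the part the paper glosses over.)

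The covering half of your argument, however, has a genuine gap. From $(t_n^\alpha \circ \cdots \circ t_1^\alpha)(U) \ge V$ and the lifted \cref{lemma:preserve}, which states $(t_n^\alpha \circ \cdots \circ t_1^\alpha)(U) \ge \max_k t_k^\alpha(U)$, you conclude that some single $t_k$ satisfies $t_k^\alpha(U)_{ij} \ge V_{ij}$ at a bad entry. This inference is backwards: the lemma lower-bounds the composition by the individual steps, so knowing the composition reaches $V_{ij}$ says nothing about any individual $t_k^\alpha(U)_{ij}$. What you would need is the reverse, entry-wise inequality (composition $\le$ max of individual steps), and that is false for the mixing semiring in general --- the product $[\alpha_M(t_2) \times \alpha_M(t_1)]_{ij} = \max_k \alpha_M(t_2)_{ik} * \alpha_M(t_1)_{kj}$ can exceed both factors at entry $(i,j)$ by chaining a pairing through an intermediate dimension $k \ne i, j$. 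The claim does happen to hold for the specific primitives in $E_s$: since no reshape or transpose is allowed, every $\alpha_M(e)$ is nonzero only on the diagonal and in the channel (and batch) columns, and a short case analysis on the $*$ table of \cref{fig:locality_multiply} shows each entry of a product of such matrices is attained by some single factor. But that sparsity argument is the actual content of the covering proof, and it is missing; as written, your step would be invalid for a richer primitive set even though all your cited lemmas would still hold. The remaining subtlety you flag (realizing $V$ from an arbitrary $U$ via monotonicity of $\times$ in its second argument, using $U \ge I$) is handled correctly.
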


\subsubsection{Depth Property}

For the depth property $\Pi_D = (\mathcal{V}_D, \le, \alpha_D)$, the space $\mathcal{V}_D$ of depth properties for sequential subgraphs consists of all the nonnegative integers. The function $d^\alpha_D(u, v) = \max(0, v - u)$ is an abstract distance for this property. It is easy to see that the set of simple primitive operations forms a uniform abstract covering with respect to this distance, since we can always append alternating linear and nonlinear operations to decrease the distance to $v$. The depth property is also trivially monotonic for all transformations $\mathcal{T}$.

\begin{theorem}
\label{thm:depth_monotone}
The set of simple primitives $E_s$ is a monotone uniform abstract covering for the depth property with respect to $d^\alpha_D$.
\end{theorem}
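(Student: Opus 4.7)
The plan is to verify three ingredients: that $d^\alpha_D$ is a valid abstract distance function on $\mathcal{V}_D = \mathbb{Z}_{\ge 0}$, that every $t \in E_s$ is monotone with respect to the depth property, and that $E_s$ is a uniform abstract covering with $\epsilon = 1$. Since the depth property value is just a single nonnegative integer in the sequential setting, each of these reduces to a one-line argument, so I expect the proof to mirror the short remark already given in the main text and to parallel in form (though be simpler than) the proof of Theorem \ref{thm:mixing_monotone}.

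First, I would check $d^\alpha_D(u,v) = \max(0, v-u)$ is an abstract distance: nonnegativity is immediate, $d^\alpha_D(u,v) = 0$ iff $u \ge v$ (which is exactly the satisfaction relation on $\mathcal{V}_D$ ordered by the usual $\le$), and $d^\alpha_D$ is never infinite because any target depth $v$ is reachable by appending enough alternating primitives, so the infeasibility clause of Definition \ref{def:distance} holds vacuously.

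Next, for monotonicity, I would argue that for any $t \in E_s$ (indeed any $t \in \mathcal{T}$) appending $t$ to a subgraph $p$ can only extend input-to-output paths, and thus cannot shorten any alternating linear/nonlinear sequence along such a path. At the abstract level this gives $t^\alpha(u) \ge u$ for all $u \in \mathcal{V}_D$, and monotonicity $d^\alpha_D(t^\alpha(u), v) \le d^\alpha_D(u, v)$ follows from the fact that $\max(0, v - \cdot)$ is nonincreasing. This is precisely the hypothesis required by Theorem \ref{thm:prog_syn_multi} when combining depth with the other properties.

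Finally, for the uniform covering condition, suppose $d^\alpha_D(u, v) > 0$, so $u < v$. Because $E_s$ contains primitives of both linear and nonlinear type, I can always select $t \in E_s$ whose type differs from that of the last operation in the current partial program (either type works if the program is still empty), which increases the alternating count by exactly one: $t^\alpha(u) = u + 1$. Thus $d^\alpha_D(t^\alpha(u), v) = d^\alpha_D(u, v) - 1$ and $\epsilon = 1$ witnesses uniformity. The only mildly delicate point — and the main obstacle, to the extent there is one — is formalizing the abstract interpretation so that "opposite type" is well-defined on the abstract state; the cleanest fix is to recall that the concrete depth is computed by a single topological scan, so the abstract state implicitly carries the type of the last op along each path, and the case analysis for appending a new primitive is routine.
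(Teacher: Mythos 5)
Your proof is correct and takes essentially the same route as the paper, whose argument for this theorem is just the in-text remark that one can always append alternating linear and nonlinear simple primitives to strictly decrease $d^\alpha_D$, that depth is trivially monotone under all of $\mathcal{T}$, and that uniformity is automatic because the distance is integer-valued (the paper makes this last point via a small lemma in the appendix stating that any covering for a discrete-valued distance is uniform). Your observation that the abstract state must implicitly carry the type of the last operation along each path for $t^\alpha(u) = u+1$ to be well-defined is a detail the paper glosses over, but it does not alter the substance of the argument.
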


\subsubsection{Shape Property}

In general, there are two ways a (dimension-preserving) operation can change the shape of a tensor. The first is by changing the channel dimension to an arbitrary integer (e.g., a dense layer). The second is by downsampling the spatial dimensions by an integer factor (e.g., a \mbox{2~$\times$~2} pooling operation would decrease the spatial dimensions by a factor of 1/2). Convolutional layers have the potential to perform both at the same time. Note that we do not support operations that change the spatial dimensions by other amounts (for instance, strided convolutions without padding), which does not materially limit our search space; in fact, all of the base architectures considered in \cref{sec:case_studies} respect this condition.

The space of all shape properties $\mathcal{V}^m_S$ of dimension $m \ge 1$ consists of all $m$-tuples of positive integers, which specify the shape of an $m$-dimensional tensor: the first entry is the batch dimension (which is never changed); the last entry is the channel dimension; and all intermediate entries are the spatial dimensions. 

\begin{definition}
The following function $d^\alpha_S(A, B)$ is an abstract distance on the shape property:
\begin{align}%
    d^\alpha_{channel}(A, B) &:= 
    \begin{cases}
    1,& \text{if } a_{channel} \,\ne \, b_{channel}\\
    0& \text{otherwise}
    \end{cases} \\
    d^\alpha_{spatial}(A, B) &:= 
    \begin{cases}
    \sum_{i=1}^{m-2} a_i / b_i,& \text{if } b_i \text{ divides } a_i \quad \forall i\\
    \infty& \text{otherwise}
    \end{cases} \\
    d^\alpha_S(A, B) &:= d^\alpha_{channel}(A, B) + d^\alpha_{spatial}(A, B)
\end{align}
where $A = (batch, a_1, \ldots, a_{m-2}, a_{channel}) \in \mathcal{V}^m_S$ and $B = (batch, b_1, \ldots, b_{m-2}, b_{channel}) \in \mathcal{V}^m_S$
\end{definition}

Next we show that $E_s$ is a uniform covering. Indeed, it is always possible to decrease $d^\alpha_S$ by appending one of two simple primitives: if any spatial dimensions are not equal, a pooling operator $p$ with an appropriately chosen window yields: 
\begin{align}
    d^\alpha_{spatial}(e_{pool, S}^\alpha(A), B) &< d^\alpha_{spatial}(A, B) \\
    d^\alpha_{channel}(e_{pool, S}^\alpha(A), B) &= d^\alpha_{channel}(A, B)
\end{align}
If $d_{channel} > 0$ then, a dense layer $e_{dense}$ achieves:
\begin{align}
    d^\alpha_{spatial}(e_{dense, S}^\alpha(A), B) &= d^\alpha_{spatial}(A, B) \\
    0 = d^\alpha_{channel}(e_{dense, S}^\alpha(A), B) &< d^\alpha_{channel}(A, B)
\end{align}

\begin{theorem}
\label{thm:shape_uniform}
The set of simple primitives $E_s$ is a uniform abstract covering for the shape property with respect to $d^\alpha_S$.
\end{theorem}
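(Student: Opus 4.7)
The plan is to verify Definition \ref{def:uniform-covering} (transported to the abstract setting) for $E_s$ and the distance $d^\alpha_S$: for every pair of shape values $A, B \in \mathcal{V}^m_S$ with $0 < d^\alpha_S(A, B) < \infty$, exhibit a primitive $e \in E_s$ whose abstract interpretation $e^\alpha_S$ decreases $d^\alpha_S(\,\cdot\,, B)$ by at least a fixed constant $\epsilon > 0$. Since $d^\alpha_S = d^\alpha_{channel} + d^\alpha_{spatial}$ with both summands non-negative and $d^\alpha_{channel} \in \{0, 1\}$, I would split into two cases based on whether the spatial component is positive; the finite-versus-infinite case for $d^\alpha_{spatial}$ is handled automatically by the covering definition, which only demands progress on feasible inputs.

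In the spatial case, assume $d^\alpha_{spatial}(A, B) > 0$ and finite. Then $b_i$ divides $a_i$ at every spatial index, and some $i_0$ satisfies $r := a_{i_0}/b_{i_0} \ge 2$. Let $p$ be a prime divisor of $r$, and take $e$ to be a pooling primitive that downsamples dimension $i_0$ by factor $p$. Under $e^\alpha_S$, $a_{i_0}$ becomes $(r/p)\,b_{i_0}$, which is still a positive multiple of $b_{i_0}$, so $d^\alpha_{spatial}$ remains finite and drops by $r - r/p = r(p-1)/p \ge 1$. Pooling does not touch the channel axis, so $d^\alpha_{channel}$ is unchanged, giving a total decrease of at least $1$ in $d^\alpha_S$.

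In the channel case, $d^\alpha_{spatial}(A, B) = 0$ forces every spatial coordinate to already agree, so the only contribution is $d^\alpha_{channel}(A, B) = 1$. Apply a dense primitive with output width $b_{channel}$; since dense layers in $E_s$ preserve spatial dimensions, $d^\alpha_{spatial}$ stays at $0$ while the channel coordinate becomes $b_{channel}$, sending $d^\alpha_{channel}$ to $0$ and decreasing $d^\alpha_S$ by exactly $1$. Setting $\epsilon = 1$ then suffices uniformly across both cases. The main subtlety I anticipate is in the spatial case: one must pool by a factor dividing $r$ rather than a fixed size such as $2$, since for odd $r$ a naive halving would make $b_{i_0}$ stop dividing the new $a_{i_0}$ and push $d^\alpha_{spatial}$ to $\infty$, violating the covering condition outright; selecting a prime divisor of $r$ is the cleanest way to sidestep this while simultaneously guaranteeing the integer-valued progress needed for uniformity.
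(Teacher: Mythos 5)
Your proof is correct and follows essentially the same route as the paper: a case split in which an appropriately-windowed pooling operator makes integer progress on $d^\alpha_{spatial}$ without touching the channel term, and a dense layer zeroes out $d^\alpha_{channel}$ without touching the spatial term, with uniformity following because the distance is integer-valued. You are in fact more explicit than the paper about why the pooling window must divide the current ratio $a_{i_0}/b_{i_0}$ (the paper only says ``appropriately chosen'') and you verify $\epsilon = 1$ directly rather than invoking the paper's separate lemma that any covering for a discretely-valued distance is automatically uniform.
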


\subsubsection{Example}

Consider the running example in \cref{fig:overview}. The goal is to synthesize a subgraph that satisfies the mutated properties. \cref{fig:example:synthesis} illustrates how our synthesis algorithm makes progress toward this objective at each step by appending an operation that reduces the abstract distance to the target properties (in red). The first operation BatchNorm makes progress on the depth properties. In the second step, the nonlinear activation function SiLU changes neither the mixing nor shape properties, but increases depth. This process repeats until the synthesized subgraph reaches a total distance of 0.

\begin{figure}[t]
    \centering
    \includegraphics[width=\linewidth]{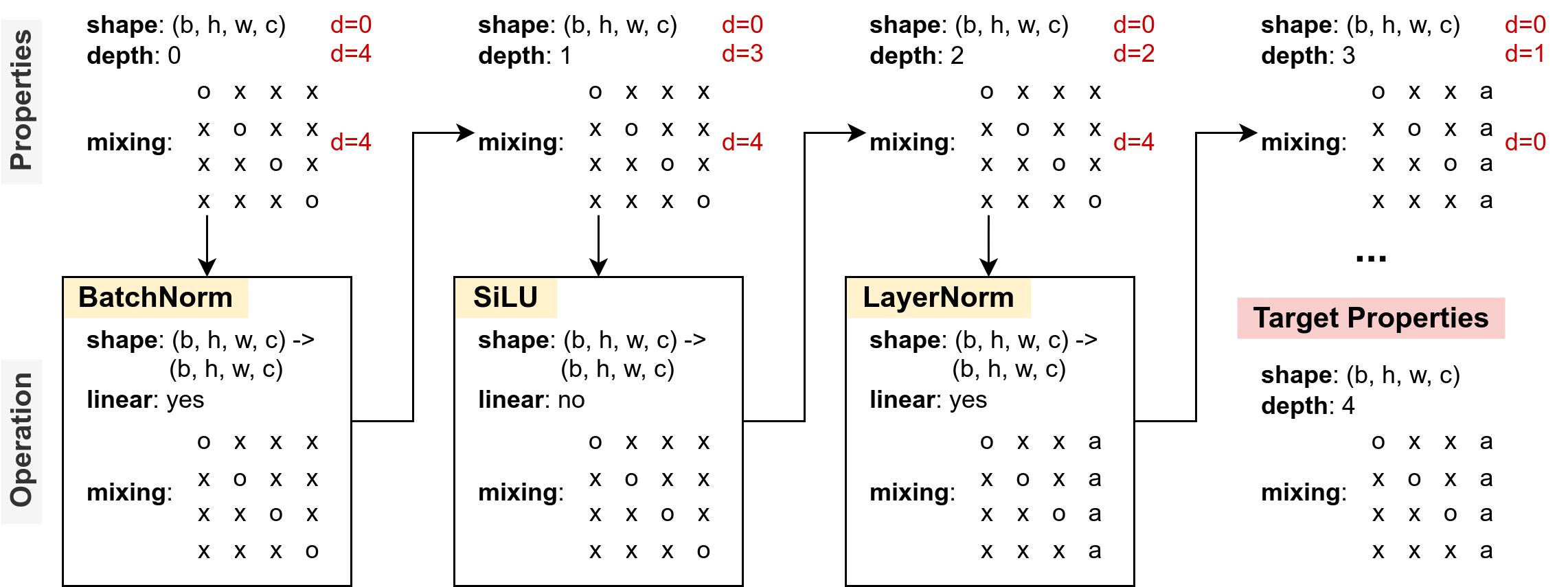}
    \caption{Illustration of the synthesis of the first three operations in the subgraph from \cref{fig:overview}. The progressive synthesizer adds an operation that reduces the distance to the target properties in each step. The annotation ``d='' on the right of each property indicates the distance to the target property.} 
    \label{fig:example:synthesis}
\end{figure}

\subsection{Compressing the Search Space}

Many transformations are indistinguishable with respect to the abstract properties. For instance, in our case, a \mbox{2~$\times$~2} average pool and a \mbox{3~$\times$~3} max pool have the same mixing, depth, and shape properties. Hence, during synthesis it is sufficient to measure the progress of a single pooling operator, which is representative of the progress for any other pooling operator. Formally, given two transformations $s$ and $t$, if their abstract interpretations are equivalent, i.e., $s^\alpha = t^\alpha$, then we can safely remove one of them from the covering set $T$ while preserving the safety and performance of the progressive synthesis algorithm.
Our final synthesis algorithm considers only 87 out of the  1,402 total primitive transformations during synthesis. The remaining 1,315 operations consist largely of different settings for spatial kernels (e.g., pooling or convolutions) and feature groups (e.g., group normalization or grouped convolutions).

After a satisfying subgraph has been synthesized, we perform a second pass to randomly replace each operation $e$ in the synthesized subgraph with an operation from the set originally represented by $e$; by construction, the replacements preserve the desired properties of the subgraph. For example, we can safely change the momentum of the first BatchNorm synthesized in \cref{fig:example:synthesis}. To ensure the final subgraph is balanced, we only use operations of the same type as representatives (e.g., both SiLU and ReLU are in the 87 operations, even though they have equivalent abstract interpretations).

\subsection{Extensions Beyond Sequential Subgraphs}
\label{sec:multi_in_out_syn}

This section briefly describes how to extend our synthesis algorithm beyond sequential subgraphs composed of simple operations. Supporting mutations of subgraphs with multiple inputs and outputs proceeds in three steps. First, we identify the reshape and binary operations in the subgraph, and decompose the subgraph into a set of sequential subgraphs composed of simple operations connected by the reshape and binary operations. Second, we can mutate the structure of the reshape and binary operators. Finally, we synthesize a replacement for each of the sequential subgraphs using our aforementioned techniques.

The details of the structure mutation are deferred to \cref{appendix:structure_mutation}. The key is ensuring that the mutated structure remains feasible (e.g., the resulting structure should still be connected). In practice, most common architectures consist of a sequential backbone, augmented by basic residual connections. Furthermore, reshaping occurs relatively infrequently and is usually only applied at the end of the network to flatten the features for extracting logits. Hence, our implementation does not perform the second step (structure mutation).

\section{Implementation}

\paragraph{Evolutionary search.} We initialize the evolutionary search with a given base architecture (e.g., ResNet-50). At each iteration, we randomly select a single individual to mutate; individuals with higher fitness are more likely to be chosen. This individual is trained from scratch and its fitness is evaluated. This procedure is repeated for a fixed number of iterations, at which point we terminate the search and return the population of evolved individuals. For more details on our evolutionary search procedure with multiple objectives, see \cref{appendix:evol}.

\paragraph{Mutating Architectures.} We perform our architecture search at the level of blocks. Each base architecture is split into predefined blocks, where each block consists of mutable components, possibly wrapped within a immutable residual connection. For instance, we have the following block types in the ResNet-50 architecture: (1) bottleneck block with a \mbox{2~$\times$~2} spatial down-sampling and (2) bottleneck block without any spatial down-sampling. We decompose the Vision Transformer as follows: (1) self attention block, and (2) MLP block. We also use a small 2-block CNN without residual connections; each block consists of a convolution that doubles the feature depth, a ReLU activation, and an Average Pool that reduces the spatial resolution by half. Hence, all our models consist of a sequential stack of blocks with some residual connections.

\paragraph{Mutating Blocks.} To mutate an architecture, we select a block at random, then select a subgraph within that block at random. We then synthesize a replacement subgraph with mutated properties and replace the subgraph into the block. Each mutation is also applied at random to other blocks of the same type within the model. In addition to mutating individual blocks, we also include block deletion and block duplication in our search space. When duplicating a block, we select a block at random and insert a copy into another random location in the model.

\paragraph{Mutating properties.} In general, we mutate properties in the direction of relaxing constraints: since a synthesized program $p$ satisfies a property value $v$ if $\alpha(p) \ge v$, without any mutations the synthesis is bound to only return subgraphs (and hence architectures) with the same property values or greater. Relaxing properties also ensures that the synthesis task is feasible, since the original subgraph automatically satisfies the mutated properties. The only exception is the depth property, which is preserved with 50\% probability, otherwise we mutate it up or down by up to 2 uniformly at random. We mutate the shape property by removing the output shape requirement with 50\% probability. We mutate the mixing property by removing pairings with 50\% probability.

\paragraph{Synthesis.} We implement stochastic progressive synthesis as described in \cref{sec:stoch_prog_syn,sec:multi_in_out_syn}. At each step, an operation is selected at random with probability proportional to $1 / (1 + d)$, where $d$ is the distance to the desired properties after appending the operation. Once the synthesized subgraph reaches the size of the original subgraph, we switch to the greedy algorithm, which selects an operation that achieves the minimum distance. If synthesis does not complete after two additional steps, we consider synthesis to have failed and report no individual for that generation.

\paragraph{Environment.} We use JAX \citep{jax2018github} to train and evaluate architectures on TPU v2 (for CIFAR-10) and v3 (for ImageNet) accelerators. Our implementation is $\sim$10,000 lines in Python.

\section{Experimental Results}
\label{sec:results}

This section presents results from experiments that evaluate the ability of our approach to generate interesting novel architectures (\cref{sec:case_studies}), assess the importance of key components in our approach (\cref{sec:ablation}), and compare against prior approaches (\cref{sec:existing_approches}). In each experiment, we initialize the evolutionary search with a base model (displayed as yellow stars in the plots\footnote{We evaluate the base model twice to reduce the noise from the stochasticity of the training procedure.}). We run the search for a fixed number of trials; each trial involves training a proposed candidate for a fixed number of epochs to evaluate its accuracy on a given dataset. The search trades off the primary objective of accuracy (higher is better) against some secondary objectives: FLOPS (lower is better), number of parameters (lower is better), and training throughput (images per second, higher is better). \cref{tab:experiment_setup} in \cref{appendix:setup} describes the setup for each experiment in more detail.

\subsection{Case Studies}
\label{sec:case_studies}

\subsubsection{CIFAR-10}
\label{sec:cifar10}

Our first set of experiments focuses on the CIFAR-10 dataset with 50,000 training images and 10 classes. We are specifically interested in how rapidly our evolutionary search progresses when seeded with models that leave ample room for improvement. Our first experiment seeds a small 2-layer CNN with residual connections, which is suboptimal in terms of accuracy (80\%) but otherwise has good performance characteristics (low FLOPS, parameter count). Our method discovers a model with substantially increased accuracy, from 81.3\% to 89.6\%, while also decreasing the number of parameters slightly.

We also run a second search seeded with the ResNet-34 architecture. This base model has much higher accuracy on CIFAR-10 (91.5\%) but is less efficient. Figure \ref{fig:baseline_resnet34} displays the results. Keeping accuracy constant, our search discovers one model that decreases the FLOPS by 68.7\%, and a second model that reduces parameters by more than 96\% (from 21.3 to 0.8 million parameters). A third model increases the accuracy to 93.0\% using only 5.8 million parameters, a 73.1\% reduction. These results support the hypothesis that our approach is able to make rapid progress and discover substantially improved models on simpler tasks.

\begin{figure*}[b]
    \centering
    \includegraphics[width=.8\linewidth,trim={0 0 0 2cm},clip]{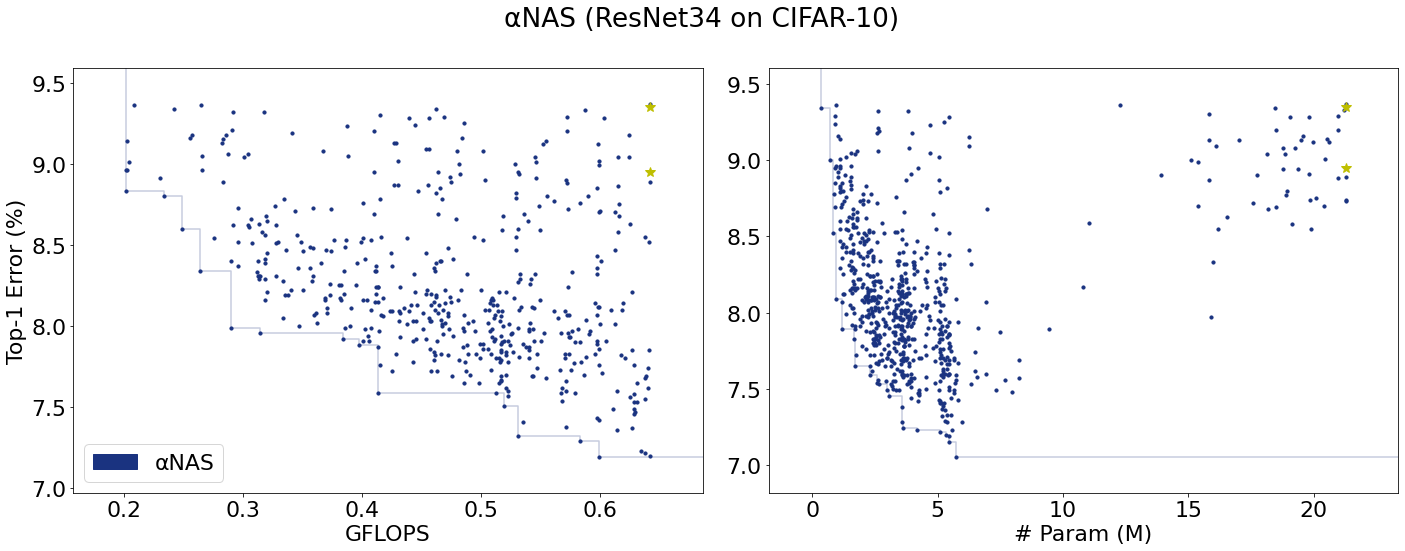}
    \caption{Evolving a ResNet-34 model on CIFAR-10 using \ourname{}. Yellow stars represent the base architecture.} 
    \label{fig:baseline_resnet34}
\end{figure*}

\subsubsection{ImageNet}

We next present results using the ILSVRC 2012 version of the ImageNet dataset with 1.2 million training images and 1,000 classes. Our objective is to test whether our approach can generate improvements in a more difficult setting.

\begin{figure}[t]
    \centering
    \includegraphics[width=.8\linewidth,trim={0 0 0 2cm},clip]{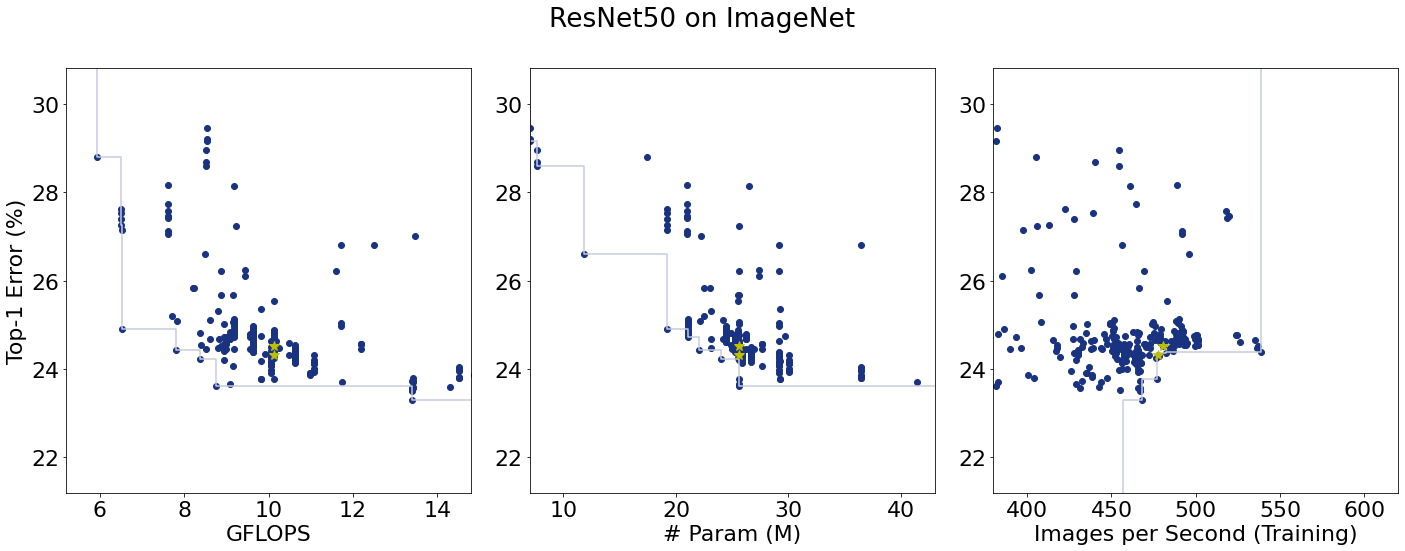}
    \caption{Evolving a ResNet-50 model on ImageNet. We optimize for increased accuracy, decreased FLOPS, increased training speed. For the same accuracy, we have decreased GFLOPS by 23\% and increased the training speed by 12\%.} 
    \label{fig:resnet50_imagenet}
\end{figure}

\begin{figure}[t]
    \centering
    \includegraphics[width=.8\linewidth,trim={0 0 0 2cm},clip]{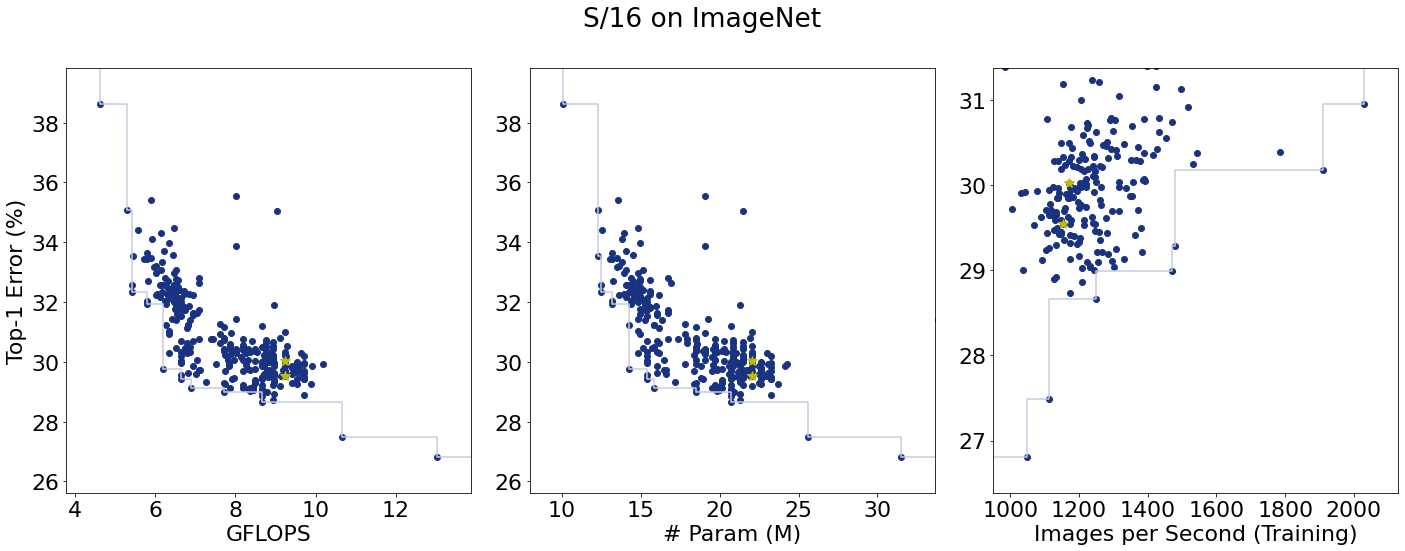}
    \caption{Evolving a ViT-S/16 model on ImageNet. We optimize for increased accuracy, decreased FLOPS, and increased training speed. Our search returns a model with the same accuracy using 30\% fewer FLOPS and parameters.} 
    \label{fig:s16_imagenet}
\end{figure}

\begin{figure*}[t]
    \centering
    \includegraphics[width=.8\linewidth]{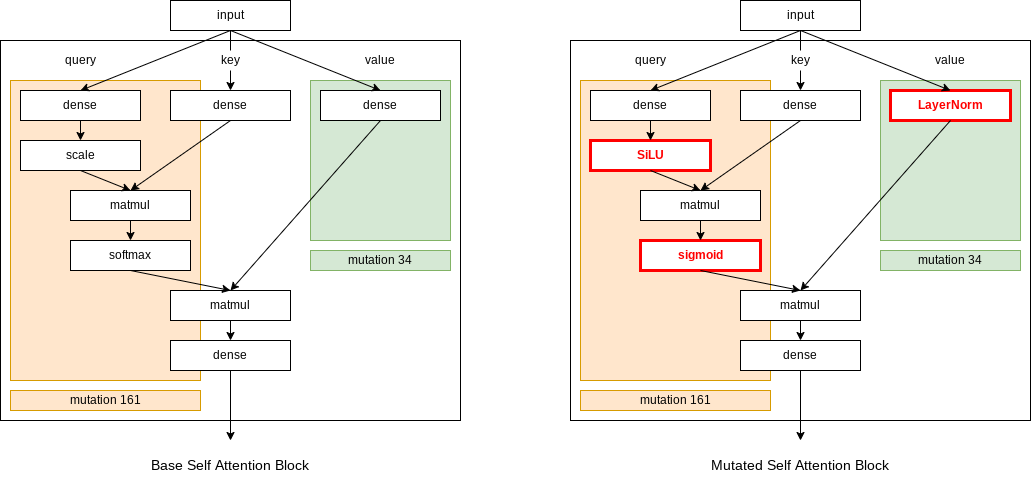}
    \caption{Two mutations of the self-attention block in the ViT architecture. The final model is a strict improvement over the base architecture in all objectives.} 
    \label{fig:vit_mutation}
\end{figure*}

\paragraph{ResNet-50}
The first experiment is seeded with the standard ResNet-50 architecture \citep{he2015deep}. The ResNet architecture has seen many variants since being introduced in 2015, and its main innovation (the residual connection) remains a central feature of many architectures today. It remains a common benchmark for many vision tasks.
Figure \ref{fig:resnet50_imagenet} displays our results. After evaluating 400 individuals, \ourname{} discovers a model that has the same accuracy but 23\% fewer FLOPS (from 10.1 to 7.8 GFLOPS) and 13.7\% fewer parameters (from 25.6 to 22.0 million); and a second model which increases the training speed 12\% (from 480 to 538 images per second per core) with slightly better accuracy. We also discover several models with nearly 80\% accuracy (a 4.5\% increase, not shown in the plot), though at a nearly 15-fold increase in both FLOPS and parameter count.

\paragraph{ViT-S/16}

The second experiment is seeded with the S/16 variant of the Vision Transformer architecture (ViT) \citep{dosovitskiy2021image}. ViT is a relatively new architecture, and many variants have been proposed recently to improve the base architecture in a number of ways, such as accuracy, training speed, inference speed, and sample efficiency. We are interested in seeing whether \ourname{} can automatically discover similar improvements.

Figure \ref{fig:s16_imagenet} plots our results. After evolving a total of 400 individuals, \ourname{} discovers a model as the 278th individual that decreases FLOPS by 28\% and parameter count by 30\% (with a negligible increase in accuracy). The mutation of the MLP block in \cref{fig:overview} is present in this model. The resulting block has almost no trainable parameters and very few FLOPS. Similar mutations of the MLP block are common throughout the population.
Such a mutation is nearly impossible to perform using single-operation mutations, since it requires eight mutations and simultaneously increasing depth while replacing linear operations with cheaper ones. If a linear operator is replaced before the depth is increased, the accuracy could suffer; but if depth is increased before replacing the linear operator, the parameter count and FLOPS will increase. In both cases, the intermediate model is worse than the parent, making it unlikely to be selected for further mutation.

We present another series of mutations in \cref{fig:vit_mutation} that increases accuracy by nearly 1\% and improves FLOPS, parameter count, and training speed by 6--7\%. Mutation 34 appears in nearly 20\% of the evolved population, as it strictly improves every metric for the base architecture.
Mutation 161 replaces the multiplicative scaling with a self-gating SiLU unit, and also replaces the Softmax with a cheaper but related Sigmoid. These analyses suggest that \ourname{} is able to discover a wide range of novel architectural improvements.

\paragraph{EfficientNet-B0}

Our final experiment is seeded with the EfficientNetV1-B0 architecture \citep{efficientnet}. EfficientNet is a family of extremely parameter- and FLOP- efficient models, and presents a strong baseline produced by existing NAS techniques. We are interested in evaluating whether our methods can deliver an improvement over a state-of-the-art model. The results after training each candidate for 90 epochs are displayed in \cref{fig:efficientnet_imagenet}. Although EfficientNet is indeed a more difficult architecture to improve on, \ourname{} is still able to discover a model (named E') which improves both FLOPS and parameter count by 7--8\% with similar accuracy. Another model (named E'') increases accuracy by nearly 2\% with fewer parameters, but at a 3$\times$ increase in FLOPS.

We also selected a handful of promising models after 90 epochs to train for the full 350 epochs, in order to match the evaluation in the original EfficientNet paper. Under this setting, the base EfficientNet-B0 architecture reports a final accuracy of 75.2\%. Model E' has a final accuracy of 74.3\% (but maintains the 7--8\% improvement on FLOPS and parameter count). Model E'' outperforms the base architecture with fewer parameters and a higher final accuracy of 76.9\%.

\begin{figure}[t]
    \centering
    \includegraphics[width=.8\linewidth,trim={0 0 0 2cm},clip]{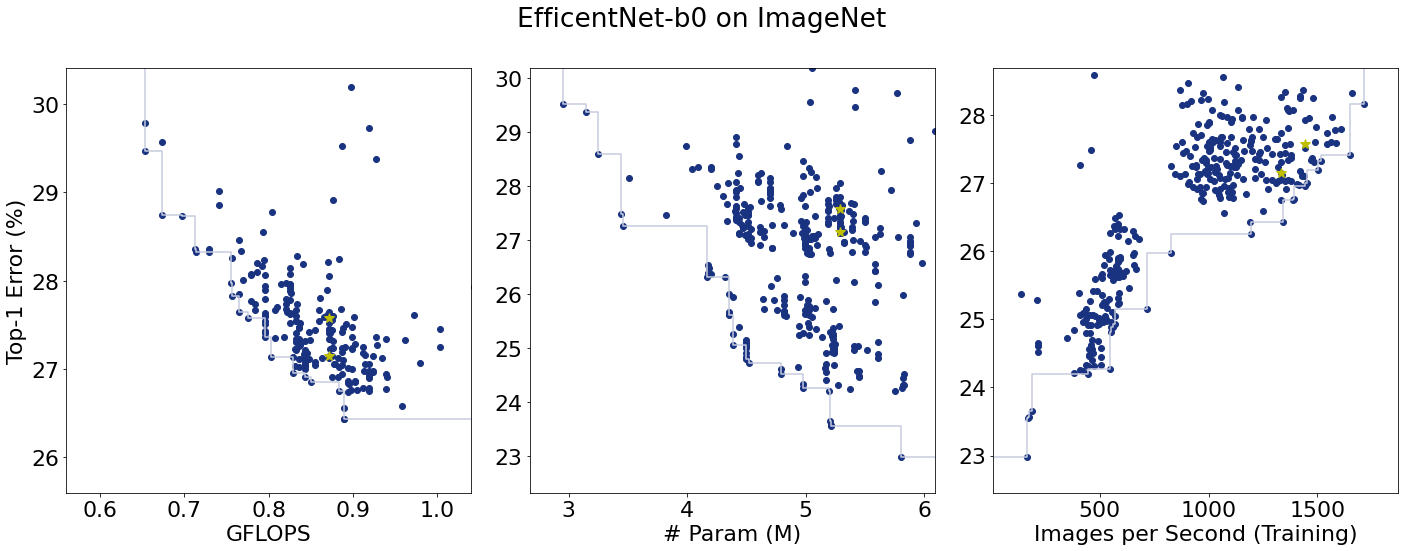}
    \caption{Evolving an EfficientNet-B0 model on ImageNet. We optimize for increased accuracy, decreased FLOPS, and increased training speed. We have decreased GFLOPS by 9\% and parameter count by 20\% for comparable accuracy and speed.} 
    \label{fig:efficientnet_imagenet}
\end{figure}

\subsection{Ablation Studies}
\label{sec:ablation}

This section evaluates the importance of key elements in the design of \ourname{}. For these experiments, we used ResNet-34 as our base model and CIFAR-10 for the dataset.

\subsubsection{Program Properties}
\label{sec:ablation:properties}
To evaluate the importance of program properties, we compared \ourname{} with a similar strategy that replaces a randomly selected subgraph with a random subgraph (ignoring program properties), called the \emph{random subgraph strategy}. On average, both the original and replacement subgraphs are of size 3. Unlike \ourname{}, this strategy searches directly in the concrete space of architectures, without performing property inference and guided synthesis. This study tests the hypothesis that using abstract properties to guide the search enables \ourname{} to perform larger mutations while still evolving high-quality candidates. As shown in \cref{fig:baseline_vs_random_resnet34}, \ourname{}'s accuracy-parameters Pareto curve completely dominates the random subgraph strategy's by a wide margin; specifically, the random subgraph strategy struggles to evolve models with high accuracy, and in fact \textit{makes next to no progress on improving the base architecture}. These results confirm our intuition that (1) small steps in the abstract search space can be effectively translated into large steps in the concrete space while maintaining good performance, whereas (2) large random steps directly in the concrete space are dominated by poor quality architectures. Additional results in \cref{appendix:ablation:random_subgraph} further demonstrate that the performance of the random search degrades rapidly as the size of the mutations increases.

We also evaluated the importance of each individual program property by modifying \ourname{} to use only a single property throughout the search. Our results show that all three program properties are important, as using them together outperforms using any individual property alone. More details can be found in \cref{appendix:ablation:properties}.

\subsubsection{Progressive Synthesis Algorithm}
\label{sec:ablation:synthesis}

Next, we evaluate the benefit of having an efficient synthesis algorithm. In this experiment, we compare with a na\"ive enumerative algorithm that simply enumerates all possible subgraphs of a given size in a random order, starting from a size of 1 and increasing the size until it finds a subgraph that satisfies the target program properties. As expected, the enumerative algorithm is highly inefficient, taking an average of 1,932 seconds to synthesize a satisfying subgraph, compared to only 67 seconds when using our progressive synthesis algorithm; for context, training a ResNet-34 model to evaluate its accuracy in our setting takes roughly 600 seconds, so enumerative synthesis introduces over 300\% overhead. Efficiency notwithstanding, the \emph{quality} of candidates explored by the two variants are similar when evaluating the same number of candidates (see \cref{fig:baseline_vs_enum_resnet34} in \cref{appendix:ablation:synthesis}), which suggests that the constraints imposed by the abstract properties are largely responsible for the quality of our search.

\begin{figure*}[t]
    \centering
    \includegraphics[width=.8\linewidth,trim={0 0 0 2cm},clip]{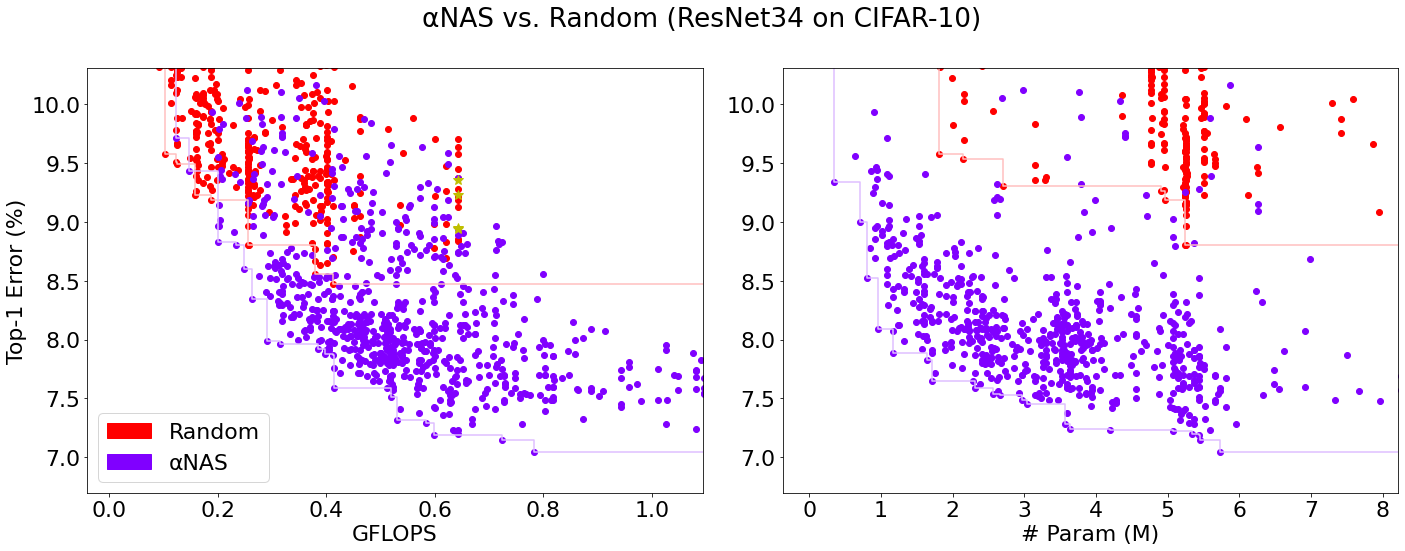}
    \caption{Evolving a ResNet-34 model on CIFAR-10 using \ourname{} vs. a random subgraph mechanism.} 
    \label{fig:baseline_vs_random_resnet34}
\end{figure*}

\subsection{Comparison with Existing NAS Techniques}
\label{sec:existing_approches}

This section compares \ourname{} against prior approaches that do not require defining a carefully-crafted structured search space. In particular, we adopt the strategies used in Primer \citep{so2021primer} and AutoML-Zero \cite{automl-zero}, two state-of-the-art NAS techniques for unstructured search spaces. These approaches search directly in the concrete space of architectures, similar to the random subgraph strategy in \cref{sec:ablation:properties}, but compensate by using much smaller mutations. Primer mutates on the basis of individual operations one at a time, by either deleting, inserting, or swapping operations; or randomly changing one parameter value. 
AutoML-Zero considers an additional mutation that randomizes an entire component, essentially combining mutations used in Primer and the random subgraph strategy. 

For clarity, we emphasize that our evaluation is only meant to imitate their search mechanisms (i.e., mutation strategies) within our evolutionary framework. In particular, both Primer and AutoML-Zero focus on slightly different tasks and hence use different program representations compared to ours. They also implement other advanced techniques, which are orthogonal to the contributions in this work, to reduce candidate evaluation time (e.g., early stopping). For instance, AutoML-Zero was originally designed to evolve both the neural network architecture and optimization algorithm jointly from scratch. Since this strategy does not yield results comparable to even our baseline (i.e., ResNet-34 on CIFAR-10), in order to establish a more fair comparison, we instead apply the respective search mechanisms to evolve architectures starting from the same base architecture.

According to the results in \cref{fig:baseline_vs_primer_resnet34,fig:baseline_automl_zero_resnet34} in \cref{appendix:automl_zero}, \ourname{} discovers significantly more Pareto optimal models than both Primer and AutoML-Zero.
Due to using only small mutations, Primer requires many more evaluations to make large changes to the base architecture, leading to slower exploration compared to AutoML-Zero and \ourname{}; the slow exploration is visible in \cref{fig:baseline_vs_primer_resnet34} as Primer does not discover architectures with GFLOPS lower than 0.3.
The distribution of the candidates explored by AutoML-Zero is between Primer's and the random subgraph strategy’s, which is to be expected as AutoML-Zero's mutations are the union of those two's.
In summary, these results further support the benefit of using properties to guide the search over an abstract design space, as \ourname{} delivers a consistent and significant advantage over existing strategies that search directly over the concrete space of actual architectures.

\begin{figure*}[t]
    \centering
    \includegraphics[width=.8\linewidth,trim={0 0 0 2cm},clip]{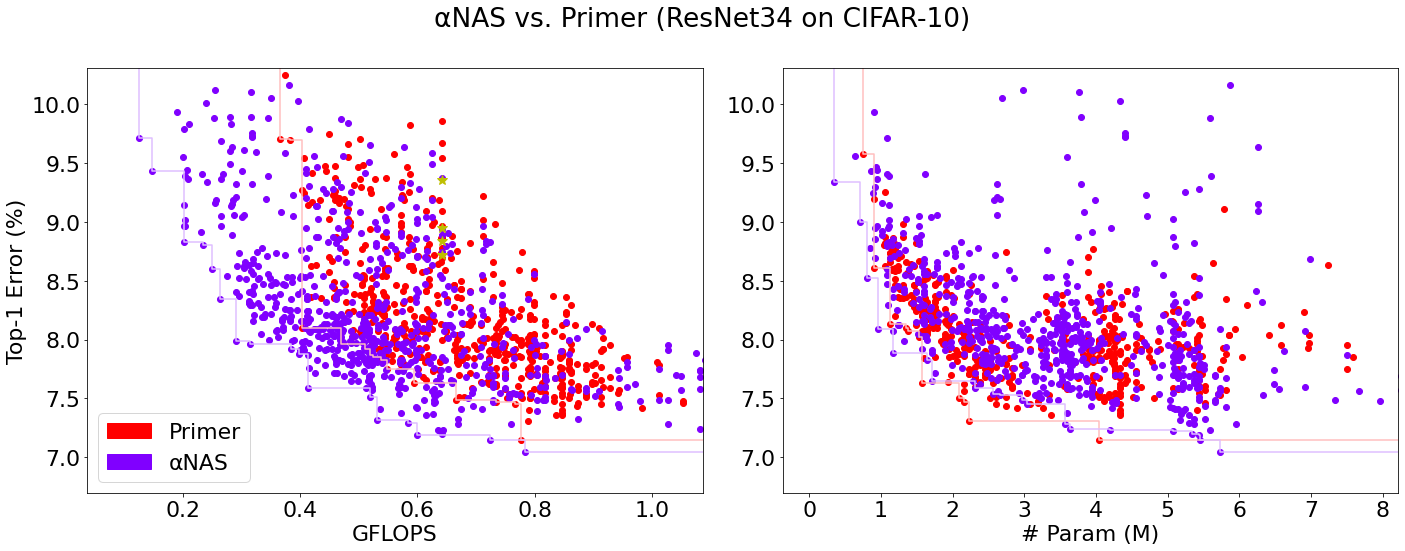}
    \caption{Evolving a ResNet-34 model on CIFAR-10 using \ourname{} vs. the Primer search mechanism.} 
    \label{fig:baseline_vs_primer_resnet34}
\end{figure*}

Additionally, it is worth comparing our result against \citet{turner-asplos}, which is another approach that applies techniques in program languages (i.e., compiler optimizations) to NAS. As we cannot reimplement \citet{turner-asplos}'s compiler optimizations easily in our framework, we compare our results with those reported in their paper directly. On the CIFAR-10 dataset, starting from ResNet-34, \citet{turner-asplos} is able to compress the model by 2--3$\times$ without accuracy loss after proposing 1000 candidates (they reject approximately 90\% of their candidates without training using a heuristic that estimates the likelihood a candidate architecture will have significantly degraded accuracy). In contrast, \ourname{} is able to compress the same model by over 26$\times$ (\cref{sec:cifar10}) within 800 candidates. Finally, we note that the architectures produced by \citet{turner-asplos} are less interpretable than ours as their optimization occur at the level of loop transformations, and hence the resulting architectures cannot be expressed in terms of the standard human-readable neural network primitives.

\section{Related Work}
\label{sec:related_work}

\subsection{Neural Architecture Search}

This section describes other existing NAS approaches related to our work that have not been discussed previously.
Somewhat similar to our property-guided NAS, some approaches restrict their search spaces to only mathematically structure-preserving transformations, e.g., only mutating the operation's parameters or changing network connections \cite{network-morphism,efficient-transformation}. Their main motivation is to be able to transfer already-trained weights from an existing architecture to new architectures, so as to accelerate the candidate evaluation process; in contrast, our properties are motivated by the semantics of deep neural networks. Their properties are also more strict than ours, thus limiting their search spaces.

Our work is one of the first to apply techniques programming languages to NAS. A prior work by \citet{turner-asplos} combines non-semantics preserving transformations from NAS with semantics preserving transformations from compiler optimizations. In contrast, our work exploits other higher-level program languages techniques including program synthesis, program abstraction, and abstract interpretation.

\subsection{Optimizing Neural Networks for Efficiency}

Several techniques such as quantization \cite{quantize-survey,quantize-banner,quantize-wang} and pruning \cite{pruning-hassibi,pruning-lecun,pruning-luo,pruning-frankle} aim to improve execution time efficiency. Note that these approaches are orthogonal to our work in the sense that one can use our technique to generate new architectures, and then prune or quantize the weights to compress the model once trained. Similarly, the primary benefit of auto-parallelization \cite{flexflow,Gpipe2018cvpr,PipeDream2019sosp}, graph substitutions \cite{metaflow,taso,tensat}, and other tensor compiler optimizations \cite{tvm_osdi_18,autohalide,tensor-comprehensions,taco,go2020neurips,xtat} is that they generate semantic-preserving optimizations and can be (theoretically) applied to any neural network architecture. In contrast, our method aims to discover semantically novel architectures.

\subsection{Goal-Directed Program Synthesis}

Our key idea of using program properties to guide the synthesis of neural network subgraphs can be classified as goal-directed program synthesis. A number of prior works use abstract properties of partial programs to direct their synthesis procedure \cite{flashmeta,lens-asplos,blaze-popl,scythe-pldi}. The main difference is that we develop a precise notion of distance between partial programs and properties; thus, we can devise a linear-time algorithm to synthesize satisfying programs. Our setting also departs from most program synthesis tasks in that the design of neural networks does not have a strict correctness requirement.

\section{Conclusion and Future Work}

With the increasing complexity of hardware accelerators with diverse performance characteristics, a large variety of ML compilers that target them, and a wide range of well-known neural network architectures, finding the right architecture to solve a task is challenging and often requires a deep knowledge of the full stack. Techniques like neural architecture search (NAS) that automate the exploration of this complex design space are therefore likely to be increasingly more important. In this work, we identified some of the fundamental limitations of current approaches for NAS, and proposed a principled approach for overcoming these limitations using a novel property guided synthesis strategy. We demonstrate that a simple evolutionary search over the resulting abstract search space can produce significantly improved architectures on several widely used benchmarks for image recognition, compared to the base architecture used to initialize the search. One possible direction for future work would be to to initialize the search with multiple base architectures, and let the evolutionary mechanism decide which individuals to optimize---possibly with more sophisticated evolutionary techniques such as crossover. Additionally, as our approach is broadly applicable to most (if not all) deep neural net architectures and domains, we also leave to future work the evaluation of other interesting tasks such as language modeling or audio speech recognition.

\begin{acks}
We would like to thank Chen Liang, David So, Esteban Real, Hanxiao Liu, Hyeontaek Lim, Omar Costilla-Reyes, Martin Abadi, Mike Burrows, Ras Bodik, Rishabh Singh, and Yanqi Zhou for their help and insightful feedback during the development of this project.
\end{acks}

\bibliography{ref}

\appendix
\newpage

\section{Implementation Details}

\subsection{Primitive Operations}
\label{appendix:primitives}

\cref{tab:primitives} presents the set of primitive operations we currently support in our framework. Note that our framework supports evolving models with operations outside the table, but our subgraph selection mechanism will only select subgraphs containing operations in the table (and our synthesis procedure will only synthesize subgraphs using the operations in the table).

\begin{table}[h]
    \centering
    \footnotesize
    \begin{tabular}{lp{4cm}p{5cm}}
    \toprule
    Name & Description & Parameters \\
    \midrule
    Dense & Dense layer applied to the output features  & output features \\ 
    Convolution & Convolution operator with same padding & output features, strides (either 1 or equal to kernel), kernel shape (always square) \\
    Grouped Convolution & Grouped convolution operator with same padding & output features, strides (either 1 or equal to kernel), kernel shape (always square), feature groups > 1\\
    Dilated Convolution & Dilated convolution operator with same padding & output features, strides (either 1 or equal to kernel), kernel shape (always square), input dilation > 1 \\
    Add & Adds two input tensors & \\
    Scalar Multiply & Multiply input by a scalar & scalar value \\
    ReLU & ReLU activation & \\
    GeLU & GeLU activation & \\
    SiLU & SiLU activation & \\
    Sigmoid & Sigmoid activation & \\
    Softmax & Softmax layer & \\
    Batch Norm & Batch normalization & \\
    Layer Norm & Layer normalization & \\
    Group Norm & Group normalization & number of groups \\
    Dropout & Dropout layer & dropout rate \\
    Average Pool & Average pooling layer & window dimension (always square) \\
    Max Pool & Max pooling layer & window dimension (always square) \\
    \bottomrule
    \end{tabular}
    \caption{Primitive operations that we use to synthesize subgraphs}
    \label{tab:primitives}
\end{table}

\subsection{Structure Mutations for Subgraphs with Multiple Inputs and Outputs}
\label{appendix:structure_mutation}

This section provides a high level description of how to support structure mutations for subgraphs with multiple inputs and outputs. Recall that our synthesis algorithm first decomposes a subgraph with reshapes and binary operations into sequential subgraphs connected by reshape and binary operations. When then mutate each sequential subgraph individually.

\subsubsection{Reshape Operators}

The main challenge with synthesizing sequential subgraphs containing reshape operators is that in satisfying the mixing property, we can no longer rely on Corollary \ref{cor:matrix_semiring}, which is defined only for square matrices. For instance, if the input has 3 dimensions, but the output has 4 dimensions, it is not clear a priori how to associate the input dimensions with the output dimensions without fixing in advance the sequence of reshape operations.

One approach is to always synthesize reshape operators in pairs. Every reshape operator $t$ (e.g., transpose, flatten) has a corresponding inverse $t^{-1}$ such that $t \circ t^{-1} = \text{id}$. Hence, we can always ``undo'' a reshape operator if it makes the synthesis problem infeasible. 
After applying the reshape operator $t$, we can simultaneously transform the target property via $t^{-1}(u)$ and continue with synthesis---this new problem is still guaranteed to be feasible (and will have the same distance as before). At a later point, we then need to reverse the transformation by applying $t^{-1}$ to the program and reverting back to $u$ as the target property. This guarantees that the original properties are satisfied.

In some cases, we may have that $d(t(p), u) < \infty$, i.e., the synthesis problem is feasible without reversing the reshape operation. In this scenario, it is safe to continue synthesis without reversing the transformation; however, we should also limit the number of such reshape operators as there is no guarantee that $d(t(p), u) \le d(p, u)$ (which violates the assumption of progress).

\subsubsection{Support for Binary Operations}

We next describe how to support the insertion and deletion of binary operations. Namely, whenever we choose to synthesize a binary operation as the next transformation, we must also synthesize something to fill the other input slot before proceeding. To accomplish this, we first select a random starting point, which is either a previously generated operation, or an input to the subgraph, with the requirement that the shape property from the selected starting point to the binary input must be feasible. From here, a recursive call to the sequential synthesizer can generate a subgraph that connects the selected starting point to the missing input of the binary operation.

To delete a binary operator, we first check whether the operator is necessary by replacing one of its inputs with a placeholder (constant) input, and inferring the resulting properties with respect to its output. If the target properties are satisfied, then we can safely replace the binary operation with an identity operation, deleting the unneeded input. Note that one way a binary operation can become unnecessary is if a previous synthesis step already synthesized another binary operation elsewhere that satisfied the associated properties.

Finally, combining the ability to insert and delete binary operations yields the desired mutation of subgraph structures.

\subsection{Evolutionary Neural Architecture Search}
\label{appendix:evol}

In this section, we describe our evolutionary framework for NAS. Our design is based on the evolutionary algorithm of \citet{real2019regularized}, except that we do not perform the age regularization due to only running the evolutionary search for 800 individuals.

In order to support multi-objective optimization, we develop a notion of weighting individuals based on their Pareto optimality. A point is \textbf{Pareto optimal} with respect to multiple objectives if no other point is a strict improvement on all objectives.

We assume that all objectives trade-off against a single primary objective; in our case, the goal is always to maximize accuracy versus the various secondary objectives. We define the \textbf{Pareto weight} of an individual as the shortest distance (in the $\ell_2$ sense) from the individual to the Pareto curve, which is built by linearly interpolating the Pareto-optimal points. Lower Pareto weights correspond to better Pareto optimality; Pareto optimal points having a Pareto weight of 0. 

However, as this distance is not invariant to units, we also perform a normalization in each dimension. To compute the normalization factor, we simply compute the average slope of the full Pareto curve, which is equivalent to the slope of the two endpoints. The idea is that this slope captures an average notion of how difficult it is to trade off between the two objectives.

\begin{algorithm}[h]
\small
\caption{Evolutionary selection mechanism}
\label{alg:selection}
\begin{algorithmic}[1]
\Require set of candidates $M$, primary objective $o_p$, set of secondary objectives $O_s$, percentage $k$
\Ensure candidate to be mutated
  \State Randomly select a secondary objective $o_s \in O_s$
  \State Build the Pareto curve $C$ of the primary objective $o_p$ versus the secondary objective $o_s$
  \State Compute $W = \{w_i \ |\  w_i = $ Pareto weight of $m_i \in M$ with respect to $C\}$
  \State $M_{\text{top}} =$ top $k\%$ of $M$ according to $W$
  \State \Return Random $m \in M_{\text{top}}$
\end{algorithmic}
\end{algorithm}

\Cref{alg:selection} provides a summary of our selection mechanism.
This process is repeated independently each time a new individual is requested for mutation, with the current population of individuals as the input.

\section{Experimental Setup and Additional Experimental Results}
\label{appendix:more_results}

\subsection{Experimental Setup}
\label{appendix:setup}

\cref{tab:experiment_setup} summarizes the setup of our experiments.

For the ResNet architectures (ResNet-34, ResNet-50), we train for 90 epochs using SGD with base learning rate 0.1, momentum 0.9, and weight decay 0.0004. The learning rate warms up over 5 epochs then follows a cosine decay schedule. We also add a nonstandard dense layer between the head and the first block, to project the feature dimension from 3 to 64, so that all the blocks have the same shape signature and hence type (the first block is otherwise a singleton type).

For the ViT-S/16 architecture, we train for 90 epochs using the Adam optimizer, with base learning rate .003, beta1 = 0.9, and beta2 = 0.999. The learning rate warms up over 10,000 batches, then follows a cosine decay schedule down to .00001.

For the EfficientNet-B0 architecture, we follow the same setting as the original paper \cite{efficientnet}, except that we train for 90 epochs. We use the RMSProp optimizer, with a base learning rate of 0.016. The learning rate warms up over 5 epochs, then decays exponentially down to 0.012. We also use exponential moving average with decay 0.9999.

We use standard Inception-style data augmentation for training on ImageNet, and train and test at \mbox{224$\times$224} resolution.

\begin{table}[h]
{\footnotesize
\begin{tabular}{lllcccl}
\toprule
{\bf Experiment} & {\bf Dataset} & {\bf Base Model} & {\bf Accuracy} & {\bf Trials} & {\bf Epochs} & {\bf Secondary Objectives} \\
\midrule
Case Studies & CIFAR-10 & 2-layer CNN & 81.3\% & 400 & 90 & flops, parameter count \\
(\cref{sec:case_studies}) & & ResNet-34 & 91.5\% & 400 & 90 & flops, parameter count \\
\cmidrule{2-7}
             & ImageNet & ResNet-50 &    75.7\%    &  400   & 90 & flops, training throughput\\
             &          & ViT-S/16  &   70.4\%     &  400   & 90 & flops, training throughput \\
             &          & EfficientNet-B0 & 72.8\% &  400   & 90 & flops, training throughput \\
\midrule

Property Ablation & CIFAR-10 & ResNet-34 & 91.5\% & 800 & 90 & flops, parameter count \\
(\cref{appendix:ablation:properties},\\
\cref{sec:ablation:properties}, \\
\cref{appendix:ablation:random_subgraph}) \\
\midrule

Synthesis Ablation & CIFAR-10 & ResNet-34 & 91.5\% & 800$^\ddagger$ & 90 & flops, parameter count \\
(\cref{sec:ablation:synthesis}, \\
\cref{appendix:ablation:synthesis}) \\
\midrule

Comparison with & CIFAR-10 & ResNet-34 & 91.5\% & 800 & 90 & flops, parameter count \\
Existing Approaches \\
(\cref{sec:existing_approches}, \\
\cref{appendix:automl_zero}) \\
\bottomrule
\end{tabular}
}
\caption{Training setup for the experiments. We initialize the search with the base model and run the search for a given number of trials; each trial involves training a proposed candidate for a given number of epochs. the accuracy column indicates the accuracy of the base model on the dataset. $^\ddagger$The enumerative search comparison uses only 600 trials, due to the enumerative synthesis being very slow.}
\label{tab:experiment_setup}
\end{table}

\subsection{Random Subgraph Ablations}
\label{appendix:ablation:random_subgraph}

We additionally varied the expected size of the subgraph synthesized by the random subgraph strategy in \cref{fig:baseline_vs_random_resnet34_p20,fig:baseline_vs_random_resnet34_p50,fig:baseline_vs_random_resnet34_p75}. In particular, we select a random subgraph of expected size 3 for mutation. Then, we synthesize a single (random) operation. At this point, we stop and return the current subgraph with probability $1-p$. Otherwise, we synthesize another (random) operation and repeat. Hence, the expected subgraph size is $1 + 1/(1-p)$. For instance, the plot for $p=0.2$ synthesizes subgraphs of size roughly 2 on average, which represents fairly small mutations; correspondingly, we see that the performance is the best of the three random subgraph strategies, and also similar to the performance of the AutoML-Zero and Primer strategies. At $p=0.5$, the expected subgraph size is 3 and there is a rapid decline in quality. The worst setting is at $p=0.75$, with an expected random subgraph of size 5. Note that \cref{fig:baseline_vs_random_resnet34_p50} is the same as \cref{fig:baseline_vs_random_resnet34} in the main text.

\cref{fig:baseline_vs_random_resnet34_p20,fig:baseline_vs_random_resnet34_p50,fig:baseline_vs_random_resnet34_p75} demonstrate that the performance of random search degrades very rapidly as we increase the (expected) size of the mutation from 2 to 5, to the point where there is practically no progress on improving either FLOPS or parameter count at $p=0.75$. In contrast, \ourname{} with property-guided synthesis consistently discovers mutations that improve performance at size $p=0.75$ in the same setting (and outperforms the random search at all mutation sizes).

\begin{figure*}[htpb]
    \begin{subfigure}[b]{\textwidth}
        \centering
        \includegraphics[width=\linewidth,trim={0 0 0 2cm},clip]{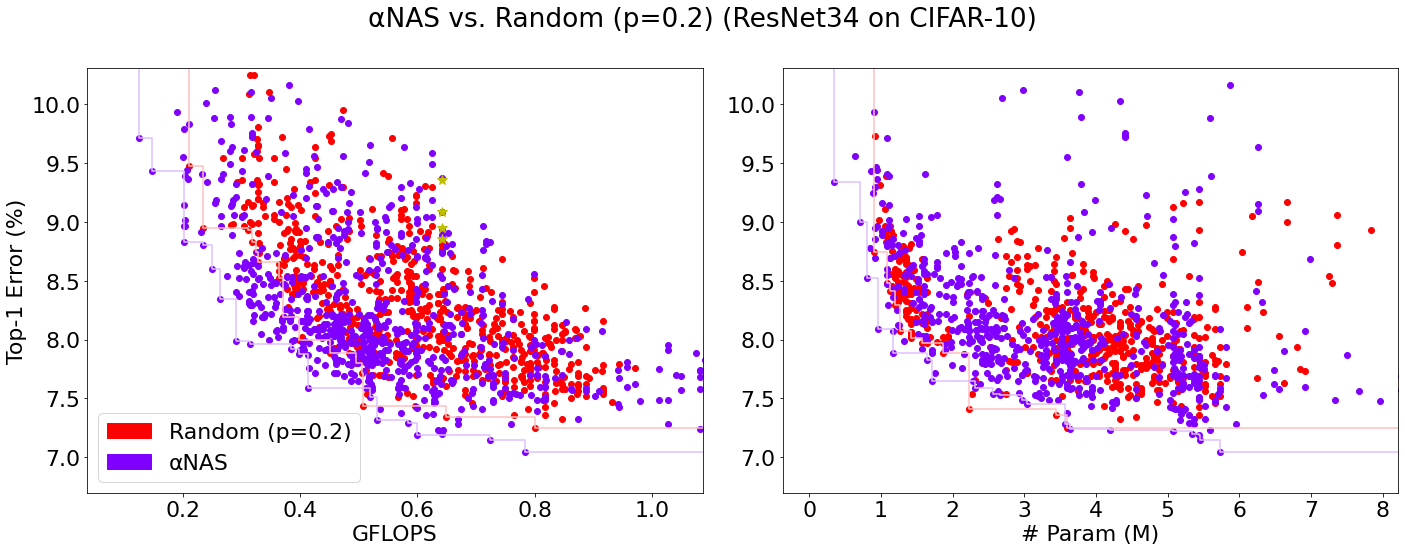}
        \caption{Evolving a ResNet-34 model on CIFAR-10 using \ourname{} vs. a random subgraph mechanism with p=.20.}
        \vspace{2em}
        \label{fig:baseline_vs_random_resnet34_p20}
    \end{subfigure}
    
    \begin{subfigure}[b]{\textwidth}
        \centering
        \includegraphics[width=\linewidth,trim={0 0 0 2cm},clip]{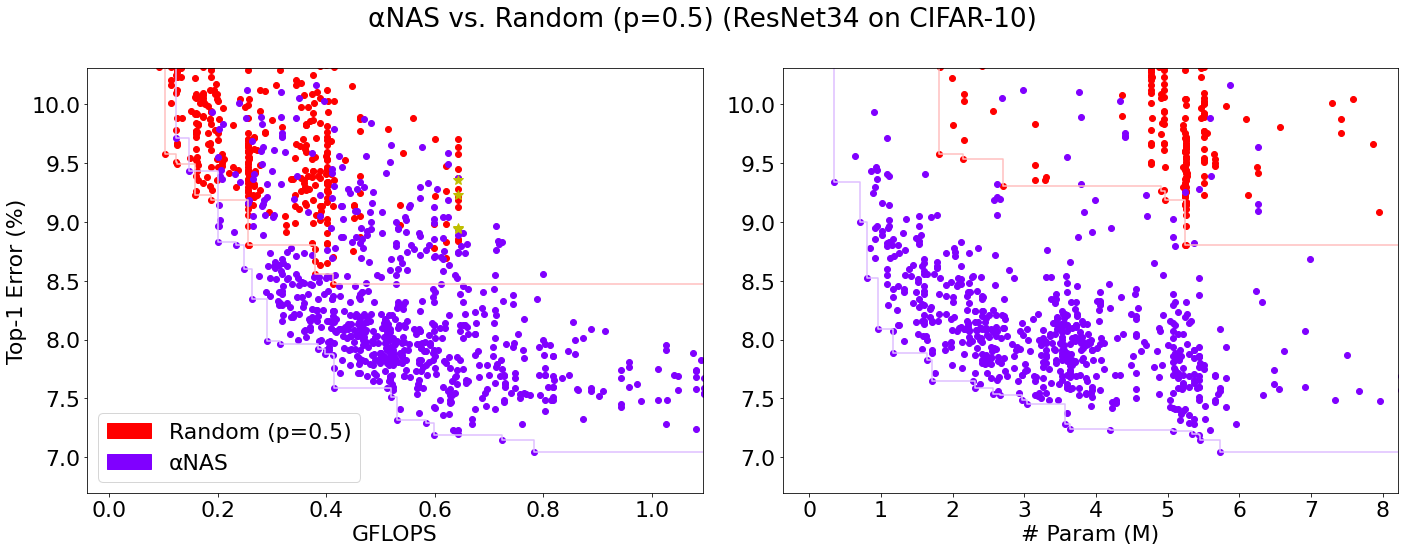}
        \caption{Evolving a ResNet-34 model on CIFAR-10 using \ourname{} vs. a random subgraph mechanism with p=.50.}
        \vspace{2em}
        \label{fig:baseline_vs_random_resnet34_p50}
    \end{subfigure}
    
    \begin{subfigure}[b]{\textwidth}
        \centering
        \includegraphics[width=\linewidth,trim={0 0 0 2cm},clip]{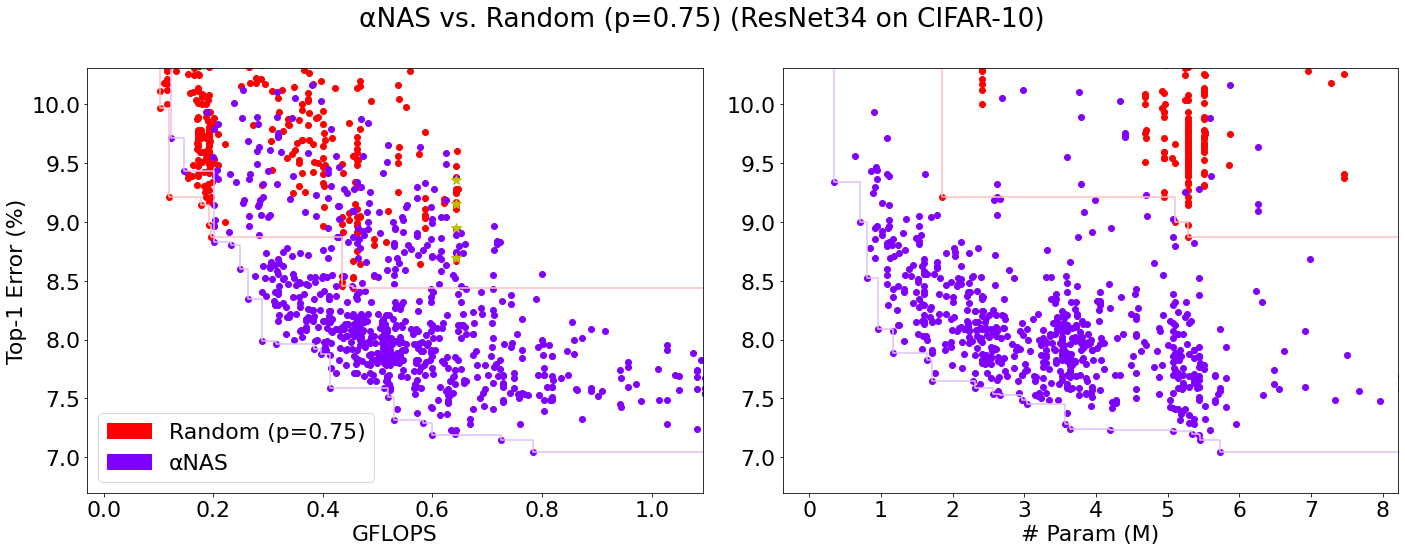}
        \caption{Evolving a ResNet-34 model on CIFAR-10 using \ourname{} vs. a random subgraph mechanism with p=.75.} 
        \label{fig:baseline_vs_random_resnet34_p75}
    \end{subfigure}
\caption{Evolving a ResNet-34 model on CIFAR-10 using \ourname{} vs. a random subgraph mechanism with varying expected sizes}
\end{figure*}

\FloatBarrier
\clearpage

\subsection{Program Property Ablations}
\label{appendix:ablation:properties}

To evaluate the contributions of the proposed program properties, we compare \ourname{} using the standard properties and mutations with the modified variants in \cref{fig:ablation_variants}. For this experiment, we train a ResNet-34 on CIFAR-10. For all the variants, the synthesized subgraph must be at least the size of the original subgraph minus 2. The results of our comparisons are shown in \cref{fig:baseline_vs_no_prop_resnet34,fig:baseline_vs_mixing_only_resnet34,fig:baseline_vs_depth_only_resnet34,fig:baseline_vs_shape_only_resnet34,fig:baseline_vs_no_mutate_resnet34}. Using a single program property allows the search to discover architectures closer to Pareto optimal compared to using no properties, but using a single property is still visibly inferior to the baseline. When using all three properties without mutations, the outcome is quite similar to the baseline in the medium to high FLOPS region. We attribute this behavior to the search being biased toward proposing more expensive architectures without property mutations: because a subgraph satisfies the target properties so long as its properties exceed the target, without mutations to decrease property values, the target property values can only increase over time.

\begin{table}[h]
{\footnotesize
\begin{tabular}{lp{8cm}}
\toprule
{\bf Variant} & {\bf Description} \\
\midrule
No Properties & Use no program properties to guide synthesis (same as the random subgraph strategy).\\
Mixing Only & Use only the mixing property to guide synthesis. \\
Depth Only & Use only the depth property to guide synthesis. \\
Shape Only & Use only the shape property to guide synthesis. \\
No Mutate & Use all three properties to guide synthesis but do not mutate properties. \\
Baseline & Use all three properties to guide synthesis and mutate properties. \\
\bottomrule
\end{tabular}
}
\caption{
Different variants of \ourname{} to evaluate the benefit of each program property.}
\label{fig:ablation_variants}
\end{table}

\begin{figure*}[!h]
    \centering
    \includegraphics[width=\linewidth,trim={0 0 0 2cm},clip]{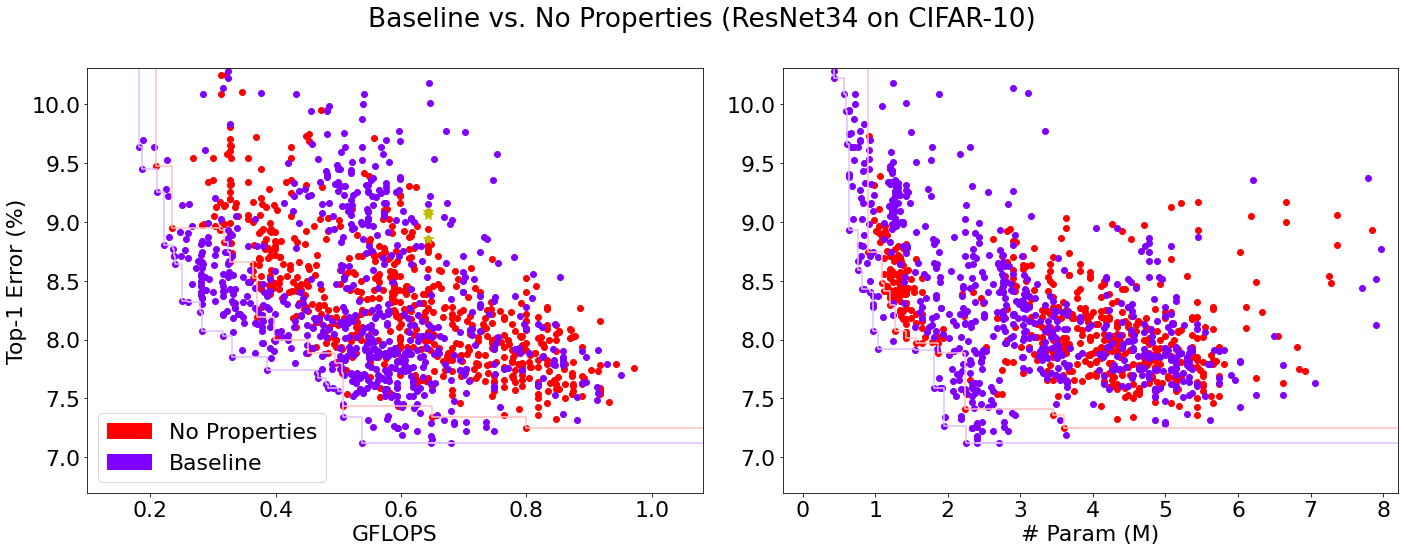}
    \caption{Evolving a ResNet-34 model on CIFAR-10 using the baseline approach vs. no properties.} 
    \vspace{2em}
    \label{fig:baseline_vs_no_prop_resnet34}
\end{figure*}

\begin{figure*}[!h]
    \centering
    \includegraphics[width=\linewidth,trim={0 0 0 2cm},clip]{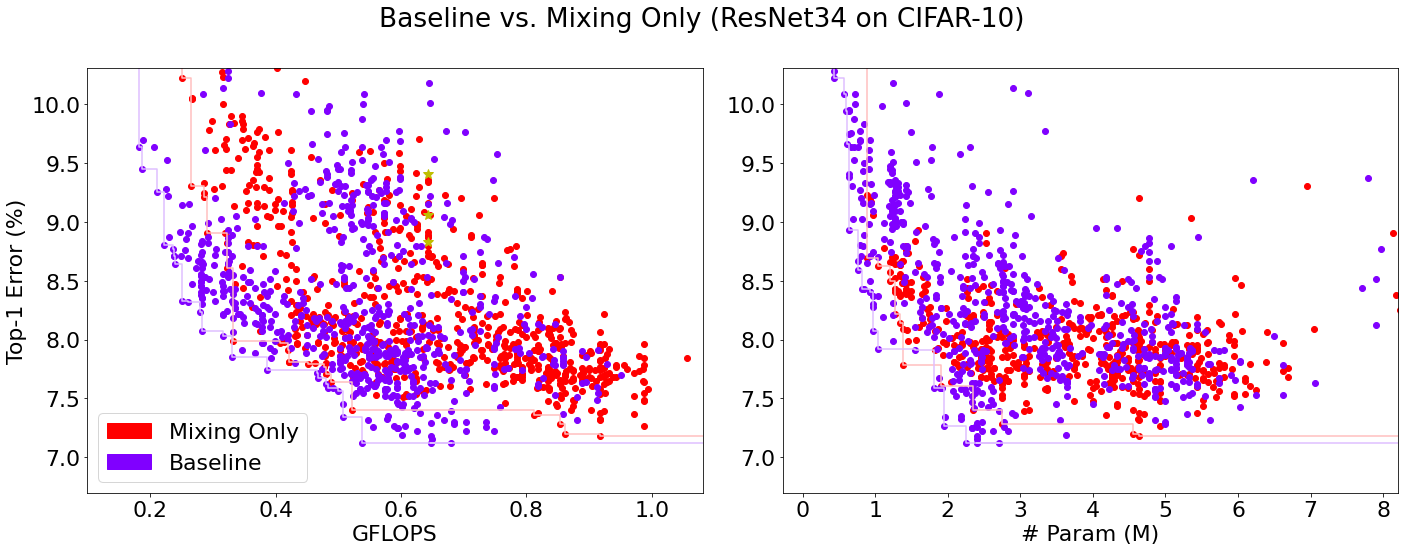}
    \caption{Evolving a ResNet-34 model on CIFAR-10 using the baseline approach vs. mixing property only.} 
    \vspace{2em}
    \label{fig:baseline_vs_mixing_only_resnet34}
\end{figure*}

\begin{figure*}[!h]
    \centering
    \includegraphics[width=\linewidth,trim={0 0 0 2cm},clip]{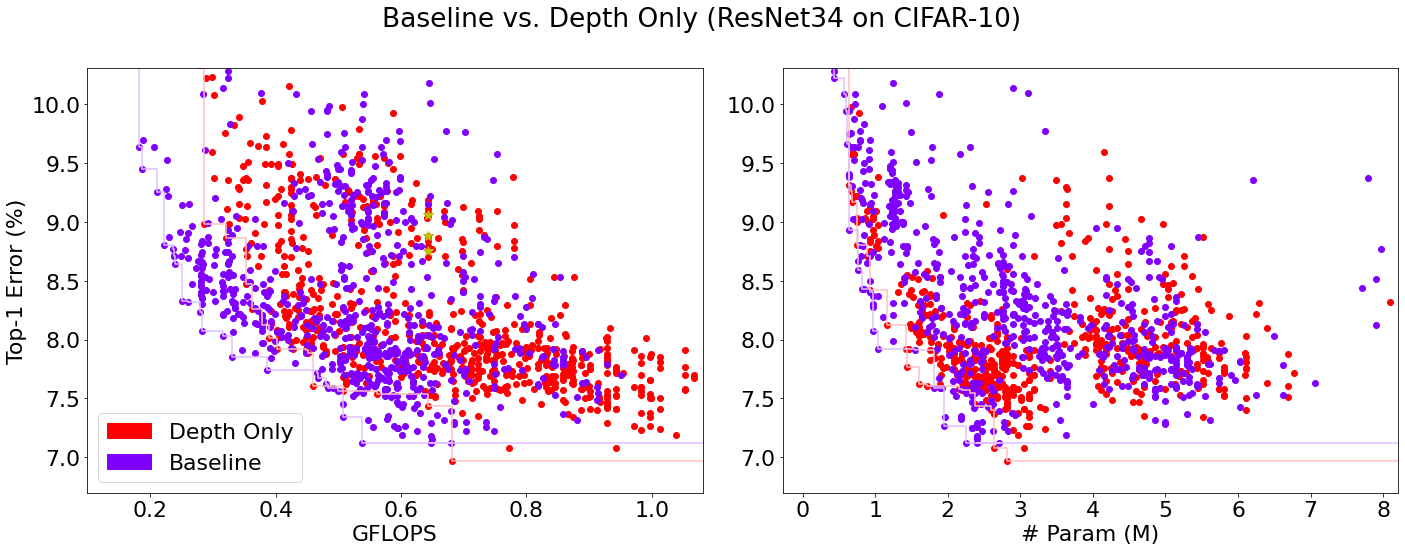}
    \caption{Evolving a ResNet-34 model on CIFAR-10 using the baseline approach vs. depth property only.} 
    \vspace{2em}
    \label{fig:baseline_vs_depth_only_resnet34}
\end{figure*}

\begin{figure*}[!h]
    \centering
    \includegraphics[width=\linewidth,trim={0 0 0 2cm},clip]{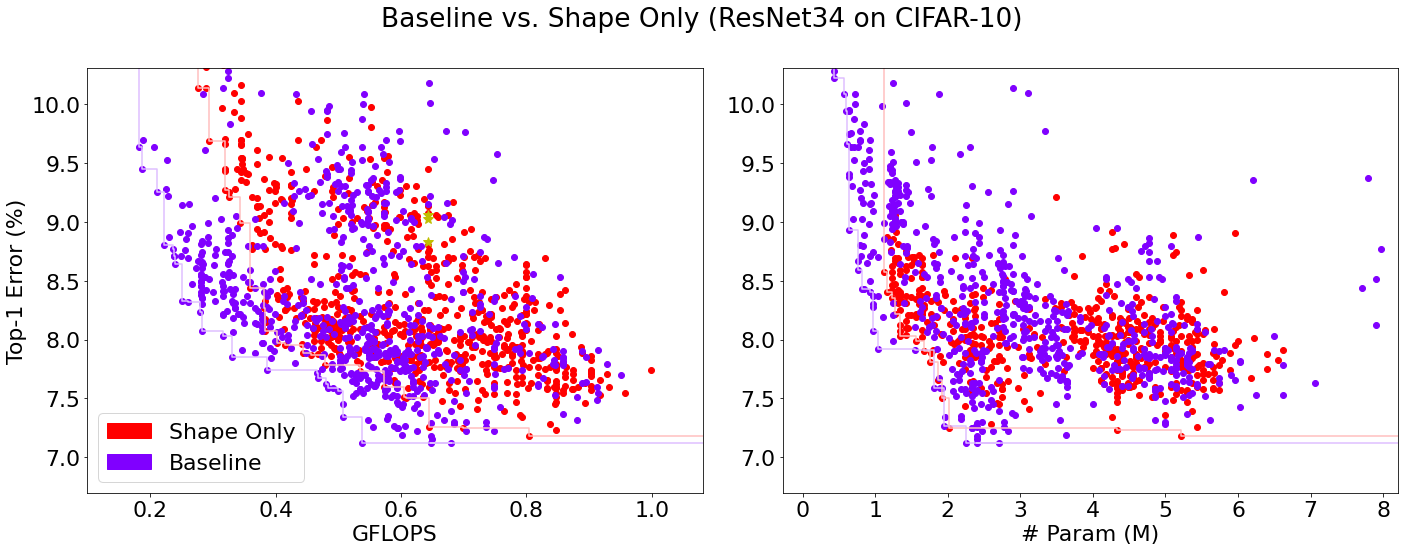}
    \caption{Evolving a ResNet-34 model on CIFAR-10 using the baseline approach vs. shape property only.} 
    \vspace{2em}
    \label{fig:baseline_vs_shape_only_resnet34}
\end{figure*}

\begin{figure*}[!h]
    \centering
    \includegraphics[width=\linewidth,trim={0 0 0 2cm},clip]{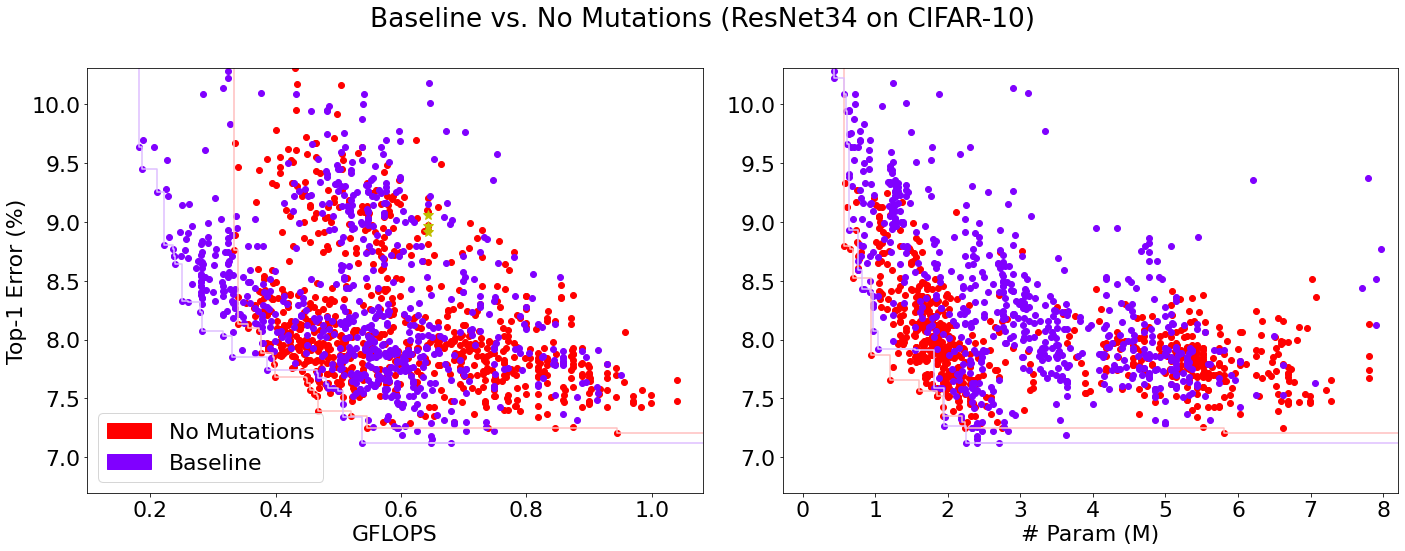}
    \caption{Evolving a ResNet-34 model on CIFAR-10 using the baseline approach vs. no mutations.} 
    \vspace{2em}
    \label{fig:baseline_vs_no_mutate_resnet34}
\end{figure*}

\FloatBarrier

\subsection{Progressive vs. Enumerative Synthesis}
\label{appendix:ablation:synthesis}

In this section, we compare the performance of \ourname{} when using progressive vs. enumerative synthesis to perform the concretization step as described in \cref{sec:ablation:synthesis}. In theory, both synthesis algorithms should produce similar outcomes in terms of the quality of the candidates explored, as they use the same set of abstract properties to guide the search. The main difference is that the enumerative synthesis is much slower than the progressive synthesis.

\cref{fig:baseline_vs_enum_resnet34} presents the distributions of the candidates explored using the two synthesis algorithms. Since enumerative synthesis always returns the smallest subgraph that satisfies the target properties, while progressive synthesis may not, enumerative synthesis is biased toward smaller subgraphs. Nevertheless, the performance of enumerative synthesis is substantially similar to the performance of progressive synthesis (and is by far the most similar of the other synthesis approaches we consider).

\FloatBarrier
\begin{figure*}[h]
    \centering
    \includegraphics[width=\linewidth,trim={0 0 0 2cm},clip]{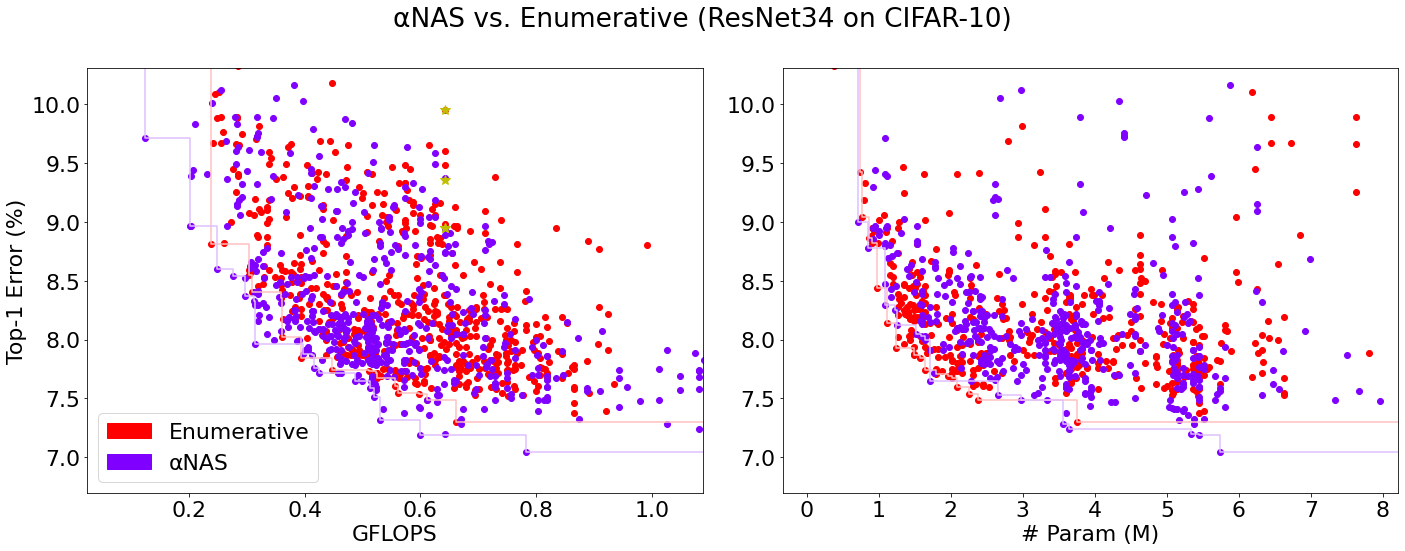}
    \caption{Evolving a ResNet-34 model on CIFAR-10 using \ourname{} vs. the enumerative search mechanism. Due to enumerative synthesis being slow, we compare it with \ourname{} over 600 trials.} 
    \label{fig:baseline_vs_enum_resnet34}
\end{figure*}

\subsection{Comparison with AutoML-Zero}
\label{appendix:automl_zero}

\cref{fig:baseline_automl_zero_resnet34} shows the result when comparing \ourname{} and AutoML-Zero as described in \cref{sec:existing_approches}.

\begin{figure*}[h]
    \centering
    \includegraphics[width=\linewidth,trim={0 0 0 2cm},clip]{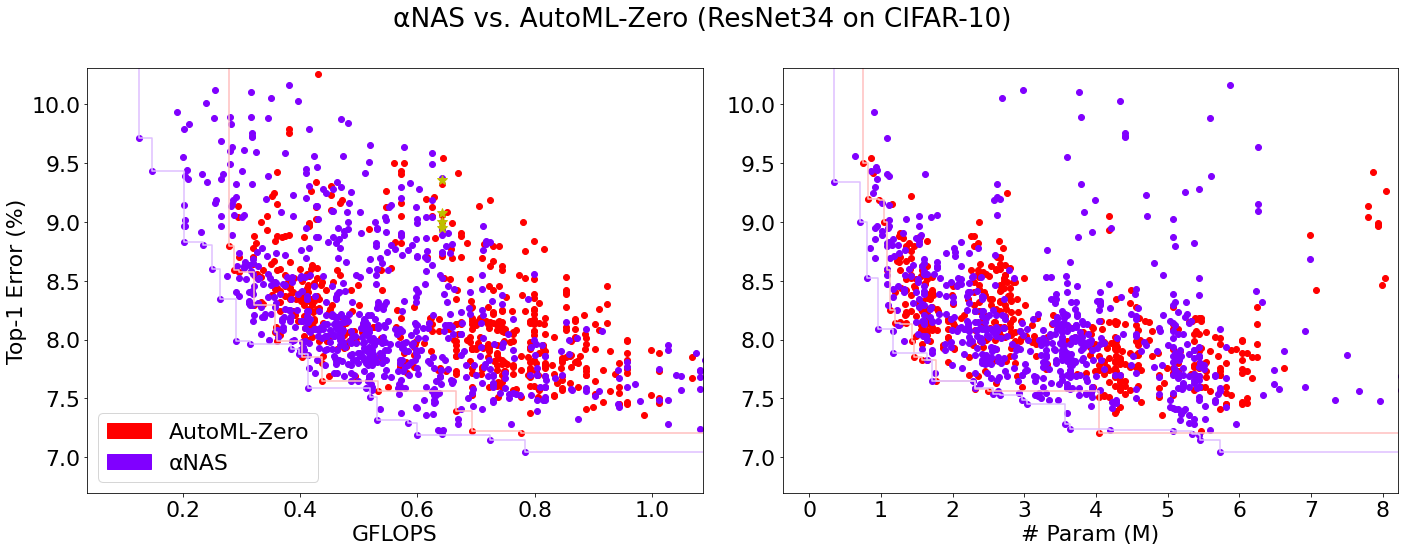}
    \caption{Evolving a ResNet-34 model on CIFAR-10 using \ourname{} vs. the AutoML-Zero search mechanism.} 
    \label{fig:baseline_automl_zero_resnet34}
\end{figure*}

\FloatBarrier
\clearpage

\newpage
\section{Abstract Progressive Synthesis}
\label{appendix:abstract_prog_syn}

We first redefine the relevant notions using the lens of abstract interpretation:

\begin{definition}
Given a program property $\Pi = (\mathcal{V}, \le, \alpha)$ and an abstract interpretation $\alpha$, an \textbf{abstract distance function} is a function $d^\alpha : \mathcal{V} \times \mathcal{V} \mapsto \mathbb{R}^+ \cup \{\infty\}$ such that:
\begin{enumerate}
    \item $d^\alpha(u, v) \ge 0$ for all $u, v \in \mathcal{V}$. 
    \item $d^\alpha(u, v) = 0 \iff u \ge v$.
    \item $d^\alpha(u, v) = \infty$ $\iff$ there exists $p \in \mathcal{P}$ such that (1) $p \models u$, but (2) there does not exist any finite sequence of transformations $t = t_1 \circ t_2 \circ \cdots \circ t_n$ such that $t(p) \models v$.
\end{enumerate}
\end{definition}

\begin{definition}
Given a space of programs $\mathcal{P}$, a program property $\Pi = (\mathcal{V}, \le, \alpha)$, an abstract interpretation $\alpha$, and an abstract distance function $d^\alpha$, a set of transformations $T \subseteq \mathcal{T}$ is an \textbf{abstract covering} if for all $u, v \in \mathcal{V}$ with $d^\alpha(u, v) > 0$, either (1) there exists $t \in T$ such that $d^\alpha(t^\alpha(u), v) < d^\alpha(u, v)$ or (2) $d^\alpha(u, v) = \infty$.
\end{definition}

\begin{definition}
An abstract covering $T$ is a \textbf{uniform abstract covering} if there exists a constant $\epsilon > 0$ such that for all properties $u, v \in \mathcal{V}$ with $d^\alpha(u, v) > 0$, there exists $t \in T$ such that
\begin{align}
    d^\alpha(t^\alpha(u), v) + \epsilon \le d^\alpha(u, v)
\end{align}
\end{definition}

We have the following guarantees for the abstract version of the greedy progressive synthesis algorithm, where we instead provide as input an initial program property value $\alpha(p_0) = u$, a target property value $v$, an abstract distance function $d^\alpha$, and an abstract uniform covering $T$:

\begin{theorem}
\label{thm:prog_syn_abstract}
Given an abstract distance function $d^\alpha$ and an abstract uniform covering $T$, the progressive synthesis algorithm satisfies the following three properties:
\begin{enumerate}
    \item Soundness: if the algorithm returns $u^* = (t^\alpha_n \circ \cdots \circ t^\alpha_1)(u)$, then $u^* \ge v$.
    \item Completeness: if the algorithm returns $\not\models$, then the synthesis task is infeasible.
    \item Progress: the algorithm terminates after a finite amount of time, which is linear in the length of the final sequence of transformations and linear in $|T|$.
\end{enumerate}
\end{theorem}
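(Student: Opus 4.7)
The plan is to adapt the proof of Theorem~\ref{thm:prog_syn} to the abstract setting, handling the three claims separately and leveraging both the defining properties of an abstract distance function and the soundness of abstract interpretation from Definition~\ref{def:abstract_interpretation}.

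For soundness, I would argue that the algorithm only returns a nontrivial sequence when the loop guard $d^\alpha(u_{i-1}, v) = 0$ holds, where $u_{i-1} = (t_{i-1}^\alpha \circ \cdots \circ t_1^\alpha)(u)$. By the second defining property of an abstract distance, $d^\alpha(u^*, v) = 0$ is equivalent to $u^* \ge v$, which is exactly the claim.

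For completeness, suppose the algorithm returns $\not\models$, so $d^\alpha(u, v) = \infty$. By the third defining property of an abstract distance, there exists a program $p \models u$ for which no finite concrete sequence $t_1, \ldots, t_n$ achieves $(t_n \circ \cdots \circ t_1)(p) \models v$. Now if there were an abstract sequence $t_1^\alpha, \ldots, t_n^\alpha$ witnessing $(t_n^\alpha \circ \cdots \circ t_1^\alpha)(u) \ge v$, iterating the soundness condition of Definition~\ref{def:abstract_interpretation} starting from $p$ would yield $(t_n \circ \cdots \circ t_1)(p) \models v$ concretely, contradicting the choice of $p$. Hence the abstract synthesis task is infeasible. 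I expect this step to be the main subtlety, because the infeasibility witness in the definition of $d^\alpha(u, v) = \infty$ is existential over concrete programs, and the bridge back to the abstract task requires the soundness of abstract interpretation together with a careful induction on the sequence length.

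For progress, I would observe that $d^\alpha(u, v) < \infty$ throughout the loop (else the algorithm would have returned at the initial check), and since $T$ is a uniform abstract covering with constant $\epsilon > 0$, the greedy step at iteration $i$ picks some $t_i^\alpha$ satisfying $d^\alpha(u_i, v) \le d^\alpha(u_{i-1}, v) - \epsilon$ whenever the distance is still positive. A straightforward induction gives $d^\alpha(u_i, v) \le d^\alpha(u, v) - i\epsilon$, so the distance reaches $0$ after at most $\lceil d^\alpha(u, v)/\epsilon \rceil$ iterations. Each iteration enumerates $T$ and performs $|T|$ evaluations of $d^\alpha$ together with $|T|$ applications of an abstract transformation, giving total runtime linear in the length $n$ of the returned sequence and in $|T|$, as claimed.
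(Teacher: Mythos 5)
Your proposal is correct and follows essentially the same route as the paper, which simply observes that the proof of \cref{thm:prog_syn} carries over with the abstract semantics $t^\alpha$ substituted for the concrete semantics $t$ (soundness from $d^\alpha(u^*,v)=0 \iff u^*\ge v$, progress from the uniform $\epsilon$ decrease, and $O(|T|)$ work per iteration). Your completeness step is in fact slightly more careful than the paper's omitted argument, since you correctly note that the $\infty$ case of the abstract distance is witnessed existentially by some $p\models u$ and bridge back via the soundness of abstract interpretation.
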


We omit the proof as it is identical to the proof of \cref{thm:prog_syn} 
except with the abstract semantics $t^\alpha$ substituted for the concrete semantics $t$.
Note that the only difference with the statement of \cref{thm:prog_syn} 
is the first soundness property, which is stated in terms of the abstract semantics $u^* \ge v$ instead of the corresponding concrete semantics $p^* = (t_n \circ \cdots \circ t_1)(p_0)$. The key is that the soundness property of the abstract interpretation allows us to translate this into the required form:

\begin{lemma}
If $\alpha(p_0) = u$ and $d^\alpha((t_n^\alpha \circ \cdots \circ t_1^\alpha)(u), v) = 0$, then $p^* = (t_n \circ \cdots \circ t_1)(p_0) \models v$.
\end{lemma}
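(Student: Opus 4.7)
The plan is to prove this lemma by a straightforward induction on $n$, the length of the transformation sequence, using the soundness of abstract interpretation (\cref{def:abstract_interpretation}) as the key inductive step. The main claim I would establish is that for every $k = 0, 1, \ldots, n$, the concrete partial program $p_k := (t_k \circ \cdots \circ t_1)(p_0)$ satisfies $p_k \models u_k$, where $u_k := (t_k^\alpha \circ \cdots \circ t_1^\alpha)(u)$.

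For the base case $k = 0$, we have $\alpha(p_0) = u = u_0$ by hypothesis, so $\alpha(p_0) \ge u_0$ and hence $p_0 \models u_0$ by the definition of satisfies. For the inductive step, suppose $p_k \models u_k$. \cref{def:abstract_interpretation} applied to $t_{k+1}$ asserts that if $t_{k+1}^\alpha(u_k) = u_{k+1}$, then for every program $p \models u_k$, we have $t_{k+1}(p) \models u_{k+1}$. Instantiating this with $p = p_k$ yields $p_{k+1} = t_{k+1}(p_k) \models u_{k+1}$, completing the induction. Taking $k = n$ gives $p^* = p_n \models u_n$, i.e., $\alpha(p^*) \ge u_n$.

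To finish, I would invoke the hypothesis $d^\alpha(u_n, v) = 0$: by property (2) of an abstract distance function, this implies $u_n \ge v$. Chaining with the previous inequality, $\alpha(p^*) \ge u_n \ge v$ by transitivity of the partial order $\le$, so $p^* \models v$ by definition.

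There is no serious obstacle here; the lemma is essentially a repeated application of the commutative diagram in \cref{fig:abstract_interpretation} composed $n$ times, and the work is purely bookkeeping to ensure that the definitions of $\models$, of $d^\alpha$, and of abstract interpretation are threaded together correctly. The only point worth flagging is that the induction must be stated at the level of intermediate property values $u_k$ rather than directly comparing $\alpha(p^*)$ and $v$, because the abstract interpretation soundness is only guaranteed one transformation at a time.
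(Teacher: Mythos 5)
Your proof is correct and follows essentially the same route as the paper's: conclude $u^* \ge v$ from property (2) of the abstract distance, propagate satisfaction through the transformation sequence via the soundness of abstract interpretation, and chain the two by transitivity of $\le$. The only difference is that you make explicit the induction on $k$ that the paper compresses into the single sentence ``by the soundness property of abstract interpretations, $p^* \models u^*$,'' which is a reasonable bit of added rigor since \cref{def:abstract_interpretation} is stated for one transformation at a time.
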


\begin{proof}
By definition, $d^\alpha((t_n^\alpha \circ \cdots \circ t_1^\alpha)(u), v) = 0$ implies that $(t_n^\alpha \circ \cdots \circ t_1^\alpha)(u) = u^* \ge v$. By the soundness property of abstract interpretations, since $p_0 \models u$, it follows that $p^* = (t_n \circ \cdots \circ t_1)(p_0) \models u^*$. Putting these two facts together yields $p^* \models v$.
\end{proof}

\begin{corollary}
Running the progressive synthesis algorithm with an \emph{abstract} distance function and an \emph{abstract} uniform covering also achieves soundness with respect to the \emph{concrete semantics}, i.e., if the algorithm returns $u^* = (t^\alpha_n \circ \cdots \circ t^\alpha_1)(u)$, then $p^* = (t_n \circ \cdots \circ t_1)(p_0) \models v$.
\end{corollary}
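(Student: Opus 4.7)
The plan is to derive this corollary as an immediate consequence of the Lemma stated just before it, combined with the soundness clause of Theorem \ref{thm:prog_syn_abstract}. The corollary asserts that the abstract algorithm's output lifts faithfully to the concrete semantics; the Lemma serves as the bridge from the abstract to the concrete, while the Theorem guarantees the abstract output has exactly the property needed to cross that bridge.

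Specifically, I would argue as follows. Suppose the progressive synthesis algorithm, run with the abstract distance $d^\alpha$, abstract uniform covering $T$, initial abstract value $u = \alpha(p_0)$, and target $v$, terminates and returns the abstract sequence yielding $u^* = (t^\alpha_n \circ \cdots \circ t^\alpha_1)(u)$. By the soundness clause of Theorem \ref{thm:prog_syn_abstract} we have $u^* \ge v$, which by the second defining property of an abstract distance function is equivalent to $d^\alpha(u^*, v) = 0$. The Lemma immediately preceding the corollary then yields $p^* = (t_n \circ \cdots \circ t_1)(p_0) \models v$, where the $t_i$ are the concrete transformations whose abstract interpretations are the $t_i^\alpha$ chosen by the algorithm.

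The one subtle point worth articulating is that the concrete transformations $t_i$ are well-defined: by the conventions established in Definition \ref{def:abstract_interpretation} (the superscript-$\alpha$ identifying a concrete transformation with its abstract interpretation), each $t_i^\alpha$ the algorithm selects from $T$ corresponds to a specific $t_i \in \mathcal{T}$. No real obstacle arises in this step — the heavy lifting has already been done in the preceding Lemma (which invokes soundness of abstract interpretation per \cref{fig:abstract_interpretation}) and in Theorem \ref{thm:prog_syn_abstract} (which mirrors Theorem \ref{thm:prog_syn} with $t^\alpha$ substituted for $t$ throughout). The corollary is essentially a matter of composing these two pieces, so the proof should be stated in just a few lines.
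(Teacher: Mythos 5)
Your proposal is correct and matches the paper's intent exactly: the corollary is stated as an immediate consequence of the preceding lemma (which supplies the abstract-to-concrete bridge via soundness of abstract interpretation) together with the soundness clause of Theorem~\ref{thm:prog_syn_abstract}, which is precisely the composition you give. The paper offers no separate proof for the corollary, so your two-step argument is the intended one.
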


\section{Deferred Proofs}
\label{appendix:proofs}

\subsection{Abstract Semantics of the Mixing Property}
\label{appendix:proofs:properties}

\begin{proof}[Proof of \cref{lemma:linear_abstract_semantics}]

Given two subgraphs $p$ and $q$, where $u = \alpha_M(p)$, we need to show that $q^\alpha_M(u) := \alpha_M(q) \times u$ is an abstract interpretation, i.e., $ \alpha_M(q) \times u \le \alpha_M(q \circ p)$.

Denote the input tensor as $I$, the intermediate output $p(I)$ as $P$, and the output of the subgraph $q(P) = (q \circ p)(I)$ as $O$. We begin by showing that $q^\alpha_M$ is an abstract interpretation for 1-dimensional inputs and outputs, i.e., when $I$, $P$, and $O$ are all 1-dimensional tensors. First we establish the safety of the pairing property, which corresponds to sufficient conditions for when $\alpha_M(q \circ p) > \lx{}$. Note that when both $q$ and $p$ pair their respective inputs and outputs, then so does their composition (since the preimage of $O$ in $P$ is the entire slice, and the preimage of $P$ in $I$ is also the entire slice, hence the preimage of $O$ in $I$ is also the entire slice). Thus, we can safely define $\alpha_M(q) \times v = \lx{}$ if $v = \lx{}$ or $u = \alpha_M(q) = \lx{}$. This yields the first row and first column of \cref{fig:locality_multiply}.

Otherwise, both $u$ and $v$ are greater than \lx{} (i.e., one of \lo{}, \lm{}, or \la{}). When either $u$ or $v$ have all-to-one locality, so does $q(p)$: if $q$ is all-to-one, then the preimage of a single element in $O$ is the entire slice of $P$ (and since $p$ has at least one-to-one locality, the preimage in $I$ is also the full slice); if $p$ is all-to-one, then the preimage of a single element in $O$ is at least an element of $P$, and so tracing back through $p$ yields the preimage is again the full slice of $I$. So we can define $\alpha_M(q) \times v = \la{}$ if either $u = \alpha_M(q) = \la{}$ or $v = \la{}$ (and neither $u$ nor $v$ is \lx{}). This yields the remaining entries of \cref{fig:locality_multiply} in the last row and last column.

Finally, the only situation in which $q \circ p$ has one-to-one locality is when both $q$ and $p$ are one-to-one; the remaining entries must be many-to-one. This completes the definition of $\alpha_M(q) \times v = u * v$ as given in \cref{fig:locality_multiply}.

When the tensors are higher dimensional, to determine whether a dimension $x$ in $I$ and $z$ in $O$ are paired, we need to check whether there exists some dimension $y$ in $P$ which is paired with both $x$ and $z$. The locality is the maximum locality over all paired dimensions $y$ in $P$, which corresponds to the definition of $+$ in \cref{lemma:linear_abstract_semantics}. Hence we see the computation is exactly a matrix multiplication, and $q^\alpha_M(u) := \alpha_M(q) \times u$ is an abstract interpretation as claimed.
\end{proof}

\subsection{Complexity of Progressive Synthesis}
\label{appendix:proofs:prog_syn}

This section culminates in the statement and proof of the formal version of \cref{thm:synthesis_complexity_informal}. We first clarify how ``length'' is measured for a sequence of transformations $(t_1, \ldots, t_n) \in \mathcal{T}^n$. Note that taking the length to be $n$ is not well-defined since $\mathcal{T}$ is the set of all finite length strings $E^*$; in particular, $\tau = t_n \circ \cdots \circ t_1 \in \mathcal{T}$, i.e., any sequence of transformations is always functionally equivalent to a singleton transformation.

Instead, we define a string representation of any sequence of transformations $(t_1, \ldots, t_n)$ to be the string representation of the composed transformation $\tau = t_n \circ \cdots \circ t_1$ (recall that this is defined since $\tau \in \mathcal{T}$, and elements of $\mathcal{T}$ are also strings over $E$). This yields the following equivalence relation over sequences of transformations:

\begin{definition}
Let $t = (t_1, \ldots, t_n)$ and $s = (s_1, \ldots, s_m)$ be two sequences of transformations. Then $t$ and $s$ are \textbf{representationally equivalent}, denoted as $t \sim s$, if their string representations are lexicographically equal.
\end{definition}

Let us fix some arbitrarily lexicographical ordering over $E$. This allows us to uniquely identify a representative:

\begin{definition}
Given an equivalence class $R$ under $\sim$, the \textbf{primitive representative} is the lexicographically first element in $R$ achieving the longest sequence length.
\end{definition}

It is not hard to see that this maximal element consists entirely of primitive operations, hence the name. 

\begin{definition}
For any $t \in \mathcal{T}$, we define the \textbf{length} of $t$, denoted  \textbf{$|t|$}, as the number of transformations in the primitive representative of its equivalence class.
\end{definition}

Under this definition of length, $(t_1, \ldots, t_n) \in \mathcal{T}^n$ and $\tau = t_n \circ \cdots \circ t_1 \in \mathcal{T}$ have the same length.

\begin{proof}[Proof of \cref{thm:prog_syn}]
Soundness and completeness follows from the fact that $d(v_i, v) = 0$ if and only if $\alpha(p_i) \models v$. Progress follows also from the fact that $T$ is a uniform covering and we decrease $d$ by at least $\epsilon$ each step. At each step we consider each transformation $t \in T$, and we do this once for each transformation in the returned sequence. Hence, the total runtime is linear in $|T|$ and the number of transformations in the returned sequence (which is bounded by the length of the returned sequence).
\end{proof}

Next, we provide a formal definition of a recursively consistent synthesis algorithm, which was first introduced informally in \cref{sec:prog_syn_universal}:

\begin{definition}
Let $D$ be a positive integer. Then $A$ is \textbf{recursively consistent at depth $D$} if there is a constant $C \ge 0$ such that for all feasible $p, v$, there exists a pair of transformations $(t_1, t_2) \sim A(p, v)$ with $|t_1| \le D$ satisfying
\begin{enumerate}
    \item $A(t_1(p), v) \sim t_2$; and
    \item $A(t_1(p), v)$ runs in at most $C$ more steps than $A(p, v)$.
\end{enumerate}
\end{definition}

We claim that our definition of a recursively consistent algorithm is relatively natural. Roughly speaking, the synthesis problem has a recursively consistent algorithm $A$ if there are ``stopping points'' along the way for which $A$ can output partial solutions (of bounded length). For instance, large programs are often organized into several files, which themselves are broken down into functions; when tackling a theorem, lemmas can be used to streamline the proof. Hence, focusing on synthesis problems that are solvable via recursively consistent means does not materially limit the scope of our results from a practical standpoint. Note that recursive consistency is not just a condition on the inherent difficulty of the synthesis task (as in standard computational complexity) but also the difficulty of computing a solution under the semantics of the language of transformations $E^*$.

We also need a more nuanced way to quantify the efficiency of synthesis. So far (e.g., in \cref{thm:prog_syn}), we have been using the following measure of efficiency:
\begin{definition}
A synthesis algorithm $A$ is \textbf{output-efficient} if its computational complexity is polynomial in the length of its output, $|A(p, v)|$.
\end{definition}
The main issue with this definition is that achieving a run time that is polynomial in output length is, in some sense, trivial. For instance, suppose that algorithm $A$ runs in time that is exponential in its output, but there exists a transformation $t_{\text{id}} \in \mathcal{T}$ that is the identity transformation, i.e., $t_{\text{id}}(p) = p$ for all $p \in \mathcal{P}$. Then we can make $A$ run in ``polynomial time'' simply by appending an exponentially long sequence of $t_{\text{id}}$ to the end of the original output. More generally, using output efficiency leads to a host of problems that can be summarized as rewarding ``verbose'' algorithms (i.e., those that return longer sequences of transformations).

To avoid such issues, we instead define efficiency with respect to an independent baseline:
\begin{definition}
A \textbf{complexity function} is a function $K : \mathcal{P} \times \mathcal{V} \rightarrow \mathbb{R}^+$.
\end{definition}
\begin{definition}
A synthesis algorithm is \textbf{\emph{K}-efficient} if its computational complexity is polynomially bounded by the complexity function $K$.
\end{definition}
One way to understand this is that the complexity function $K$ measures how ``difficult'' a particular synthesis task $(p, v)$ is. Then a $K$-efficient algorithm scales gracefully with the difficulty of the input instance. Alternatively, the complexity function tells us roughly how much time we are willing to spend on solving a particular instance of a synthesis task. In this case, a synthesis algorithm is $K$-efficient if the amount of time it takes to solve an input instance grows polynomially with the computational budget. Both of these interpretations give an objective way to compare the efficiency of different synthesis algorithms. Note that there is no requirement that the complexity function itself be efficiently computable.

\begin{algorithm}[t]
\small
\caption{Parallel progressive synthesis}
\label{alg:parallel_synthesis}
\begin{algorithmic}[1]
\Require initial program $p$, program property $v$, a distance function $d$, uniform covering $T$ with lower bound $\epsilon$
\Ensure $\not\models$ if $(p, v)$ is infeasible; otherwise, a sequence of transformations $\tau$ such that $\tau(p) \models v$
  \State $d_0 \gets d(p,v)$
  \If{$d_0 = \infty$}
    \State \Return $\not\models$
  \EndIf
  \State $\tau \gets \{ \}$, $p_0 \gets p$
  \For{$i$ = 1, \ldots}
    \If{$d_{i-1} = 0$}
        \State \Return $\tau$
    \EndIf
    \State $t_i \gets \textsc{ParallelSearch}\big(T; \lambda t.\big(d(t(p_{i-1}), v) + \epsilon \le d_{i-1}\big)\big)$ \label{alg:parallel_synthesis:return_first}
    \State $p_i \gets t_i(p_{i-1})$
    \State $d_i \gets d(p_i, v)$
    \State $\tau \gets (t_1, \ldots, t_i)$
  \EndFor
\end{algorithmic}
\end{algorithm}

We now introduce a new progressive synthesis algorithm that is designed for $K$-efficiency (rather than output efficiency). \cref{alg:parallel_synthesis} presents the pseudocode for this variant, which we call \textbf{parallel progressive synthesis}. The main difference between the greedy and parallel variants of progressive synthesis is the use of the subroutine \textsc{ParallelSearch} on line~\ref{alg:parallel_synthesis:return_first} in \cref{alg:parallel_synthesis}. $\textsc{ParallelSearch}(T;\textsc{Cond})$ takes two arguments: the first argument $T$ is a set, and the second argument \textsc{Cond} is a conditional that takes an element of $T$ and returns either True or False. \textsc{ParallelSearch} then initializes one instance of \textsc{Cond} for each element $t \in T$, and begins executing \textsc{Cond} \emph{in parallel, one step at a time}. Every time an instance of \textsc{Cond} finishes computing, we check its output value and return if the condition is satisfied. Then the run time of \textsc{ParallelSearch} is bounded by the size of the set $T$ multiplied by the minimum amount of time it takes to verify a satisfying $t \in T$.

More explicitly, define $S(f; x)$ to be the total number of steps needed to compute the function $f$ on the input $x$. Define the set of transformations that decrease the distance by at least $\epsilon$ as $T^\downarrow = \{ t \in T \ |\ d(t(p_i), v) + \epsilon \le d_i\}$. Then each call to \textsc{ParallelSearch} on line~\ref{alg:parallel_synthesis:return_first} in \cref{alg:parallel_synthesis} terminates after exactly $|T|\min_{t \in T^\downarrow} S(d; (t(p_i), v))$ steps. We say that \textsc{ParallelSearch} returns the \emph{fastest continuation}:
\begin{align}
    t_i = \argmin_{t \in T^\downarrow} S(d; (t(p), v))
\end{align}

We can compare this with the analogous greedy search subroutine from \cref{alg:synthesis} (greedy progressive synthesis), which always takes the \emph{most progressive continuation}:
\begin{align}
    t_i = \argmin_{t \in T} d(t(p), v)
\end{align}
In contrast to \textsc{ParallelSearch}, the run time of this line is bounded \emph{from below} by the \emph{maximum} amount of time it takes to compute $d(t(p), v)$ for any $t \in T$. This explains why we need to use the \textsc{ParallelSearch} function, since we have no control over the worst-case performance of $d$.

We now state the first version of our main result.

\begin{theorem}
\label{thm:parallel_synthesis_complexity}
Given a synthesis task specified by a space of programs $\mathcal{P}$; a program property $\Pi = (\mathcal{V}, \le, \alpha)$; and a set of primitive transformations $E$; for all complexity functions $K(p, v)$, the following are equivalent:
\begin{enumerate}
    \item There exists a sound, complete, $K$-efficient, and recursively consistent synthesis algorithm $A$.
    \item There exists a distance function $d$ with a uniform covering, such that the parallel progressive synthesis algorithm is sound, complete, and $K$-efficient.
\end{enumerate}
\end{theorem}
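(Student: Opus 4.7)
My plan is to prove both directions of the equivalence.

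\textbf{Direction (2) $\Rightarrow$ (1).} The claim is essentially self-referential: take $A$ to be parallel progressive synthesis itself. Soundness, completeness, and $K$-efficiency are given by hypothesis. For recursive consistency, set $D = \max_{t \in T}|t|$, which is finite since $T$ must be finite for \textsc{ParallelSearch} to terminate. Because the algorithm's iteration-$i$ state is determined solely by the current partial program $p_i$, invoking $A$ on $(t_1(p), v)$ simply skips the first iteration and reproduces the remaining transformations, so $A(t_1(p), v) \sim t_2$ whenever $(t_1, t_2) \sim A(p, v)$. The constant $C$ can be taken to be $0$ since $A(t_1(p), v)$ does strictly less work than $A(p, v)$.

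\textbf{Direction (1) $\Rightarrow$ (2).} Given $A$ that is sound, complete, $K$-efficient, and recursively consistent at depth $D$ with constant $C$, I define
\begin{align*}
d(p, v) = \begin{cases} 0 & \text{if } p \models v \\ |A(p, v)| & \text{if $(p, v)$ is feasible and $p \not\models v$} \\ \infty & \text{if $(p, v)$ is infeasible} \end{cases}
\end{align*}
(using a trivial modification of $A$ to return the empty sequence whenever $p \models v$) and $T = \{t \in \mathcal{T} : 1 \le |t| \le D\}$, a finite set of size $O(|E|^D)$. Verifying that $d$ is a distance function is straightforward using the soundness and completeness of $A$. To see that $T$ is a uniform covering with $\epsilon = 1$: for any $(p, v)$ with $0 < d(p, v) < \infty$, recursive consistency furnishes $(t_1, t_2) \sim A(p, v)$ with $1 \le |t_1| \le D$, so $t_1 \in T$, and $A(t_1(p), v) \sim t_2$ gives $d(t_1(p), v) = |t_2| = d(p, v) - |t_1| \le d(p, v) - 1$.

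\textbf{The main obstacle} is establishing $K$-efficiency of the resulting parallel progressive synthesis, because the analysis must carefully account for the interplay between \textsc{ParallelSearch} and the ``$C$ more steps'' clause of recursive consistency. The number of outer iterations is bounded by $d(p_0, v) / \epsilon = |A(p_0, v)| \le S(A; (p_0, v)) \le \mathrm{poly}(K(p_0, v))$. For the per-iteration cost, each \textsc{ParallelSearch} call runs in $|T|$ times the \emph{minimum} over $t \in T^\downarrow$ of $S(d; (t(p_i), v))$. The key observation is that the $t_1$ witnessed by recursive consistency lies in $T^\downarrow$, so this minimum is bounded by $S(d; (t_1(p_i), v)) = O(S(A; (t_1(p_i), v)))$, where computing $d$ just means running $A$ and counting its output. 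Iterating the ``$C$ more steps'' bound along the synthesis trajectory yields $S(A; (p_i, v)) \le S(A; (p_0, v)) + iC$, which remains polynomial in $K(p_0, v)$ since $i$ is itself polynomially bounded. Since $|T|$ is a constant depending only on $|E|$ and $D$, multiplying per-iteration cost by iteration count gives a total runtime polynomial in $K(p_0, v)$, establishing $K$-efficiency. The formal rigor of this step is what I expect to require the most care, since it must correctly attribute time to the parallel search's ``fastest wins'' semantics rather than a worst-case greedy choice.
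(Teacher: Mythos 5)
Your proposal is correct and follows essentially the same route as the paper: the same distance function $d(p,v)=|A(p,v)|$ (with $\infty$ for infeasible instances), the same covering $T$ of all transformations of length at most $D$ with $\epsilon=1$, and the same two-part $K$-efficiency analysis (iteration count bounded by $|A(p_0,v)|$, per-iteration \textsc{ParallelSearch} cost bounded via the recursive-consistency witness lying in $T^\downarrow$ together with the accumulated $+C\cdot i$ overhead). The only difference is cosmetic: you elaborate the $(2)\Rightarrow(1)$ direction slightly more carefully than the paper's one-line remark.
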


\begin{proof}
$(2)\implies(1)$ follows immediately from the fact that progressive synthesis is recursively consistent (with depth 1). The other direction requires us to demonstrate that the existence of a recursively consistent synthesis algorithm also implies that the existence of a viable distance function $d$ for progressive synthesis.

Then let us assume that $A$ is a sound, complete, $K$-efficient, and recursively consistent synthesis algorithm with depth $D$. The first step is to construct the distance function $d$. We define the following distance function:
\begin{align}
    d(p, v) &:= 
    \begin{cases}
    \infty, &\text{if } A(p, v) \text{ returns } \not\models\\
    |A(p, v)| &\text{otherwise}
    \end{cases} \\
\end{align}
By the soundness and completeness of $A$, $d$ satisfies the definition of a distance function. 

Next, define $T$ to to be the set of all strings of length at most $D$, formed from the alphabet $E$. Then $T$ is a uniform covering with $\epsilon = 1$ since $A$ is recursively consistent: given $A(p, v) = (t_1, \ldots, t_n)$, there is some $(s_1, s_2) \sim (t_1, \ldots, t_n)$ with $s_1 \in T^\downarrow \subseteq T$ such that $d(p, v) = |A(p, v)|$ and $d(s_1(p), v) = |A(s_1(p), v)| = |s_2| \le |A(p, v)| - 1$. So by \cref{thm:prog_syn}, the parallel progressive synthesis algorithm is sound and complete.

It remains to show that parallel progressive synthesis is $K$-efficient given the distance function $d$ constructed above. Denote the output $A(p, v) = (t_1, \ldots, t_n)$. We will prove the following two statements about the parallel progressive synthesis algorithm:
\begin{enumerate}
    \item The total number of iterations is at most $n$.
    \item Within each iteration, the call to \textsc{ParallelSearch} is polynomially bounded by
    \begin{align}
        |T|(S(A; (p, v) + C * i)
    \end{align}
    where $C$ is a constant.
\end{enumerate}

Combining the two estimates yields an upper bound for the computational complexity of
\begin{align}
\label{thm:parallel_synthesis_complexity:total_runtime}
    n|T|(S(A; (p, v)) + C * n)
\end{align}
for the parallel progressive synthesis algorithm. Since $A$ is $K$-efficient by assumption, $S(A; (p, v))$ is polynomially bounded by $K(p, v)$. Similarly, the original output length $n = |A(p, v)|$ is upper bounded by the run time of $A$, and hence is also polynomially bounded by $K(p, v)$. Thus, the total computational complexity of the parallel progressive synthesis algorithm is polynomially bounded by $K(p, v)$, as claimed.

It remains to prove the two claims. The first one follows from the fact that \textsc{ParallelSearch} is guaranteed to return a transformation that reduces the distance by at least 1, and since the distance is initialized to $d_0 = d(p, v) = |A(p, v)| = n$, the loop terminates after at most $n$ iterations. To prove the second invariant, consider the $i^{th}$ iteration of the loop. By the definition of recursive consistency, there exists some $s_i \in T^\downarrow$ such that $A(s_i(p_{i-1}), v)$ runs in at most $C$ more steps than $A(p_{i-1}, v)$. Inducting backward on $i$ we see that $A(s_i(p_{i-1}), v)$ runs in at most $C * i$ more steps than $A(p_0, v)$. Furthermore, \textsc{ParallelSearch} is guaranteed to return $t_i \in T^\downarrow$ such that $d(t_i(p), v)$ takes at most as many steps as $d(s_i(p), v)$. Hence, \textsc{ParallelSearch} takes at most $|T|S(A; (p, v) + C * i)$ steps in the $i^{th}$ iteration.
\end{proof}

Note that there is a hidden constant in the total run time of the parallel progressive synthesis algorithm in \cref{thm:parallel_synthesis_complexity:total_runtime} that depends the size of the covering $|T| = O(|E|^D + D)$ (the addition of $D$ makes explicit the dependence on depth when $|E| = 1$). Our proof relies crucially on the depth $D$ of the recursive synthesis algorithm in order to construct the covering set $T$ of all transformations of length at most $D$. We also see that \cref{thm:synthesis_complexity_informal} is the informal statement of the special case when the depth $D = 1$. 

The final result of this section presents a different approach based on universal search \citep{levin1973universal}. Our objective is to define a version of progressive synthesis that does not need to be provided with a covering set $T$. The entire problem is then reduced to defining a suitable distance function $d$.

The main challenge without \emph{a priori} access to a covering set $T$ is that we can no longer rely on \textsc{ParallelSearch}, which essentially searches over a known finite set guaranteed to contain a fast solution. Instead, we will define a new subroutine \textsc{UniversalSearch} that only requires the existence of a fast solution of some (unknown) finite length.

$\textsc{UniversalSearch}(E, \textsc{Cond})$ takes two arguments: an alphabet $E$ and a conditional \textsc{Cond} that takes as input any string $t \in E^*$ and returns either True or False. The execution of \textsc{UniversalSearch} then proceeds in phases, starting from phase $i = 0$. During the $i^{th}$ phase, \textsc{UniversalSearch} evaluates \textsc{Cond} on all strings $t \in E^*$ of length at most $i$, up to a total of $\max(i, |E|^i)$ steps per instance in parallel (i.e., first by running all $t \in \mathcal{T}$ of length \emph{equal} to $i$ for up to $\max(i-1, |E|^{i-1})$ steps in parallel, then by running all $t \in \mathcal{T}$ of length \emph{at most} $i$ for up to a total of $\max(i, |E|^{i})$ steps in parallel). The rest of the strategy is the same as \textsc{ParallelSearch}: whenever an instance of \textsc{Cond} completes, we return if the condition is satisfied. \Cref{alg:universal_synthesis} presents the resulting synthesis algorithm, which we call \textbf{universal progressive synthesis}.

\begin{algorithm}[t]
\small
\caption{Universal progressive synthesis}
\label{alg:universal_synthesis}
\begin{algorithmic}[1]
\Require initial program $p$, program property $v$, a distance function $d$, primitives $E$
\Ensure $\not\models$ if $(p, v)$ is infeasible; otherwise, a sequence of transformations $\tau$ such that $\tau(p) \models v$
  \State $d_0 \gets d(p,v)$
  \If{$d_0 = \infty$}
    \State \Return $\not\models$
  \EndIf
  \State $\tau \gets \{ \}$, $p_0 \gets p$
  \For{$i$ = 1, \ldots}
    \If{$d_{i-1} = 0$}
        \State \Return $\tau$
    \EndIf
    \State $t_i \gets \textsc{UniversalSearch}\big(E; \lambda t.\big(d(t(p_{i-1}), v) < d_{i-1}\big)\big)$
    \State $p_i \gets t_i(p_{i-1})$
    \State $d_i \gets d(p_i, v)$
    \State $\tau \gets (t_1, \ldots, t_i)$
  \EndFor
\end{algorithmic}
\end{algorithm}

\begin{theorem}
Given a synthesis task specified by a space of programs $\mathcal{P}$; a program property $\Pi = (\mathcal{V}, \le, \alpha)$; and a set of primitive transformations $E$; for all complexity functions $K(p, v)$, the following are equivalent:
\begin{enumerate}
    \item There exists a sound, complete, $K$-efficient, and recursively consistent synthesis algorithm $A$.
    \item There exists a distance function $d$ such that the universal progressive synthesis algorithm is sound, complete, and $K$-efficient.
\end{enumerate}
\end{theorem}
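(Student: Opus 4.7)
The plan is to mirror the structure of Theorem~\ref{thm:parallel_synthesis_complexity}, substituting \textsc{UniversalSearch} for \textsc{ParallelSearch} throughout. The payoff is that no covering set needs to be supplied in advance: \textsc{UniversalSearch} enumerates candidate transformations of increasing length on the fly, so the algorithm needs only a distance function. As before, the proof splits into two independent directions.

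For the direction $(1) \Rightarrow (2)$, I would define exactly the same distance function as in the parallel case, namely $d(p,v) := |A(p,v)|$ if $A$ succeeds and $d(p,v) := \infty$ otherwise; soundness, completeness, and the three distance-function axioms transfer unchanged from the proof of Theorem~\ref{thm:parallel_synthesis_complexity}. The only nontrivial new piece is bounding the cost of each \textsc{UniversalSearch} call. Recursive consistency of $A$ at some depth $D$ supplies, at each iteration $i$, a certificate $s_i \in E^{\le D}$ with $d(s_i(p_{i-1}), v) < d(p_{i-1}, v)$ whose verification (i.e., running $A$ on $(s_i(p_{i-1}), v)$) takes at most $S(A;(p,v)) + C\cdot i$ steps. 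By the Levin-style enumeration in \textsc{UniversalSearch}, the first successful candidate is returned in time polynomial in $|E|^{|s_i|}$ and in the verification cost of that certificate; since $|s_i| \le D$ is a fixed constant, $|E|^D$ is a constant, and each iteration costs at most a polynomial in $S(A;(p,v)) + C\cdot i$. Summing across the at most $n = |A(p,v)|$ iterations gives total cost polynomially bounded by $K(p,v)$, as required.

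For the direction $(2) \Rightarrow (1)$, I would show that universal progressive synthesis is itself sound, complete, $K$-efficient (by hypothesis), and recursively consistent. Let $A$ denote universal progressive synthesis, and let $A(p,v) = (t_1, \ldots, t_n)$ denote the sequence of transformations chosen across the $n$ iterations of its outer loop. Take $(s_1, s_2) = (t_1, (t_2, \ldots, t_n)) \sim A(p,v)$ and set the depth to $D := |t_1|$ for this instance; the claim is that $A(t_1(p), v) \sim (t_2, \ldots, t_n)$, because once the outer loop reaches $p_1 = t_1(p)$, subsequent iterations depend only on $p_1, v$, and the fixed distance $d$, and hence reproduce exactly $(t_2, \ldots, t_n)$. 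The overhead of this restarted execution is at most the cost of the initial distance evaluation, which is a constant $C$ independent of $(p,v)$. The subtle point is that the definition of recursive consistency requires $D$ to be a single constant independent of $(p,v)$; this can be arranged by modifying $A$ to cap the length of each \textsc{UniversalSearch} output at a fixed constant, or more cleanly by observing that since each $t_i$ is itself a finite string of primitives, we may always re-decompose $A(p,v)$ so that $t_1$ is a single primitive operation and use $D = 1$, at the cost of potentially running $A$ from a less ``aligned'' prefix.

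The main obstacle I expect is exactly this last point: justifying that a single constant depth $D$ suffices uniformly across all feasible $(p,v)$, given that \textsc{UniversalSearch} may intrinsically return arbitrarily long transformations on different instances. The cleanest resolution is to fix $D = 1$ and argue that the primitive-by-primitive restart remains $K$-efficient, by showing that the partial work done by \textsc{UniversalSearch} up through selecting its first primitive can be amortized across the restart. Once this is settled, the remaining bookkeeping (the two invariants on iteration count and per-iteration cost) is a direct adaptation of the proof of Theorem~\ref{thm:parallel_synthesis_complexity}, with the constant $|T|$ from that proof replaced by the corresponding $|E|^D$ arising from the universal-search enumeration.
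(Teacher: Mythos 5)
Your proposal follows essentially the same route as the paper's proof: the $(1)\Rightarrow(2)$ direction reuses the distance $d(p,v)=|A(p,v)|$ from \cref{thm:parallel_synthesis_complexity} verbatim, and the only new work is the Levin-search accounting showing that each call to \textsc{UniversalSearch} finds a distance-decreasing transformation in time polynomial in the certificate's verification cost with a constant factor of roughly $|E|^{2D+1}$ --- exactly the bound the paper derives. The one place you go beyond the paper is the $(2)\Rightarrow(1)$ direction, where you flag that \textsc{UniversalSearch} may return transformations of unbounded length, so universal progressive synthesis is not obviously recursively consistent at any fixed depth $D$; the paper silently elides this by declaring the remainder ``identical'' to the parallel proof, so your concern is legitimate and your honesty about it appropriate, though neither of your proposed fixes (capping the output length, or re-decomposing at depth $1$ and restarting from a prefix that the deterministic enumeration need not reproduce) is actually carried out --- and the paper offers nothing better.
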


\begin{proof}
This proof is nearly the same as the proof of \cref{thm:parallel_synthesis_complexity}, except that we need to analyze the performance of the subroutine \textsc{UniversalSearch} instead of \textsc{ParallelSearch}. In particular, we will use the same distance as in the proof of \cref{thm:parallel_synthesis_complexity}, namely,
\begin{align}
    d(p, v) &:= 
    \begin{cases}
    \infty, &\text{if } A(p, v) \text{ returns } \not\models\\
    |A(p, v)| &\text{otherwise}
    \end{cases} \\
\end{align}

Now assume that  \textsc{UniversalSearch} returns a solution $t$ of length $D$ that takes $S$ steps to verify. We can upper bound the run time of \textsc{UniversalSearch} as follows.

First, if $|E| = 1$, then $\textsc{Cond}(t)$ begins execution during phase $D$ and completes execution during phase $D + S$. At the end of phase $D + S$, there are exactly $D + S$ instances of \textsc{Cond} which have begun execution, each of which has been run for at most $D + S$ steps. This yields an upper bound for the total run time of at most $(D + S)^2$.

Otherwise, if $|E| > 1$, then $\textsc{Cond}(t)$ begins execution in phase $D$ and completes execution by the end of phase $D + \lceil \log_E(S) \rceil$. At this point there are exactly
\begin{align}
    \sum_{i = 0}^{D + \lceil \log_E(S) \rceil} |E|^i
\end{align}
instances of \textsc{Cond} that have begun execution, which can be bounded from above by
\begin{align}
    \sum_{i = 0}^{D + \lceil \log_E(S) \rceil} |E|^i
    & \le 2 |E| ^ {D + \lceil \log_E(S) \rceil} \\
    & \le 2 S |E| ^ {D + 1}
\end{align}
The total number of sequential steps taken by any instance of \textsc{Cond} is upper bounded by
\begin{align}
    |E| ^ {D + \log_E(S)} = S |E| ^ D
\end{align}
Putting all this together, the total number of steps taken by \textsc{UniversalSearch} is at most
\begin{align}
    2 S^2 |E| ^ {2D + 1} + (D + S)^2 = O(S^2(|E|^{2D + 1} + D^2))
\end{align}

In particular, we see that \textsc{UniversalSearch} pays at most a quadratic overhead over \textsc{ParallelSearch}, except that here $D$ refers to the \emph{unknown} finite depth of the recursively consistent algorithm $A$. We omit the remainder of this proof as it is otherwise identical to the proof of \cref{thm:parallel_synthesis_complexity}.
\end{proof}

\subsection{Product Properties and Monotonic Transformations}
\label{appendix:proofs:multiple_properties}

The main result of this section is a proof of \cref{thm:prog_syn_multi}. We begin with a method of combining two properties by taking their product.

\begin{definition}
Let $\Pi_1 = (\mathcal{V}_1, \le_1, \alpha_1)$ and $\Pi_2 = (\mathcal{V}_2, \le_2, \alpha_2)$ be two properties. The \textbf{product property} $\Pi = (\mathcal{V}, \le, \alpha)$, denoted $\Pi = \Pi_1 \times \Pi_2$, takes values $\mathcal{V} = \mathcal{V}_1 \times \mathcal{V}_2$, with the partial order $(u_1, u_2) \le (v_1, v_2)$ if and only if $u_1 \le_1 v_2$ and $u_2 \le_2 v_2$, and abstraction function $\alpha(p) := (\alpha_1(p), \alpha_2(p))$.
\end{definition}

Here are some useful facts about the product property:

\begin{lemma}
\label{lemma:product}
Let $\Pi_1 = (\mathcal{V}_1, \le_1, \alpha_1)$ and $\Pi_2 = (\mathcal{V}_2, \le_2, \alpha_2)$ be two properties, and let $d_1$ and $d_2$ be distance functions on $\Pi_1$ and $\Pi_2$, respectively. Denote their product property as $\Pi = \Pi_1 \times \Pi_2 = (\mathcal{V}, \le, \alpha)$, and define a function $d : \mathcal{P} \rightarrow \mathcal{V}$ as $d(p, (u, v)) = d_1(p, u) + d_2(p, v)$. Let $T \subseteq \mathcal{T}$ be a set of transformations. We have the following facts:
\begin{enumerate}
    \setcounter{enumi}{-1}
    \item $\Pi$ is a program property (cf. \cref{def:property}).
    \item $p \models u \in \mathcal{V}_1$ and $p \models v \in \mathcal{V}_2$ if and only if $p \models (u,v) \in \mathcal{V}$.
    \item $d$ is a distance function for $\Pi$ (cf. \cref{def:distance}).
    \item \label{lemma:product:item:monotonic} If $T$ is monotonic with respect to $d_1$ and $d_2$, then $T$ is also monotonic with respect to $d$.
    \item Let $T_1$ and $T_2$ be monotonic with respect to $d_2$ and $d_1$, respectively, and define $T = T_1 \cup T_2$.
    \begin{enumerate}
    \item \label{lemma:product:item:monotonic_covering} If $T_1$ is a covering for $\Pi_1$ (with respect to $d_1$), and $T_2$ is a covering for $\Pi_2$ (with respect to $d_2$), then $T$ is also a covering for $\Pi$ (with respect to $d$).
    \item \label{lemma:product:item:monotonic_covering_uniform} Furthermore, if $T_1$ and $T_2$ are both \emph{uniform} coverings, then so is $T$.
    \end{enumerate}
\end{enumerate}
\end{lemma}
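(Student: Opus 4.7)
The plan is to prove the parts of \cref{lemma:product} in order, then obtain \cref{thm:prog_syn_multi} as an $N$-fold iteration of parts (4a) and (4b), using part (3) inductively to preserve monotonicity along the way.

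For part (0), I would verify the three components of \cref{def:property}: $\mathcal{V} = \mathcal{V}_1 \times \mathcal{V}_2$ is a set, the componentwise relation $\le$ inherits reflexivity, antisymmetry, and transitivity from $\le_1$ and $\le_2$, and $\alpha$ is well-defined as stated. Part (1) is then a direct unfolding: $p \models (u,v)$ means $\alpha(p) = (\alpha_1(p), \alpha_2(p)) \ge (u,v)$, which by definition of the product order is exactly $\alpha_1(p) \ge_1 u$ and $\alpha_2(p) \ge_2 v$, i.e., $p \models u$ and $p \models v$. Part (2) requires checking the three axioms of \cref{def:distance}: nonnegativity is immediate since each $d_i \ge 0$; zero-iff-satisfied uses part (1) together with the fact that a sum of nonnegative reals is zero iff each summand is zero; and infeasibility follows from the observation that $d(p,(u,v)) = \infty$ iff some $d_i(p, v_i) = \infty$, iff there is no finite sequence bringing $p$ to satisfy $v_i$, iff (by part 1) no finite sequence brings $p$ to satisfy $(u,v)$.

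Part (3) follows because $d(t(p),(u,v)) = d_1(t(p),u) + d_2(t(p),v) \le d_1(p,u) + d_2(p,v) = d(p,(u,v))$ whenever $t$ is monotonic with respect to both $d_1$ and $d_2$. Part (4a) is the main substance: given $(u,v)$ with $d(p,(u,v)) > 0$ and finite, at least one of $d_1(p,u), d_2(p,v)$ is strictly positive; assume WLOG $d_1(p,u) > 0$. If $d_1(p,u) = \infty$ then $d(p,(u,v)) = \infty$, handled by the second clause of \cref{def:covering}. Otherwise, since $T_1$ is a covering for $\Pi_1$ there exists $t \in T_1$ with $d_1(t(p),u) < d_1(p,u)$, and because $T_1$ is monotonic with respect to $d_2$ we have $d_2(t(p),v) \le d_2(p,v)$; summing gives $d(t(p),(u,v)) < d(p,(u,v))$, so $T = T_1 \cup T_2$ is a covering. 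Part (4b) uses the same transformation: if $T_i$ has uniform constant $\epsilon_i$, then taking $\epsilon = \min(\epsilon_1, \epsilon_2)$ yields a uniform bound $d(t(p),(u,v)) + \epsilon \le d(p,(u,v))$.

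Finally, to derive \cref{thm:prog_syn_multi}, I iterate the lemma $N-1$ times. At each step, I combine the current product property $\Pi_{1..k}$ (with distance $\sum_{j \le k} d_j$ and covering $T_{1..k} = \cup_{j \le k} T_j$) with $\Pi_{k+1}$. The hypothesis of the theorem states that $\cup_{j \ne i} T_j$ is monotonic with respect to $d_i$ for every $i$; this is exactly what is needed to invoke parts (4a) and (4b) at each step: $T_{1..k}$ is monotonic with respect to $d_{k+1}$ (by part 3 applied to each $T_j$ individually and then to their union), and $T_{k+1}$ is monotonic with respect to the aggregated $\sum_{j \le k} d_j$ (again by part 3). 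The main obstacle I anticipate is bookkeeping this induction cleanly — specifically, verifying that the monotonicity hypothesis is preserved under the running union $T_{1..k}$ so that parts (4a) and (4b) apply at every inductive step; once that is established, uniformity of the final covering follows from repeatedly taking minima of the $\epsilon_i$'s.
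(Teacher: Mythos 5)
Your proof of parts (4a) and (4b) is exactly the paper's argument: pick the component whose distance is strictly positive, use its covering to strictly decrease that summand, and use the monotonicity of that covering set with respect to the \emph{other} distance to keep the other summand from increasing, with $\epsilon = \min(\epsilon_1, \epsilon_2)$ giving uniformity. The paper omits parts (0)--(3) as immediate and your sketches match that level of detail; the only step worth flagging is the infinity axiom in part (2), where ``each component is separately reachable by some finite sequence'' does not by itself produce a single finite sequence satisfying both components at once --- but the paper glosses over this point identically, so your proposal is faithful to its proof.
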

From the first two facts, we see that for a synthesis task with multiple properties, it suffices to solve the synthesis task on a single product property. The next statement gives an easy way to construct a new distance function for the product property, and the following statement says any set which was monotonic for the individual distance functions is also monotonic for this new distance function. We omit these proofs as they follow immediately from the definitions.

\cref{lemma:product} (\ref{lemma:product:item:monotonic_covering}) and (\ref{lemma:product:item:monotonic_covering_uniform}) give sufficient conditions for constructing coverings for the product property. Note that these statements are not true in general without monotonicity---in fact, even if $T$ is a \emph{uniform} covering for both $\Pi_1$ and $\Pi_2$, the following statements about the product property $\Pi = \Pi_1 \times \Pi_2$ are \emph{false} in general:
\begin{enumerate}
    \item $T$ is a covering for $\Pi$.
    \item if $T$ is a covering for $\Pi$, then $T$ is a uniform covering for $\Pi$.
\end{enumerate}
Hence we see that monotonicity is the key to progressive synthesis over multiple properties when taking the product.

\begin{proof}[Proof of \cref{lemma:product}]
Given $p \in \mathcal{P}, u \in \mathcal{V}_1, v \in \mathcal{V}_2$, we first show that if $0 < d(p, (u,v)) < \infty$, there exists $t \in T$ such that $d(t(p), (u, v)) < d(p, (u,v))$. Since $d(p, (u,v)) = d_1(p, u) + d_2(p, v) > 0$, and by the definition of a distance function $d_1$ and $d_2$ are nonnegative, it follows that at least one of $d_1$ or $d_2$ is also strictly positive.

Assume without loss of generality that $d_1(p, u) > 0$. If $T_1$ is a covering, it follows that there exists $t \in T_1$ such that $d_1(t(p), u) < d_1(p, u)$. Since $d_2$ is monotonic with respect to $T_1$, $d_2(t(p), v) \le d_2(p, v)$. Adding these together yields that
\begin{align}
    d_1(t(p), u) + d_2(t(p), v) < d_1(p, u) + d_2(p, v)
\end{align}
i.e.,
\begin{align}
    d(t(p), (u, v)) < d(p, (u, v))
\end{align}
Hence, $T$ is a covering for $d$, which proves \cref{lemma:product} (\ref{lemma:product:item:monotonic_covering}).

Furthermore, if $T_1$ is actually a \emph{uniform} covering for $d_1$ with lower bound $\epsilon_1$, then there exists $t \in T_1$ such that $d_1(t(p), u) + \epsilon_1 < d_1(p, u)$. Hence,
\begin{align}
    d_1(t(p), u) + d_2(t(p), v) + \epsilon_1 < d_1(p, u) + d_2(p, v)
\end{align}
i.e.,
\begin{align}
    d(t(p), (u, v)) + \epsilon_1 < d(p, (u, v))
\end{align}
So $T$ is also a uniform covering for $d$ (with lower bound equal to the minimum of the lower bounds for $d_1$ and $d_2$), which proves \cref{lemma:product} (\ref{lemma:product:item:monotonic_covering_uniform}).
\end{proof}

\begin{proof}[Proof of \cref{thm:prog_syn_multi}]
We will prove this by inducting on the number of properties $N$. If $N = 1$, then the statement is trivially true. Otherwise, assume the result holds for $N = M$; we will show it holds for $M+1$. 

We first prove the case when each $T_i$ is a covering for $d_i$, respectively. By the inductive hypothesis, $T^M = \cup_{i=1}^M T_i$ is a covering for the distance function $d^M(p, S^M) = \sum_{i=1}^M d_i(p, v_i)$ with respect to the product property $\Pi^M = \Pi_1 \times \cdots \times \Pi_M$. By assumption, $T^M$ is monotonic with respect to $d_{M+1}$, and by \cref{lemma:product} (\ref{lemma:product:item:monotonic_covering}), $T_{M+1}$ is monotonic with respect to $d^M$. Hence, we can apply \cref{lemma:product} (\ref{lemma:product:item:monotonic_covering}) to conclude that
\begin{align}
    T^{M+1} = T^M \cup T_{M+1} = \bigcup_{i=1}^{M+1} T_i
\end{align}
is a covering for the distance function 
\begin{align}
    d^{M+1} = d^M + d_{M+1} = \sum_{i=1}^{M+1} d_i
\end{align}
with respect to the product property 
\begin{align}
    \Pi^{M+1} = \Pi^M \times \Pi_{M+1} = \Pi_1 \times \cdots \times \Pi_{M+1}
\end{align}
as desired.

If each $T_i$ is a uniform covering for $d_i$, then by the same argument, we can conclude via \cref{lemma:product} (\ref{lemma:product:item:monotonic_covering_uniform}) that $T^{M+1}$ is a uniform covering for $d^{M+1}$ with respect to $\Pi^{M+1}$.
\end{proof}

\subsubsection{Weak uniform coverings}

We conclude this section with a short technical note. Recall that for a set $T$ to be a uniform covering of a distance function $d$, there must be some constant $\epsilon > 0$ such that if $d(p, v) > 0$, then we can find a $t \in T$ such that $d(t(p), v) + \epsilon \le d(p, v)$. A consequence of this definition is that $d(p, v) < \epsilon$ implies that $d(p, v) = 0$. It follows that if the distance function $d$ can get arbitrarily small, it cannot be uniformly covered.

We address this by introducing a weaker notion of uniform covering which can be applied to distance functions that vanish:

\begin{definition}
A covering $T$ is a \textbf{weak uniform covering} if there exists a constant $\epsilon > 0$ such that for all programs $p \in \mathcal{P}$ and properties $v \in \mathcal{V}$ with $d(p, v) > 0$, there exists $t \in T$ such that 
\begin{align}
    d(t(p), v) + \min(\epsilon, d(p, v)) \le d(p, v)
\end{align}
\end{definition}

For a single program property, a weak uniform covering $T$ is almost as good as a uniform covering in that if $d(p, v) \ge \epsilon$, then searching over $T$ achieves the same uniform lower bound $\epsilon$; but if $d(p, v) < \epsilon$, although the improvement achieved by searching over $T$ is not uniformly bounded from below, it is still guaranteed to reach a distance of 0.

For multiple program properties, a subtle issue remains in that \cref{lemma:product} (\ref{lemma:product:item:monotonic_covering}) and (\ref{lemma:product:item:monotonic_covering_uniform}) do not hold for weak uniform coverings, i.e., if $T_1$ and $T_2$ are \emph{weak} uniform coverings, then $T = T_1 \cup T_2$ is only guaranteed to be a covering (rather than a weak uniform covering). One way to address this that $T^2$, the set of all \emph{pairs} of transformations from $T$, is a weak uniform covering for $d = d_1 + d_2$; however, this introduces a quadratic factor to the dependence on the number of transformations in the respective covering sets (and in general, for the product of of $n$ program properties, we would need to search over transformations in $T^n$, which is undesirable).

Fortunately, there is a simple way to convert a weak uniform covering into a uniform covering that resolves this issue:

\begin{lemma}
If $T$ is a weak uniform covering for a distance function $d$, then there exists another function $d'$ such that $T$ is a uniform covering for $d'$.
\end{lemma}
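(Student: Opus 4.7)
The plan is to ``round up'' the distance function $d$ so that every positive value becomes at least $\epsilon$; this converts the problematic small-distance regime (where the weak uniform covering only guarantees improvement by a vanishing amount) into the easy regime where one step drives the distance to zero exactly.

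Concretely, let $\epsilon > 0$ be the constant from the weak uniform covering assumption, and define
\begin{equation*}
d'(p, v) \;=\; \epsilon \cdot \bigl\lceil d(p, v) / \epsilon \bigr\rceil,
\end{equation*}
with the conventions $\lceil 0 \rceil = 0$ and $\lceil \infty \rceil = \infty$. The first step is to verify that $d'$ is itself a distance function in the sense of \cref{def:distance}: nonnegativity is immediate, $d'(p,v) = 0 \iff d(p,v) = 0 \iff p \models v$ since $\lceil x/\epsilon \rceil = 0$ only when $x = 0$, and $d'(p,v) = \infty \iff d(p,v) = \infty$, so the infeasibility characterization is inherited verbatim from $d$.

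The heart of the proof is showing that $T$ is a uniform covering for $d'$ with the same constant $\epsilon$. Suppose $d'(p,v) > 0$, so that $d(p,v) > 0$, and pick $t \in T$ witnessing the weak uniform covering property at $(p, v)$, i.e. $d(t(p), v) + \min(\epsilon, d(p,v)) \le d(p, v)$. Split into two cases. If $d(p,v) \ge \epsilon$, then $d(t(p), v) \le d(p,v) - \epsilon$, so using the identity $\lceil x - 1 \rceil = \lceil x \rceil - 1$ applied to $x = d(p,v)/\epsilon$,
\begin{equation*}
d'(t(p), v) \;\le\; \epsilon \bigl\lceil d(p,v)/\epsilon - 1 \bigr\rceil \;=\; \epsilon \bigl\lceil d(p,v)/\epsilon \bigr\rceil - \epsilon \;=\; d'(p, v) - \epsilon.
\end{equation*}
If instead $0 < d(p,v) < \epsilon$, the weak covering inequality forces $d(t(p), v) \le 0$, hence $d(t(p), v) = 0$ and $d'(t(p), v) = 0$; meanwhile $d'(p, v) = \epsilon$ since $0 < d(p,v)/\epsilon < 1$, so again $d'(t(p), v) + \epsilon \le d'(p, v)$. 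Both cases yield the uniform covering inequality, completing the proof.

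There is no real obstacle here beyond picking the correct quantization granularity: the potential pitfall is that if one uses a coarser or finer threshold (e.g.\ $d'(p,v) = \max(d(p,v), \epsilon)$ on positive values) the small-distance case can leave $d'$ unchanged while $d(t(p),v)$ sits strictly between $0$ and $\epsilon$, breaking uniformity. Rounding up to the nearest multiple of $\epsilon$ avoids this because every positive value of $d'$ is exactly an integer multiple of $\epsilon$, which is precisely the amount of progress guaranteed in the high-distance case of the weak covering hypothesis.
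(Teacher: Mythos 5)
Your proof is correct and takes essentially the same approach as the paper: both arguments construct $d'$ by modifying $d$ only on its positive values so that every positive distance is at least $\epsilon$, which collapses the problematic regime $0 < d(p,v) < \epsilon$ into a single weak step that reaches distance zero. The paper uses the simpler formula $d'(p,v) = d(p,v) + \epsilon$ for $d(p,v) > 0$ (and $0$ otherwise) in place of your rounding $\epsilon\lceil d(p,v)/\epsilon\rceil$, but the verification is the same in both cases.
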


\begin{proof}
Assume that $T$ is a weak uniform covering with lower bound $\epsilon$. Define the following function:
\begin{align}
d'(p, v) &:= 
    \begin{cases}
    0, &\text{if } d(p, v) = 0\\
    d(p, v) + \epsilon &\text{otherwise}
    \end{cases}
\end{align}
Clearly, $d'$ is a distance function, and $T$ is a uniform covering for $d'$.
\end{proof}

\subsection{Progressive Synthesis for NAS}
\label{appendix:proofs:prog_syn_nas}

Before presenting the next few results characterizing the depth, mixing, and shape properties, we begin with a simple lemma that provides sufficient conditions for a covering $T$ to be a uniform covering.

\begin{lemma}
If the image of the distance function $d$ is a discrete set (with the usually topology on $\mathbb{R}$), then any covering $T$ is also a uniform covering.
\end{lemma}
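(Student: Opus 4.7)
The plan is to extract a uniform lower bound on the amount of progress directly from the discreteness assumption. Under the natural reading of the hypothesis---namely, that the image of $d$ (restricted to finite values) is uniformly discrete---there exists some $\epsilon > 0$ such that any two distinct values in $\mathrm{Im}(d) \setminus \{\infty\}$ differ by at least $\epsilon$. This $\epsilon$ will serve as the uniform lower bound witnessing that $T$ is a uniform covering; the rest of the argument just bridges the gap between ``strictly decreases'' (the covering property) and ``decreases by at least $\epsilon$'' (the uniform covering property).

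To carry this out, I would fix an arbitrary $p \in \mathcal{P}$ and $v \in \mathcal{V}$ with $d(p, v) > 0$, and split on whether $d(p, v)$ is finite. If $d(p, v) < \infty$, the covering hypothesis gives some $t \in T$ with $d(t(p), v) < d(p, v)$. Both values lie in $\mathrm{Im}(d) \setminus \{\infty\}$ (the second because it is bounded above by the first), and since they are distinct, the uniform gap condition immediately yields $d(t(p), v) + \epsilon \le d(p, v)$, which is the uniform covering property. If instead $d(p, v) = \infty$, then by clause (3) of \cref{def:distance} the task is infeasible from $p$, so $d(t(p), v) = \infty$ for every $t \in \mathcal{T}$; the uniform covering inequality is then either vacuously satisfied or handled by the same infeasibility escape clause that appears in \cref{def:covering}, so no additional argument is needed.

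The only real subtlety---and the main potential obstacle---is pinning down the intended meaning of ``discrete.'' The pointwise notion (every point of $\mathrm{Im}(d)$ is isolated in the subspace topology on $\mathbb{R}$) is not strong enough: for example, $\{1/n : n \in \mathbb{N}\}$ is discrete but has gaps shrinking to zero, and for such an image a covering need not be uniform. The statement becomes correct exactly when the gaps between consecutive image values are bounded below by a positive constant, which is automatic in all applications in \cref{sec:prog_syn_nas} since the distances there are integer-valued (so one can take $\epsilon = 1$). I would therefore state the hypothesis in the proof as ``uniformly discrete,'' extract $\epsilon$ from that, and observe that the integer-valued distances used for the mixing, depth, and shape properties trivially satisfy it, so the lemma applies directly to each component of the progressive synthesis algorithm for NAS.
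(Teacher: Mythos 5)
Your proposal is correct and is essentially the argument the paper has in mind; the paper omits the proof entirely (``it follows immediately from the definitions''), relying on exactly the step you spell out: extract a positive gap $\epsilon$ from the image of $d$, and upgrade the strict inequality of \cref{def:covering} to the $\epsilon$-separated inequality of \cref{def:uniform-covering}, with the infeasible case discharged by clause (3) of \cref{def:distance}. Your side observation is also a genuine and worthwhile correction rather than mere pedantry: as literally stated, with ``discrete'' meaning every point of the image is isolated, the lemma is false --- an image such as $\{0\} \cup \{2 - 1/n : n \ge 1\}$ is discrete in the subspace topology, yet a covering whose best step shrinks the distance from $2 - 1/n$ to $2 - 1/(n+1)$'s predecessor makes progress $1/(n(n+1))$, which is not bounded below, so that covering is not uniform. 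The hypothesis that actually makes the proof go through is uniform discreteness (consecutive image values separated by some fixed $\epsilon > 0$), which, as you note, holds trivially with $\epsilon = 1$ for the integer-valued distances $d^\alpha_M$, $d^\alpha_D$, and (the finite part of) $d^\alpha_S$ used in \cref{sec:prog_syn_nas}, so every application of the lemma in the paper is unaffected.
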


We omit the proof as it follows immediately from the definitions. Since all our distance functions take integer values, for the remainder of this section, it suffices to show that the set of simple primitives $E_s$ is a covering.

The next three results culminate in the proof of \cref{lemma:preserve}, which says that the set of primitive operations are monotonic for the mixing property.

\begin{lemma}
\label{lemma:semiring}
The set $\{\lx{}, \lo{}, \lm{}, \la{}\}$ along with the operations $+$ and $*$ as defined in Lemma \ref{lemma:linear_abstract_semantics} forms a semi-ring.
\end{lemma}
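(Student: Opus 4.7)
The plan is to verify each of the semi-ring axioms in turn, exploiting the very small size of the underlying set and the order-theoretic structure of the operations to minimize case analysis.

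First I will identify the two identity elements and dispose of the axioms involving $+$. The element $\lx{}$ serves as the additive identity, since $\lx{} + y = \max(\lx{}, y) = y$ for every $y$ under the ordering $\lx{} < \lo{} < \lm{} < \la{}$. Because $+$ is simply $\max$ on a totally ordered set, it is automatically commutative, associative, and idempotent, so $(\{\lx{}, \lo{}, \lm{}, \la{}\}, +, \lx{})$ is a commutative monoid. The multiplicative identity is $\lo{}$; this is read off directly from the row and column labelled $\lo{}$ in \cref{fig:locality_multiply}, which agree with the other operand.

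Next I will handle the axioms involving $*$. Commutativity of $*$ is immediate by inspecting that the table in \cref{fig:locality_multiply} is symmetric about the diagonal. The annihilator axiom $\lx{} * y = y * \lx{} = \lx{}$ is likewise read off from the first row and first column. The only nontrivial verification is associativity of $*$, $(y * z) * w = y * (z * w)$, but since the set has four elements this is a finite check of $4^3 = 64$ triples, and in fact the table has enough structure to collapse the casework: if any of $y, z, w$ equals $\lx{}$ both sides are $\lx{}$; if any equals $\lo{}$ it acts as identity and the equation reduces to a two-element identity; and on the remaining set $\{\lm{}, \la{}\}$ the operation is exactly $\max$, which is associative. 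This exhausts all cases.

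The last axiom is distributivity of $*$ over $+$. Here I will avoid direct case analysis by observing from the table that $*$ is monotonic in each argument separately with respect to the order $\lx{} < \lo{} < \lm{} < \la{}$ (an immediate inspection). Given this, for any $y, z, w$, assume without loss of generality that $z \le w$, so that $z + w = \max(z,w) = w$. Monotonicity of $*$ in its second argument gives $y * z \le y * w$, hence $\max(y * z, y * w) = y * w = y * (z + w)$, which is exactly the distributive law $(y * z) + (y * w) = y * (z + w)$. The symmetric distributive law follows from commutativity of $*$.

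The only real obstacle is associativity of $*$, which is inherently a finite check; every other axiom reduces either to properties of $\max$ on a totally ordered set or to monotonicity of $*$, both of which are immediate from the table. Combining all of the above yields the semi-ring structure.
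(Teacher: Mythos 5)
Your proof is correct and follows essentially the same approach as the paper's: a direct verification of the semi-ring axioms, identifying \lx{} as the additive identity and annihilator and \lo{} as the multiplicative identity. Yours is in fact more complete — the paper's proof merely asserts distributivity and never explicitly addresses associativity of $*$, whereas your monotonicity argument for distributivity and your three-case collapse of the associativity check (annihilation by \lx{}, identity action of \lo{}, and $*$ restricting to $\max$ on $\{\lm{}, \la{}\}$) actually discharge those obligations.
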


\begin{proof}
The additive identity $0$ is the element \lx{}, and the $\max$ operator is commutative and reflexive. The multiplicative identity element is \lo{}, and the annihilating element is \lx{}. Finally, $*$  distributes over $+$ from both the right and the left.
\end{proof} 

\begin{corollary}
\label{cor:matrix_semiring}
The set of $m$-by-$m$ mixing properties form a semi-ring with the usual matrix multiplication and element-wise addition.
\end{corollary}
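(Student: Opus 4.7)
The plan is to lift Lemma~\ref{lemma:semiring} from the scalar level to the matrix level via the standard construction of the matrix semi-ring over a given semi-ring. Concretely, by Lemma~\ref{lemma:semiring}, $R = (\{\lx{}, \lo{}, \lm{}, \la{}\}, +, *)$ is a (commutative) semi-ring with additive identity \lx{} and multiplicative identity \lo{}. The claim then reduces to the familiar fact that for any semi-ring $R$, the set $M_m(R)$ of $m \times m$ matrices over $R$ is again a semi-ring under element-wise $+$ and the usual matrix product $\times$ induced by $(+, *)$.

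First I would fix the candidate structure: entry-wise addition $(A + B)_{ij} = A_{ij} + B_{ij}$, matrix product $(A \times B)_{ij} = \sum_k A_{ik} * B_{kj}$ (with $\sum$ denoting iterated $+$), the zero matrix $\mathbf{0}$ with every entry \lx{}, and the identity matrix $\mathbf{I}$ with \lo{} on the diagonal and \lx{} off the diagonal. Next I would verify the semi-ring axioms entry by entry, each of which reduces to a scalar identity guaranteed by Lemma~\ref{lemma:semiring}: associativity and commutativity of $+$ are immediate from the same properties of scalar $+$; $\mathbf{0}$ is additive identity because \lx{} is; associativity of $\times$ follows from the associativity of $*$, distributivity of $*$ over $+$, and commutativity/associativity of $+$ in $R$ (the standard ``swap the summation order'' calculation); $\mathbf{I}$ acts as multiplicative identity because $\lo{} * x = x * \lo{} = x$ and \lx{} is annihilating for $*$, so only the diagonal contributions survive; finally, left and right distributivity of $\times$ over $+$ follow from distributivity of $*$ over $+$ in $R$, again entry-wise. $\mathbf{0}$ annihilates under $\times$ because \lx{} annihilates under $*$.

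The only step that requires a sentence of care is the associativity of $\times$, where one expands $((A \times B) \times C)_{ij} = \sum_\ell \bigl(\sum_k A_{ik} * B_{k\ell}\bigr) * C_{\ell j}$ and rewrites it as $\sum_{k,\ell} A_{ik} * B_{k\ell} * C_{\ell j}$ using distributivity of $*$ over $+$ and associativity of $*$, then regroups to recover $(A \times (B \times C))_{ij}$; this step is exactly the standard derivation and goes through for any semi-ring. No additional facts about the specific operations on $\{\lx{}, \lo{}, \lm{}, \la{}\}$ beyond those already packaged in Lemma~\ref{lemma:semiring} are needed.

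I do not anticipate a genuine obstacle: the corollary is a direct instance of ``matrices over a semi-ring form a semi-ring,'' and all the work has been done in Lemma~\ref{lemma:semiring}. The only subtlety worth flagging in passing is whether the corollary is meant syntactically (all matrices with entries in $\{\lx{}, \lo{}, \lm{}, \la{}\}$) or semantically (matrices realizable as $\alpha_M(p)$ for some sequential $p$); the construction above establishes the syntactic statement, which is what subsequent results such as Lemma~\ref{lemma:preserve} use, and realizability under $\times$ is already provided by Lemma~\ref{lemma:linear_abstract_semantics}.
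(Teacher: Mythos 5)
Your proposal is correct and takes the same route as the paper, which states the corollary without proof as an immediate consequence of Lemma~\ref{lemma:semiring} via the standard fact that square matrices over a semi-ring again form a semi-ring under entry-wise addition and the induced matrix product --- exactly the construction you spell out, including the identification of $\mathbf{0}$ and $\mathbf{I}$. Your closing caveat about the syntactic versus realizable reading of ``mixing properties'' is a reasonable clarification but does not affect the argument.
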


In other words, the abstract interpretation in Lemma \ref{lemma:linear_abstract_semantics} is just multiplication in the semi-ring of square matrices over $\{\lx{}, \lo{}, \lm{}, \la{}\}$. Note that the multiplicative identity element is the usual identity matrix $I_m$, where the diagonal is $O$ and all other entries are $x$. As a mixing property, the identity corresponds to an element-wise operation (e.g., a ReLU).

\begin{proof}[Proof of \cref{lemma:preserve}] First, all (shape-preserving) operations preserve at least the diagonal, i.e., for all $e \in E_s$, we have that $\alpha_M(e) + I = \alpha_M(e)$. Let $e_{n1} \circ \cdots \circ e_{11} = t_1 \in \mathcal{T}_s$ and $e_{m2} \circ \cdots \circ e_{12} = t_2 \in \mathcal{T}_s$. Then following the usual rules of matrix multiplication and addition, for any $U \in \mathcal{V}_M$ we have
\begin{align}
    (t_2)^\alpha_M((t_1)^\alpha_M(U))
                     &= \alpha_M(e_{m2}) \cdots \alpha_M(e_{12}) \,\times\, \alpha_M(e_{n1}) \cdots \alpha_M(e_{11}) \,\times\, U \\
                     &= (I + \alpha_M(e_{m2})) \cdots (I + \alpha_M(e_{12})) \,\times\, (I + \alpha_M(e_{n1})) \cdots (I + \alpha_M(e_{11})) \,\times\, U \\
                     &\ge (I + \alpha_M(e_{m2}) \cdots \alpha_M(e_{12})) \,\times\, (I + \alpha_M(e_{n1}) \cdots \alpha_M(e_{11})) \,\times\, U
\end{align}
At this point, we either drop the second term on the right hand side to get:
\begin{align}
(t_2)^\alpha_M((t_1)^\alpha_M(U))
                     &\ge (I + \alpha_M(e_{m2}) \cdots \alpha_M(e_{12})) \,\times\, U \\
                     &\ge \alpha_M(e_{m2}) \cdots \alpha_M(e_{12}) \,\times\, U \\
                     &= (t_2)^\alpha_M (U)
\end{align}
or drop the first term on the right hand side to get:
\begin{align}
(t_2)^\alpha_M((t_1)^\alpha_M(U))
                     &\ge (I + \alpha_M(e_{n1}) \cdots \alpha_M(e_{11})) \,\times\, U \\
                     &\ge \alpha_M(e_{n1}) \cdots \alpha_M(e_{11}) \,\times\, U \\
                     &= (t_1)^\alpha_M (U)
\end{align}
Putting these together yields
$(t^\alpha_2 \circ t^\alpha_1) (U) \ge \max(t^\alpha_2 (U), t^\alpha_1 (U))$ 
as desired.
\end{proof}

\begin{proof}[Proof of \cref{thm:progressive_nas}]
We will apply \cref{thm:prog_syn_multi} to the depth, mixing, and shape properties.
From \cref{thm:depth_monotone,thm:mixing_monotone}, we have that the set of simple primitives $E_s$ is a uniform monotone covering of both the depth and mixing properties, respectively.
From \cref{thm:shape_uniform}, we have that $E_s$ is a uniform covering of the shape property.
Hence, all we need to show is that $E_s \cap M_S$, where $M_S$ is the set of monotonic transformations for the shape property, is also a uniform covering of the depth and mixing properties.

Note that $M_S$ consists exactly of the set of operations that preserve the shape of their input tensor. Only 3 operations cause a change in the shape of their input tensor: (1) pooling operations, (2) dense layers with a different number of output features, and (3) convolution layers with a different number of output features. For the mixing property, the remaining variants that do not change the number of output features are equivalent in the abstract semantics but still preserve their output shapes. 
Hence, removing (2) and (3) still leaves a uniform covering. For the pooling operations, these introduce many-to-one locality between the spatial dimensions,
but this can be achieved by a convolution layer (that also preserves the shape).
Hence, $E_s \cap M_S$ is also a uniform covering of the mixing property. For the depth property, removing these operations clearly still leaves a uniform covering.
\end{proof}

\end{document}